\DeclareMathOperator*{\argmin}{\mathrm{argmin}}
\DeclareMathOperator*{\argmax}{\mathrm{argmax}}
\newtheorem{thm}{Theorem}
\newtheorem{lemma}{Lemma}
\definecolor{blue-violet}{rgb}{0.54, 0.17, 0.89}
\definecolor{mygreen}{rgb}{0.0, 0.5, 0.0}
\definecolor{awesome}{rgb}{1.0, 0.13, 0.32}
\definecolor{bostonuniversityred}{rgb}{0.8, 0.0, 0.0}
\newcounter{ToDo}
\newcounter{guocomm}
\definecolor{purple}{HTML}{4C2188}
\newcolumntype{p}{>{\columncolor{purple}}c}
\definecolor{blush}{HTML}{FF8AD8}
\newcolumntype{b}{>{\columncolor{blush}}c}
\definecolor{celadon}{rgb}{0.67, 0.88, 0.69}
\newcolumntype{g}{>{\columncolor{celadon}}c}
\newcommand{\tikzmark}[1]{\tikz[overlay,remember picture] \node (#1) {};}
\newcommand{\DrawBox}[4][]{%
    \tikz[overlay,remember picture]{%
        \coordinate (TopLeft)     at ($(#3)+(-0.2em,0.9em)$);
        \coordinate (BottomRight) at ($(#4)+(0.2em,-0.3em)$);
        \path (TopLeft); \pgfgetlastxy{\XCoord}{\IgnoreCoord};
        \path (BottomRight); \pgfgetlastxy{\IgnoreCoord}{\YCoord};
        \coordinate (LabelPoint) at ($(\XCoord,\YCoord)!0.5!(BottomRight)$);
        \draw [#2,#1, fill=#2] (TopLeft) rectangle (BottomRight);
    }
}
\journal{Computer Vision and Image Understanding}
\begin{document}

\begin{frontmatter}

\title{Segmentation of Subspaces in Sequential Data}

\author[csu,csiro]{Stephen Tierney}
\ead{stierney@csu.edu.au}

\author[csiro]{Yi Guo}
\ead{yi.guo@csiro.au}

\author[csu]{Junbin Gao}
\ead{jbgao@csu.edu.au}

\address[csu]{School of Computing and Mathematics, Charles Sturt University, Bathurst, NSW 2795, Australia}
\address[csiro]{Digital Productivity Flagship, CSIRO, North Ryde, NSW 2113, Australia}

\begin{abstract}
We propose Ordered Subspace Clustering (OSC) to segment data drawn from a sequentially ordered union of subspaces. Similar to Sparse Subspace Clustering (SSC) we formulate the problem as one of finding a sparse representation but include an additional penalty term to take care of sequential data. We test our method on data drawn from infrared hyper spectral, video and motion capture data. Experiments show that our method, OSC, outperforms the state of the art methods: Spatial Subspace Clustering (SpatSC), Low-Rank Representation (LRR) and SSC.
\end{abstract}

\begin{keyword}

sparse \sep subspace \sep clustering \sep sequential \sep ordered



\end{keyword}

\end{frontmatter}


\section{Introduction}

In many areas such as machine learning and image processing, high dimensional data are ubiquitous. This high dimensionality has adverse effects on the computation time and memory requirements of many algorithms. Fortunately, it has been shown that high dimensional data often lie in a space of much lower dimension than the ambient space \cite{elhamifar2012sparse, vidal2011subspace}. This has motivated the creation of many dimension reduction techniques. These techniques, such as Principal Component Analysis (PCA), assume that the data belongs to a single low dimensional subspace \cite{candes2011robust}. However in reality the data often lies in a union of multiple subspaces. Therefore it is desirable to determine the subspaces in the data so that one can apply dimension reduction to each subspace separately. The problem of assigning data points to subspaces is known as subspace segmentation.

\begin{figure*}[]
\centering
\subfloat[Observed data lies in disjoint sets of subspaces.]{%
\begin{minipage}[c][1\width]{0.3\textwidth}%
\includegraphics[width=1\textwidth]{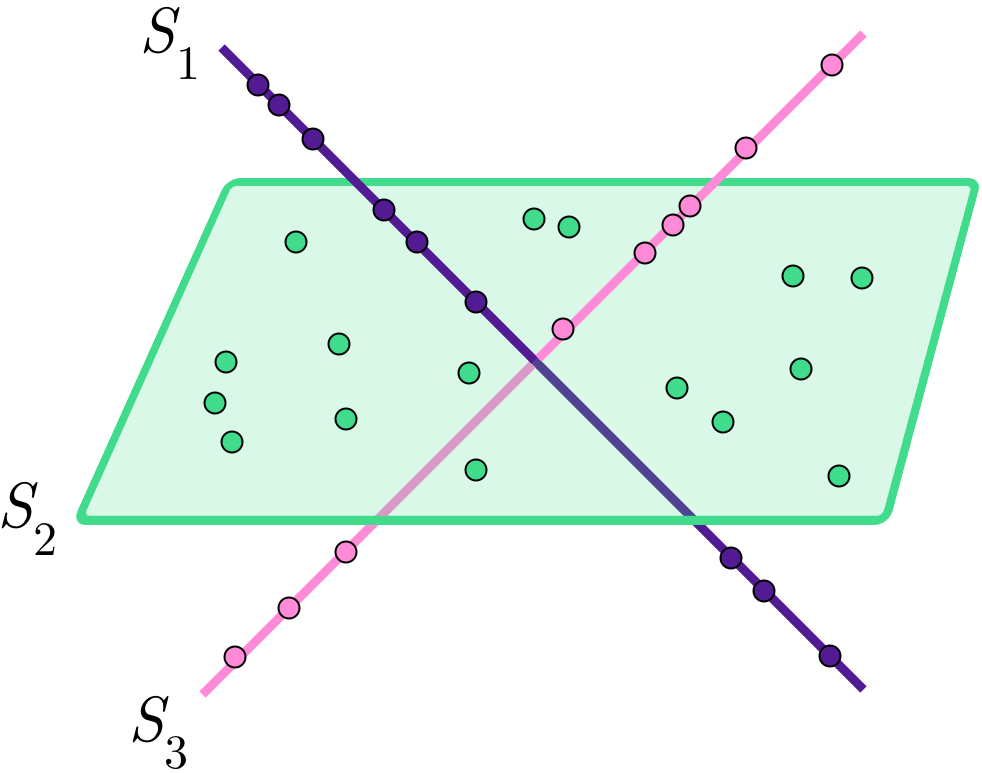}
\end{minipage}}\quad
\subfloat[The self expressive property, $\mathbf{X = XZ}$, is used to learn the subspace structure.]{\scalebox{0.85}{$\left(\begin{array}{ppbbgg}&&&&&\\
&&&&&\\
&&&&&\\
&&&&&\\
&&&&&\\
&&&&&\\
\end{array}\right)
= 
\left(\begin{array}{ppbbgg}
&&&&&\\
&&&&&\\
&&&&&\\
&&&&&\\
&&&&&\\
&&&&&\\
\end{array}\right)
\left(\begin{array}{ccccccc}
\tikzmark{left1}\\
& \tikzmark{right1}{}\\
& & \tikzmark{left2}{}\\
& & & \tikzmark{right2}{}\\
& & & & \tikzmark{left3}{}\\
& & & & & \tikzmark{right3}
\end{array}\right)$\DrawBox[thick]{purple}{left1}{right1}
\DrawBox[thick]{blush}{left2}{right2}
\DrawBox[thick]{celadon}{left3}{right3}}}\\
\subfloat[Final labels for each sample are obtained through spectral clustering on $\mathbf Z$.]{\shortstack{%
\fcolorbox{purple}{white}{\includegraphics[width=0.05\linewidth]{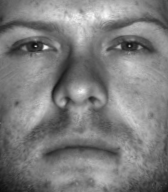}
\includegraphics[width=0.05\linewidth]{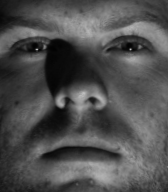}
\includegraphics[width=0.05\linewidth]{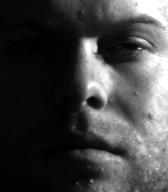}
\includegraphics[width=0.05\linewidth]{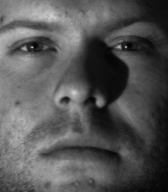}
\includegraphics[width=0.05\linewidth]{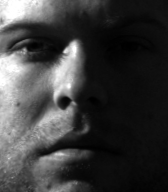}}
\fcolorbox{blush}{white}{\includegraphics[width=0.05\linewidth]{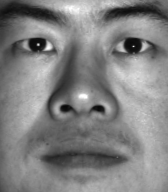}
\includegraphics[width=0.05\linewidth]{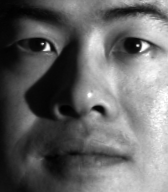}
\includegraphics[width=0.05\linewidth]{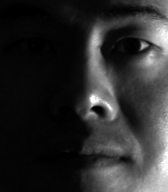}
\includegraphics[width=0.05\linewidth]{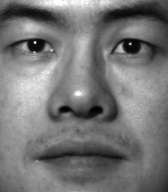}
\includegraphics[width=0.05\linewidth]{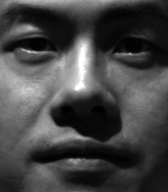}}
\fcolorbox{celadon}{white}{\includegraphics[width=0.05\linewidth]{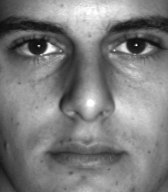}
\includegraphics[width=0.05\linewidth]{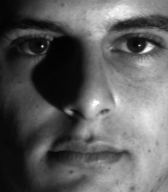}
\includegraphics[width=0.05\linewidth]{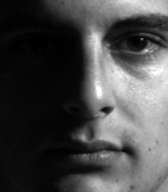}
\includegraphics[width=0.05\linewidth]{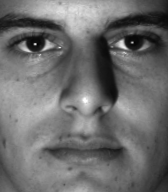}
\includegraphics[width=0.05\linewidth]{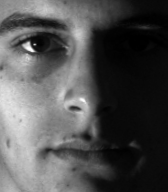}}}}
\caption{An overview of the subspace clustering procedure. The observed data such as face images are assumed to lie in a union of lower dimensional subspaces. The self expressive property is then used to learn the coefficients that best represent the subspace structure. Lastly spectral clustering is applied to the learnt coefficients, which are treated as similarities, to obtain the  final subspace labels.}
\label{Fig:sub_overview}
\end{figure*} 

Given a data matrix of $N$ observed column-wise samples $\mathbf A = [\mathbf a_1, \mathbf a_2, \dots, \mathbf a_N ]$  $\in \mathbb{R}^{D \times N}$, where $D$ is the dimension of the data, the objective of subspace segmentation is to learn corresponding subspace labels $\mathbf l = [ l_1, l_2, \dots, l_N ] \in \mathbb{N}^{N}$. Data within $\mathbf A$ is assumed to be drawn from a union of $k$ subspaces $\{S_i\}^k_{i=1}$ of dimensions $\{d_i\}^k_{i=1}$. Both the number of subspaces $k$ and the dimension of each subspace are unknown. To further complicate the problem it is rarely the case that clean data $\mathbf A$ is observed. Instead we usually observe data which has been corrupted by noise. Subspace segmentation is a difficult task since one must produce accurate results quickly while contending with numerous unknown parameters and large volume of potentially noisy data.

The use of subspace segmentation as a pre-processing method has not been limited to dimensionality reduction. For example it has been used in other applications such as image compression \cite{hong2006multiscale}, image classification \cite{zhang2013learning, DBLP:conf/dicta/BullG12}, feature extraction \cite{liu2012fixed, liu2011latent}, image segmentation \cite{yang2008unsupervised, cheng2011multi}. Furthermore state-of-the-art subspace segmentation has shown impressive results for pure segmentation tasks such as   identifying individual rigidly moving objects in video \cite{tomasi1992shape, costeira1998multibody, kanatani2002motion, jacquet2013articulated}, identifying face images of a subject under varying illumination \cite{basri2003lambertian, georghiades2001few}, segmentation of human activities \cite{zhu2014complex} and temporal video segmentation \cite{vidal2005generalized}.

This paper is concerned with a variant of subspace segmentation in which the data has a sequential structure. The data is assumed to be sampled at uniform intervals in either space or time in a single direction. For example video data which as a function of time has a sequential structure \cite{vidal2005generalized, tierney2014subspace} where it is assumed that frames are similar to their consecutive frames (neighbours) until the scene ends.  Another example is hyper-spectral drill core data \cite{elhamifar2012sparse}, which is obtained by sampling the infrared reflectance along the length of the core. The mineralogy is typically stratified meaning segments of mineral compounds congregate together \cite{Guo.Y;Gao.J;Li.F-2013, guo2014spatial}. The sequential structure implies that consecutive data samples are likely to share the same subspace label i.e.\ $l_i = l_{i+1}$, until of course a boundary point is reached.

\begin{figure*}[]
\centering
\begin{tikzpicture}[scale = 0.7]
    \node (term) at (1,0) [yslant=0.5,xscale=0.4]     
        {\includegraphics[width=40mm]{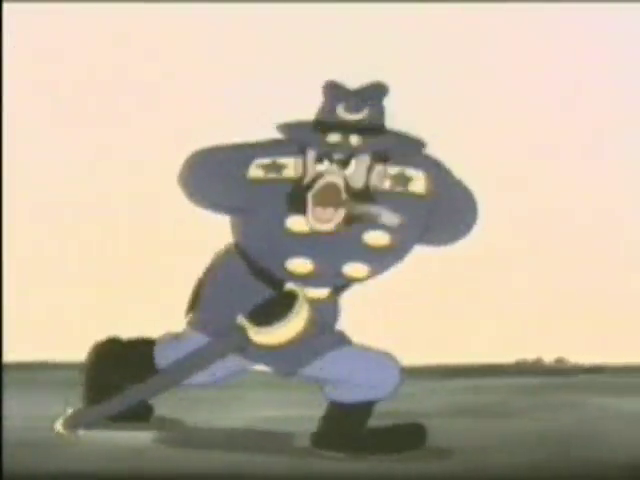}};
    \node (term) at (2,0) [yslant=0.5,xscale=0.4]      
        {\includegraphics[width=40mm]{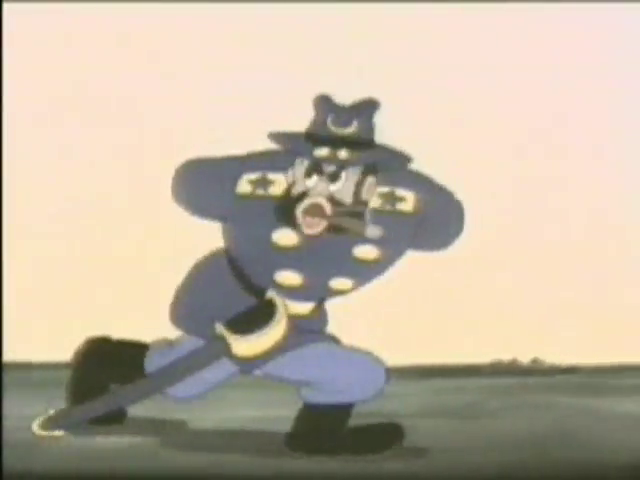}};
    \node (term) at (3,0) [yslant=0.5,xscale=0.4]      
        {\includegraphics[width=40mm]{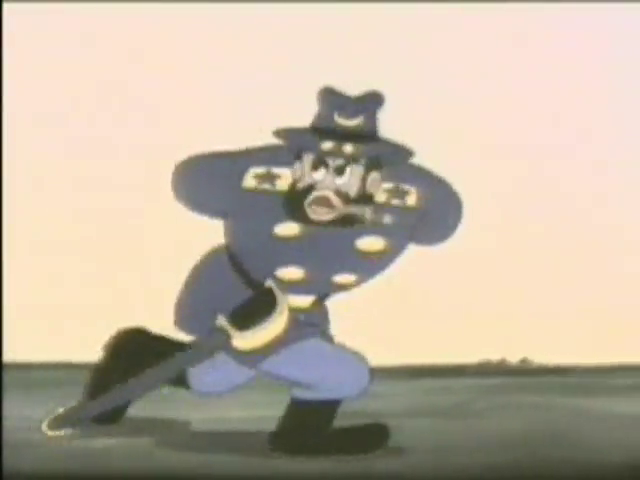}};
    \node (term) at (4,0) [yslant=0.5,xscale=0.4]      
        {\includegraphics[width=40mm]{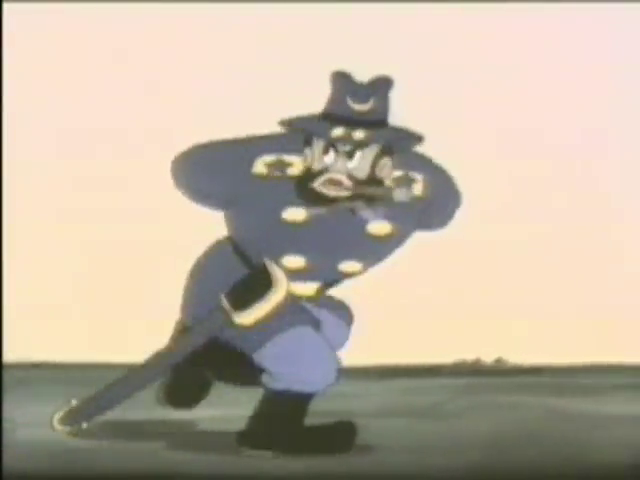}};
    \node (term) at (5,0) [yslant=0.5,xscale=0.4]      
        {\includegraphics[width=40mm]{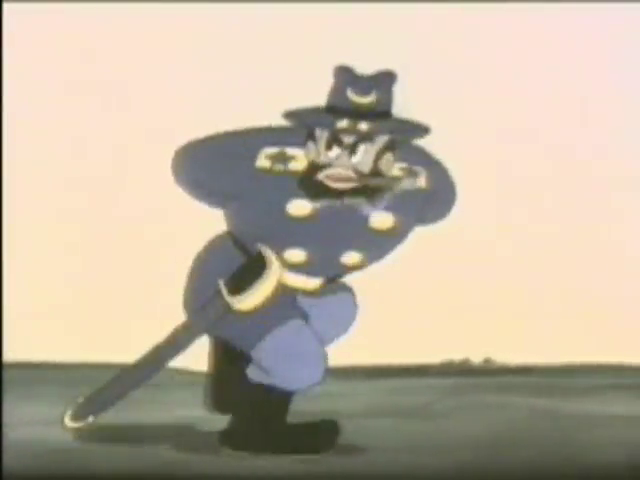}};
    \node (term) at (6,0) [yslant=0.5,xscale=0.4] 
        {\includegraphics[width=40mm]{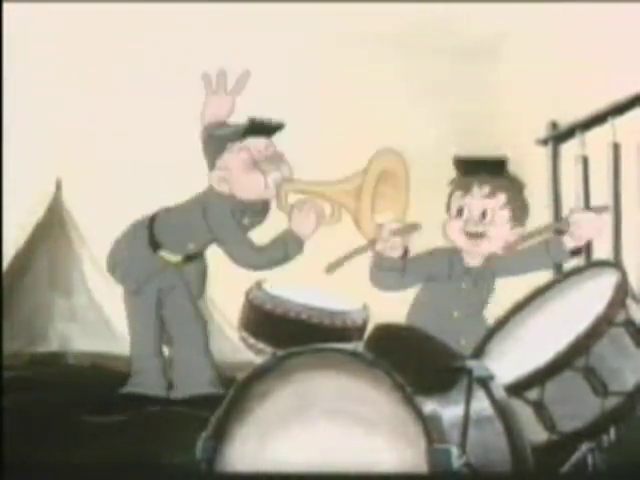}};
    \node (term) at (7,0) [yslant=0.5,xscale=0.4]     
        {\includegraphics[width=40mm]{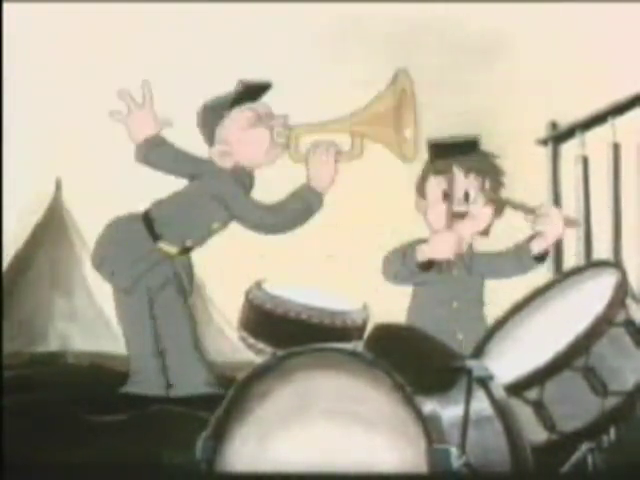}};
    \node (term) at (8,0) [yslant=0.5,xscale=0.4]      
        {\includegraphics[width=40mm]{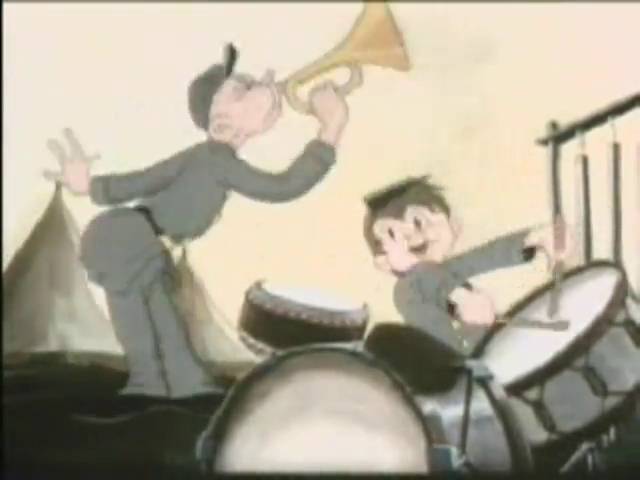}};
    \node (term) at (9,0) [yslant=0.5,xscale=0.4]      
        {\includegraphics[width=40mm]{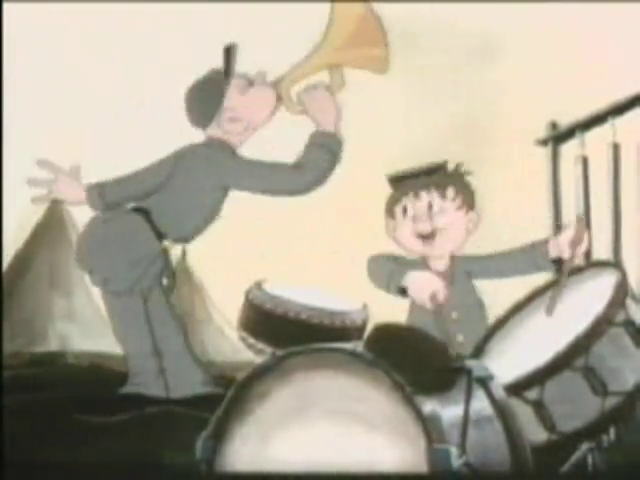}};
    \node (term) at (10,0) [yslant=0.5,xscale=0.4]      
        {\includegraphics[width=40mm]{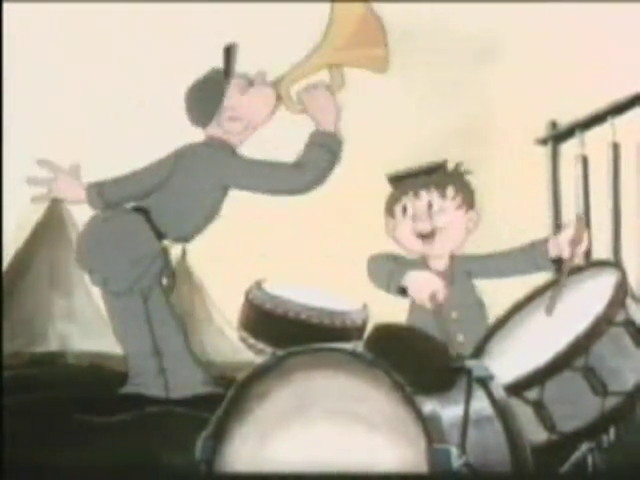}};
    \node (term) at (11,0) [yslant=0.5,xscale=0.4] 
        {\includegraphics[width=40mm]{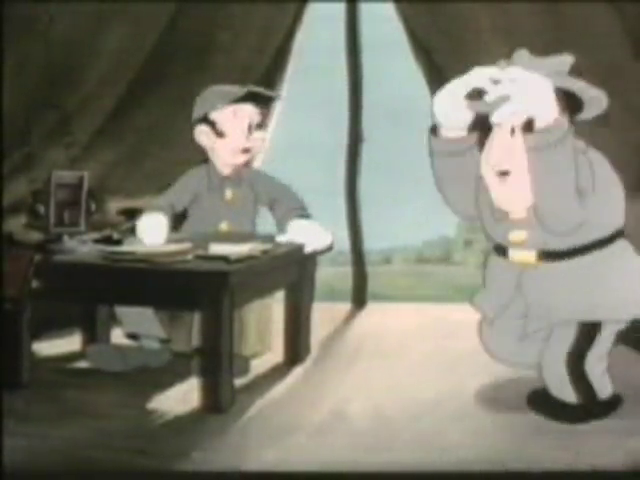}};
    \node (term) at (12,0) [yslant=0.5,xscale=0.4]     
        {\includegraphics[width=40mm]{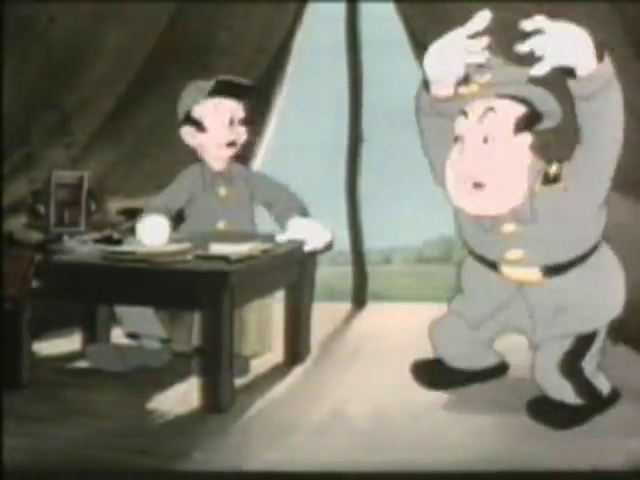}};
    \node (term) at (13,0) [yslant=0.5,xscale=0.4]      
        {\includegraphics[width=40mm]{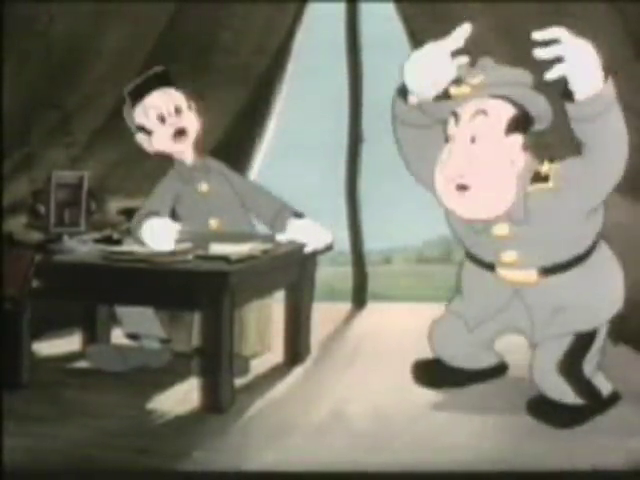}};
    \node (term) at (14,0) [yslant=0.5,xscale=0.4]      
        {\includegraphics[width=40mm]{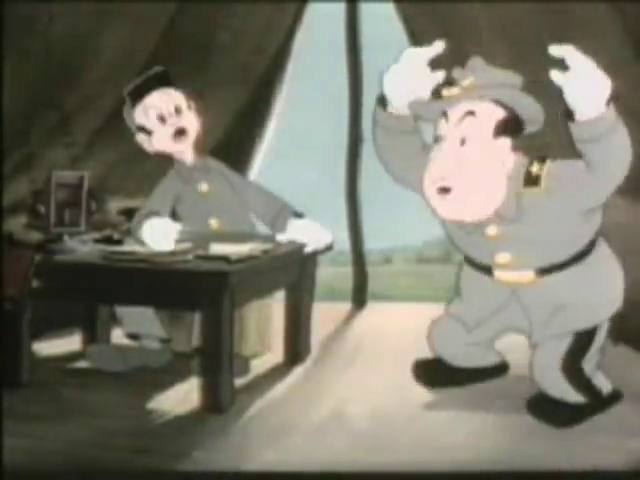}};
    \node (term) at (15,0) [yslant=0.5,xscale=0.4]      
        {\includegraphics[width=40mm]{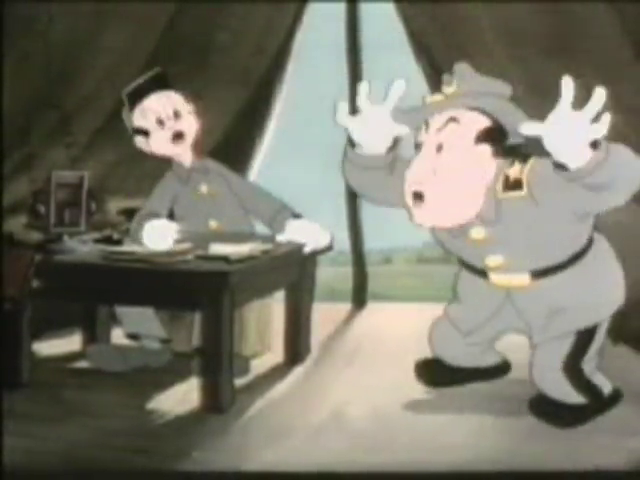}};
\end{tikzpicture}
\caption{Example frames from Video 1 (see Section X for more details). Each frame is a data sample and each scene in the video corresponds to a subspace.}
\label{Fig:vid_example}
\end{figure*}

\begin{figure*}[]
\centering
\fcolorbox{purple}{white}{
\begin{minipage}{.27\textwidth}
\includegraphics[width=0.3\textwidth]{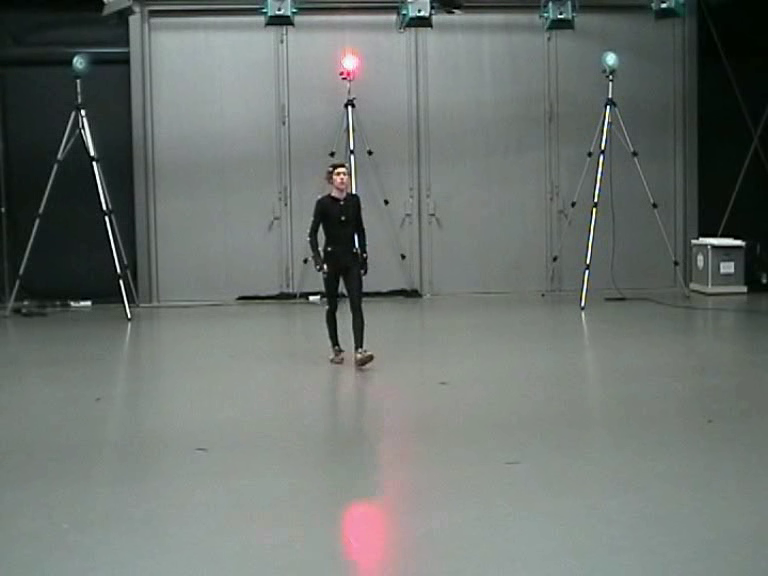}
\includegraphics[width=0.3\textwidth]{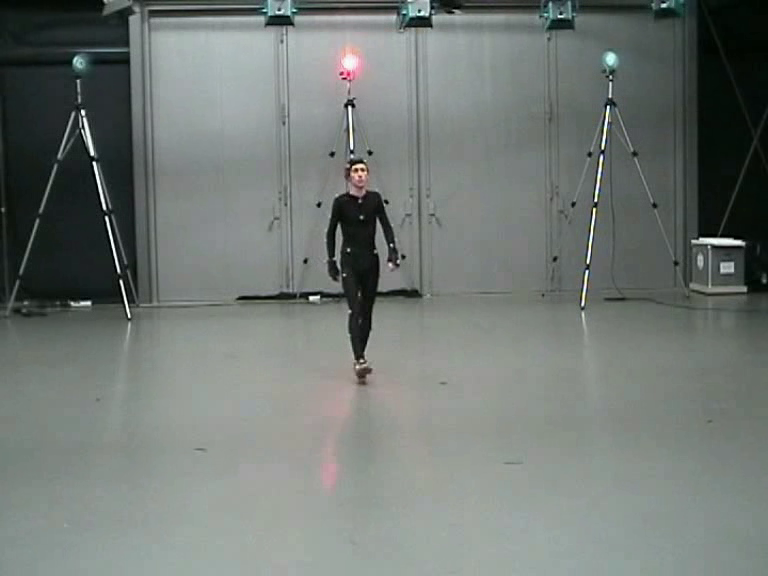}
\includegraphics[width=0.3\textwidth]{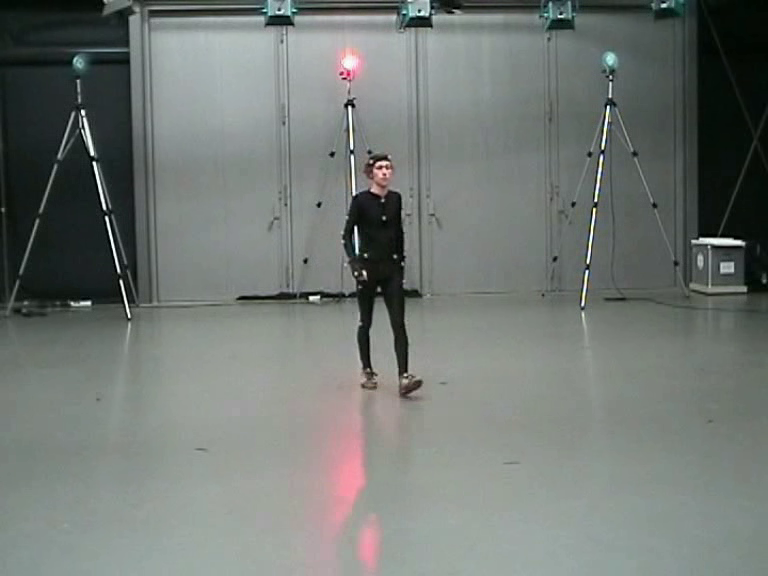}\\
\linebreak
\includegraphics[width=0.3\textwidth]{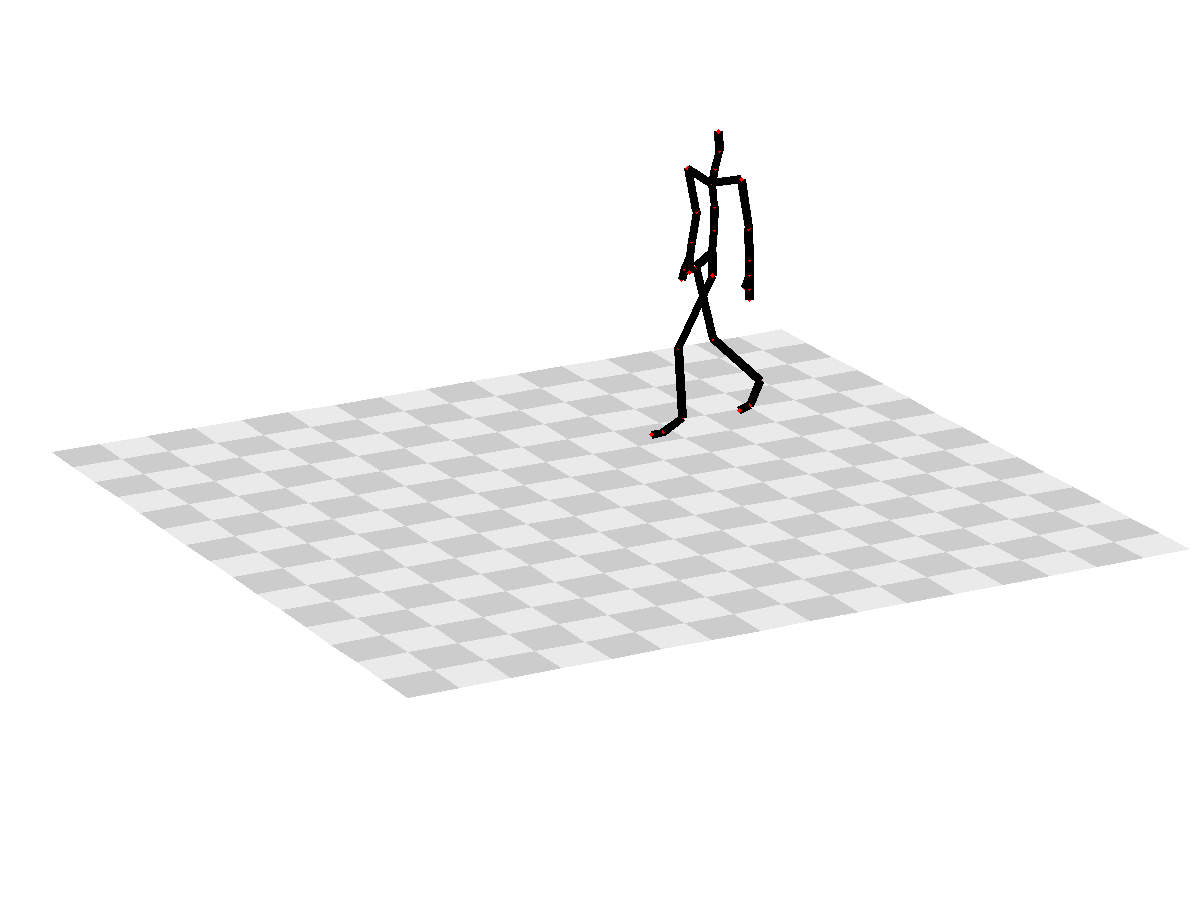}
\includegraphics[width=0.3\textwidth]{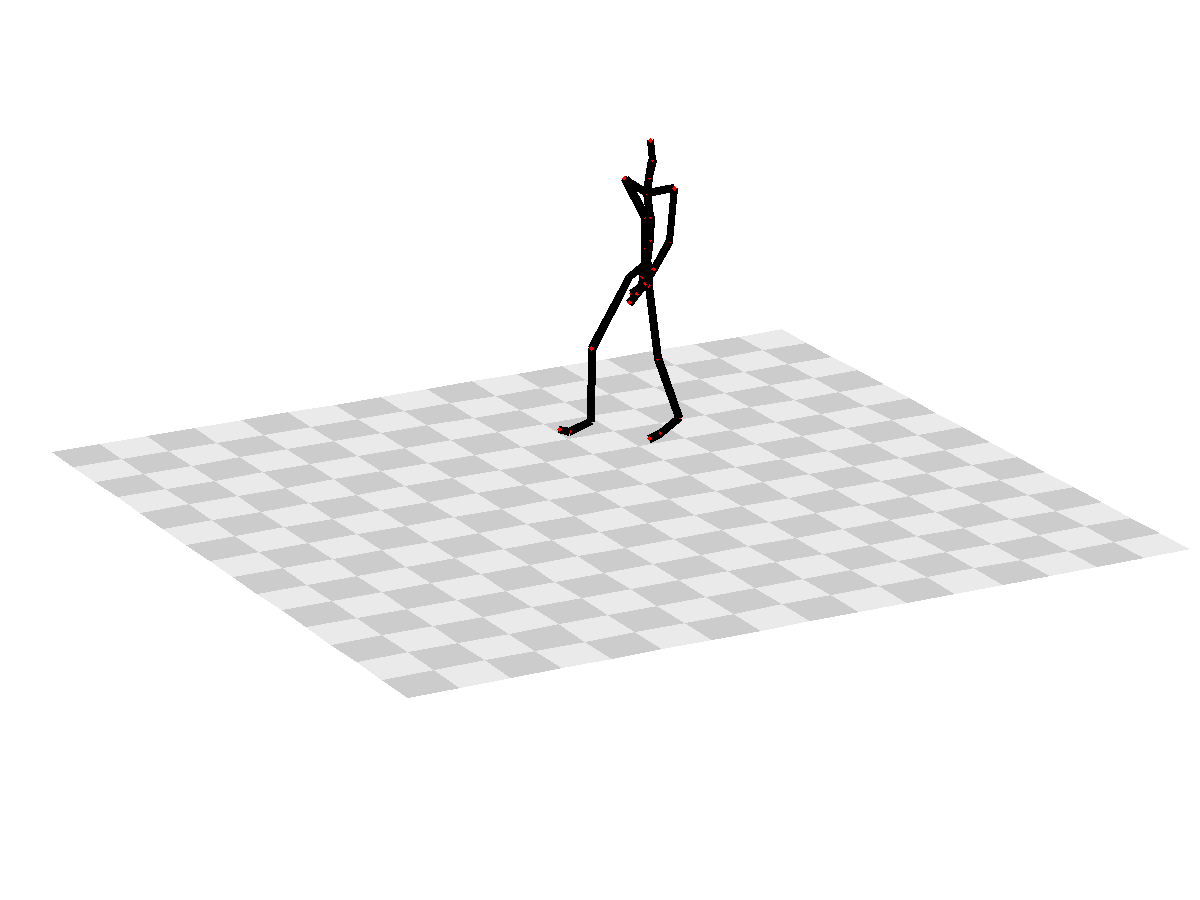}
\includegraphics[width=0.3\textwidth]{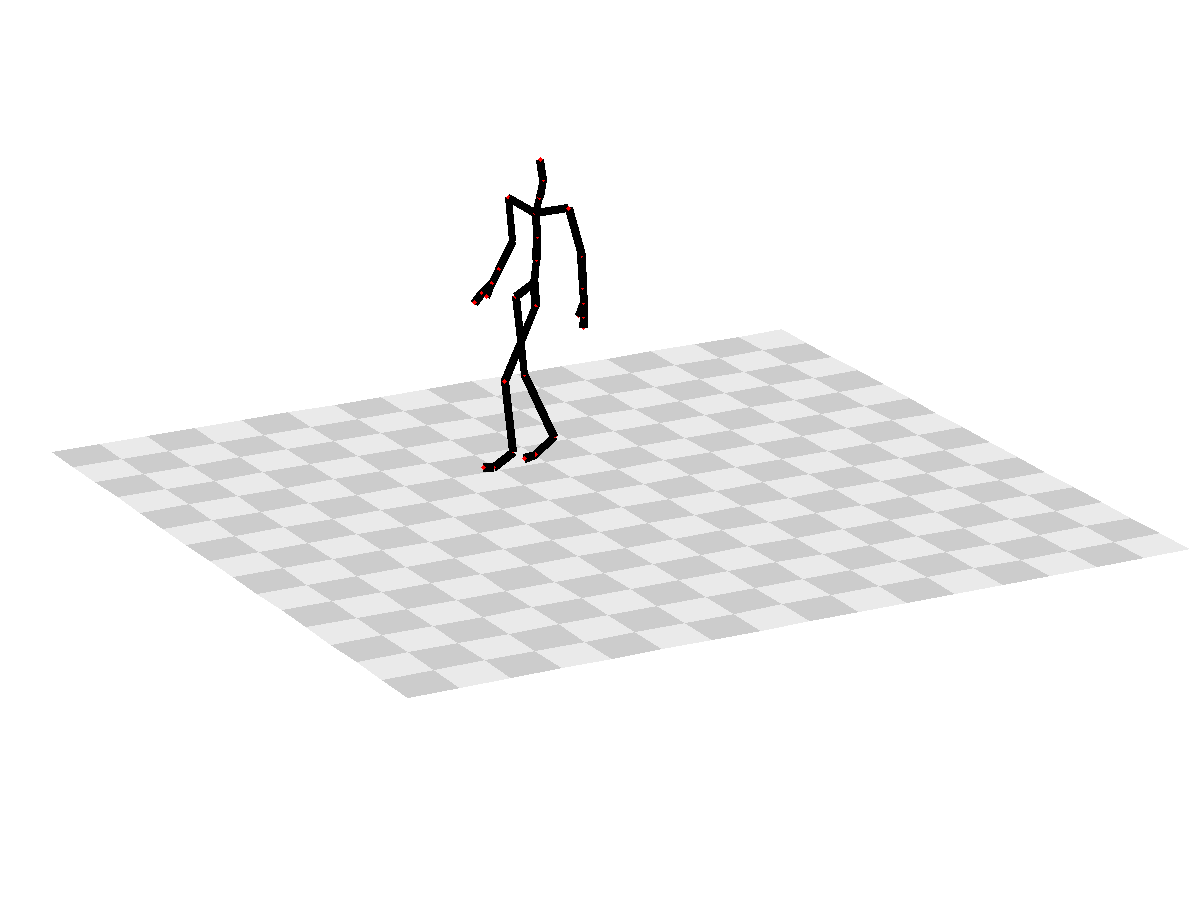}\end{minipage}}
\fcolorbox{blush}{white}{
\begin{minipage}{.27\textwidth}
\includegraphics[width=0.3\textwidth]{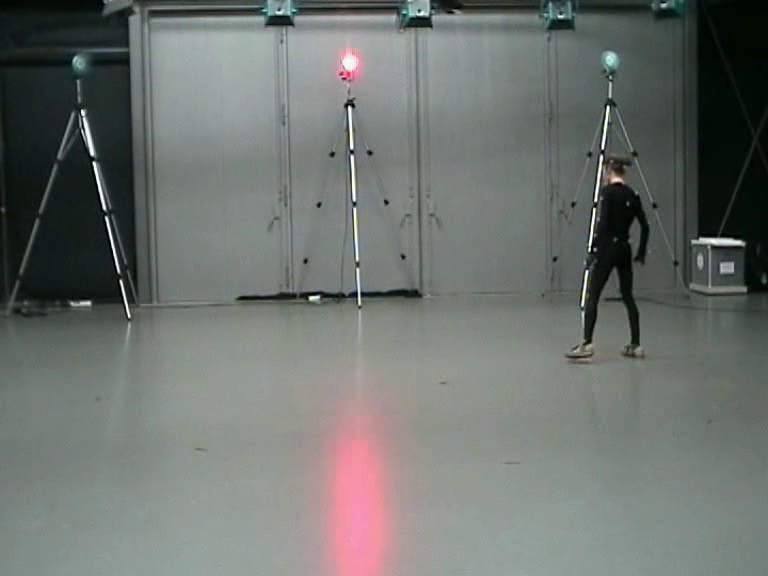}
\includegraphics[width=0.3\textwidth]{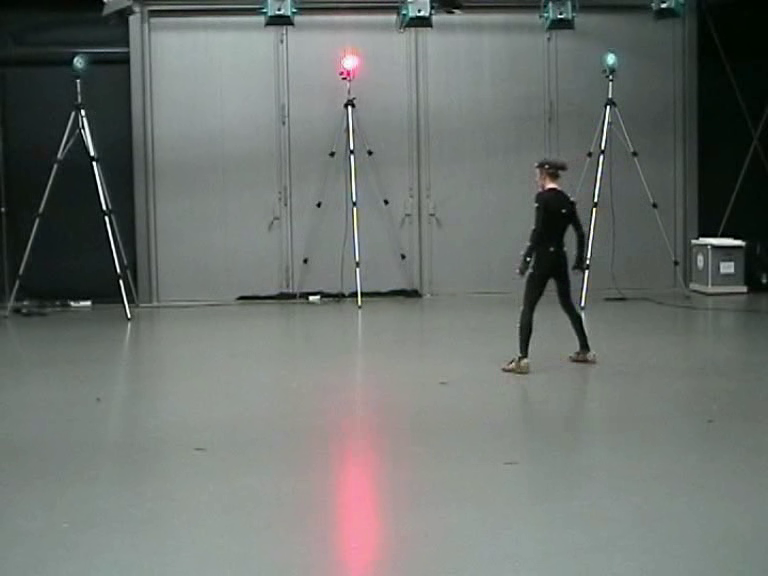}
\includegraphics[width=0.3\textwidth]{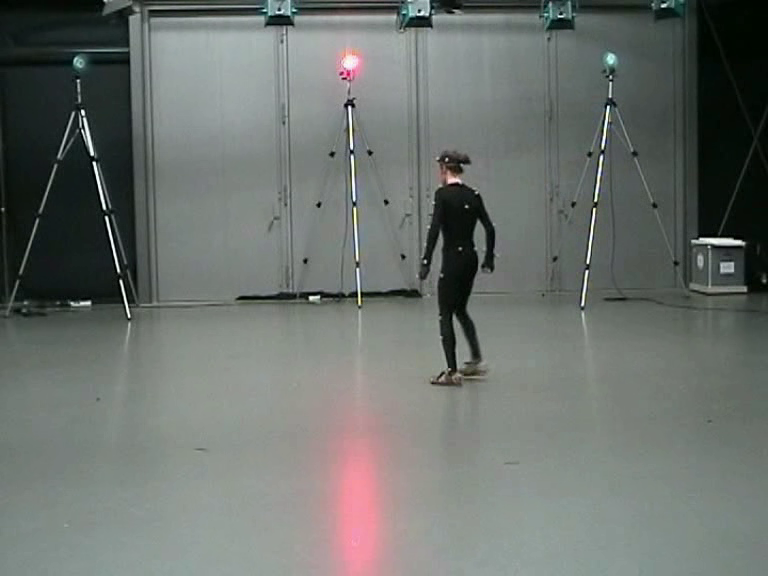}\\
\linebreak
\includegraphics[width=0.3\textwidth]{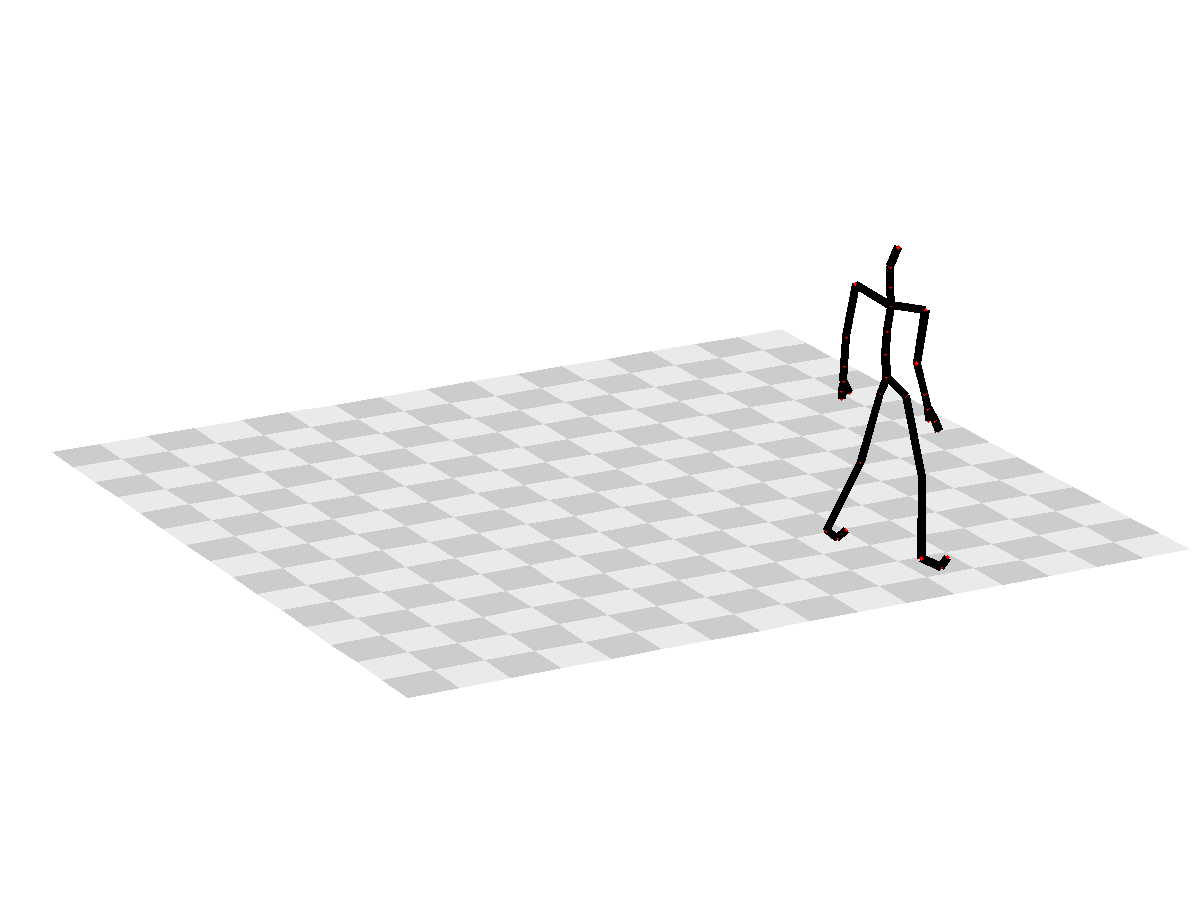}
\includegraphics[width=0.3\textwidth]{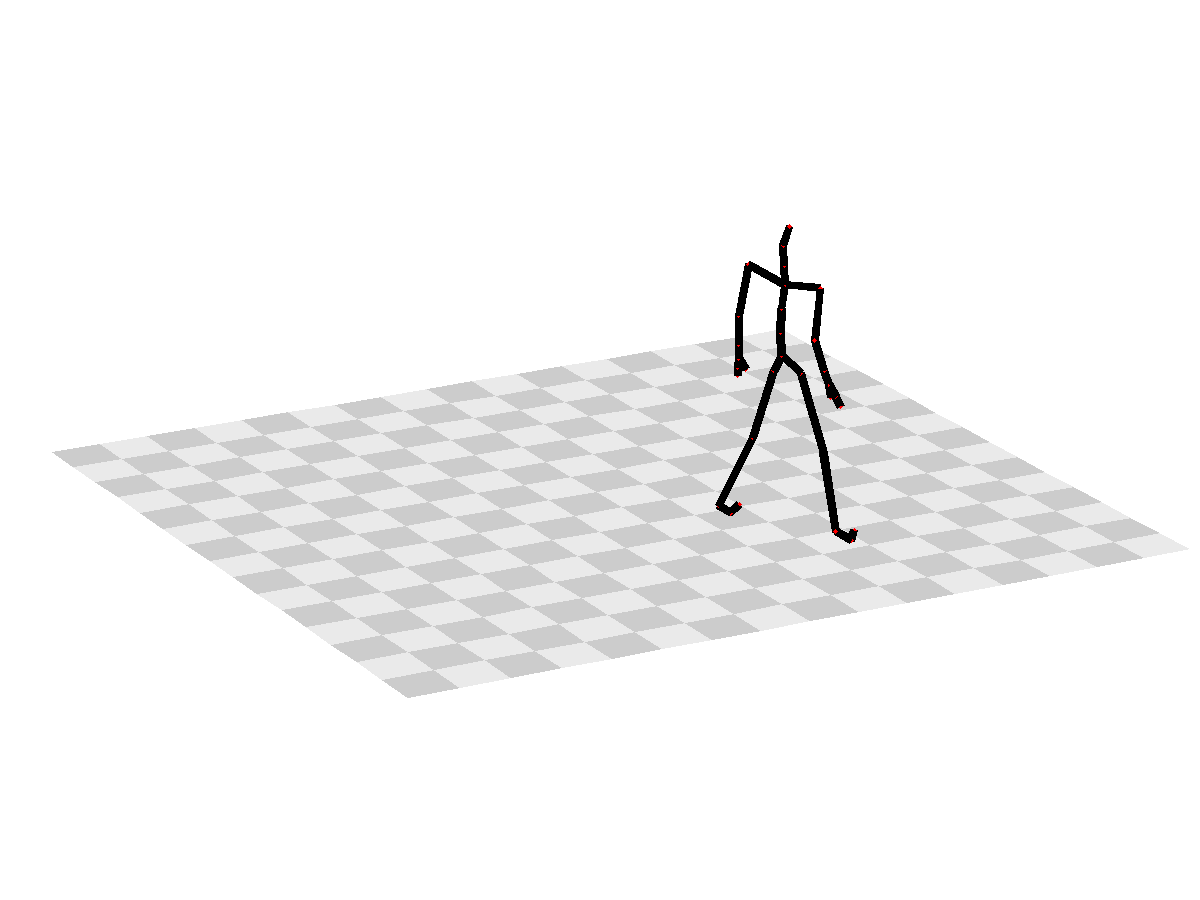}
\includegraphics[width=0.3\textwidth]{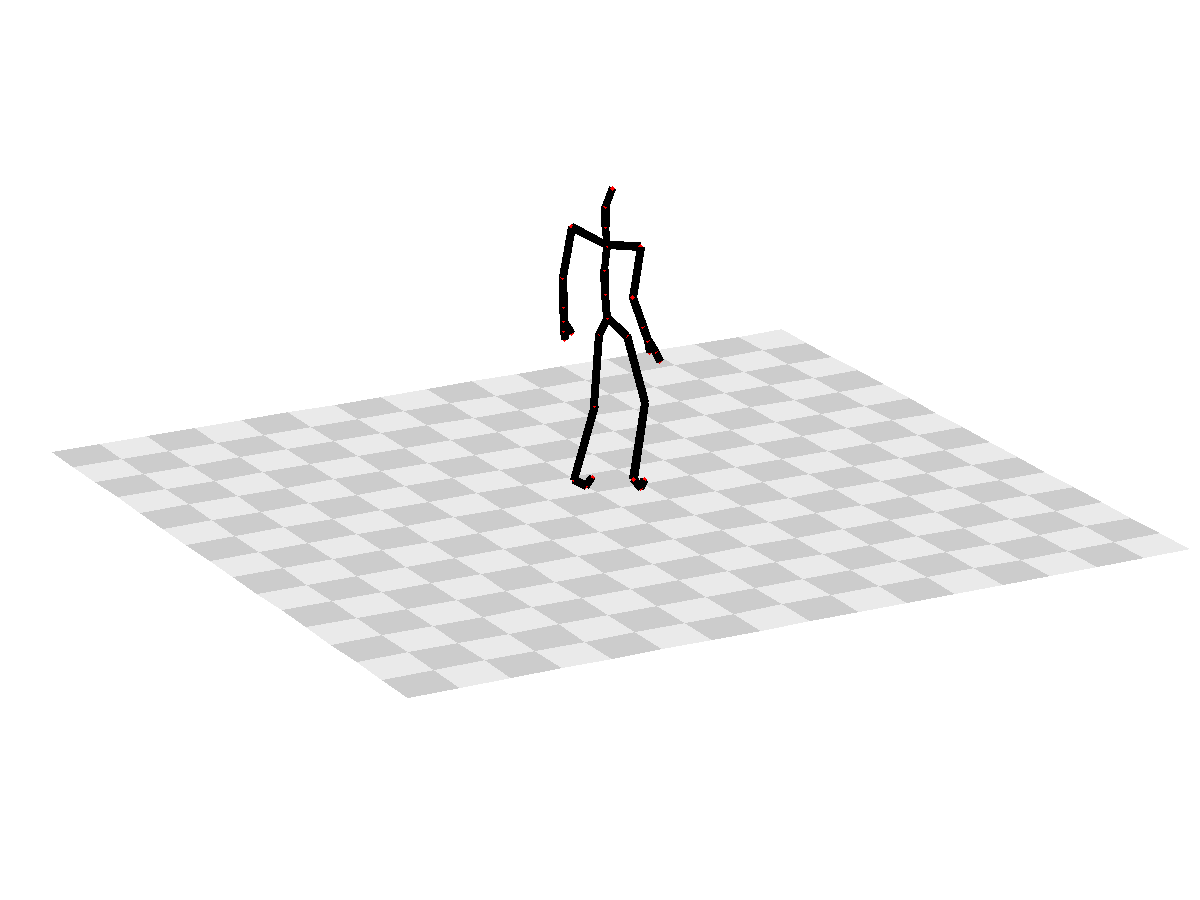}\end{minipage}}
\fcolorbox{celadon}{white}{
\begin{minipage}{.27\textwidth}
\includegraphics[width=0.3\textwidth]{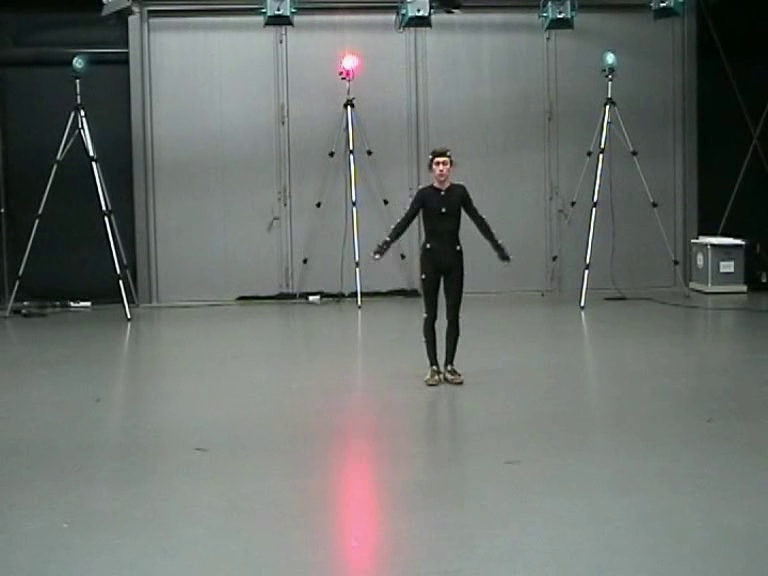}
\includegraphics[width=0.3\textwidth]{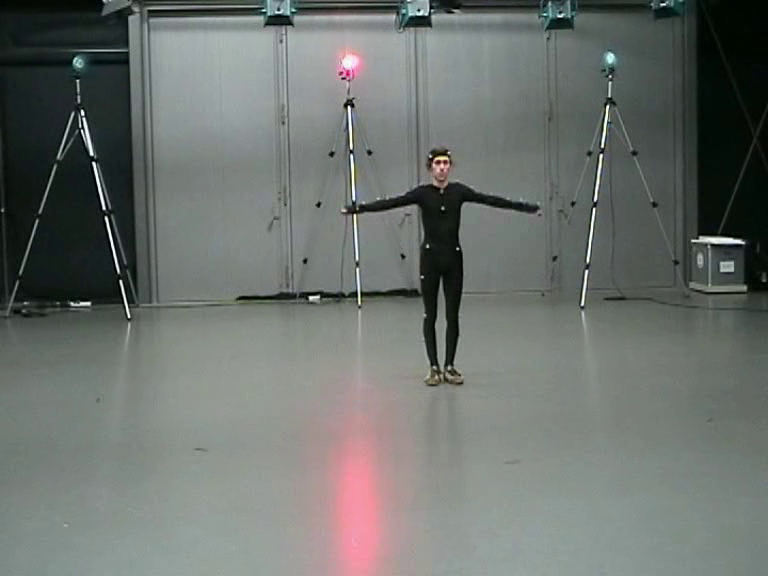}
\includegraphics[width=0.3\textwidth]{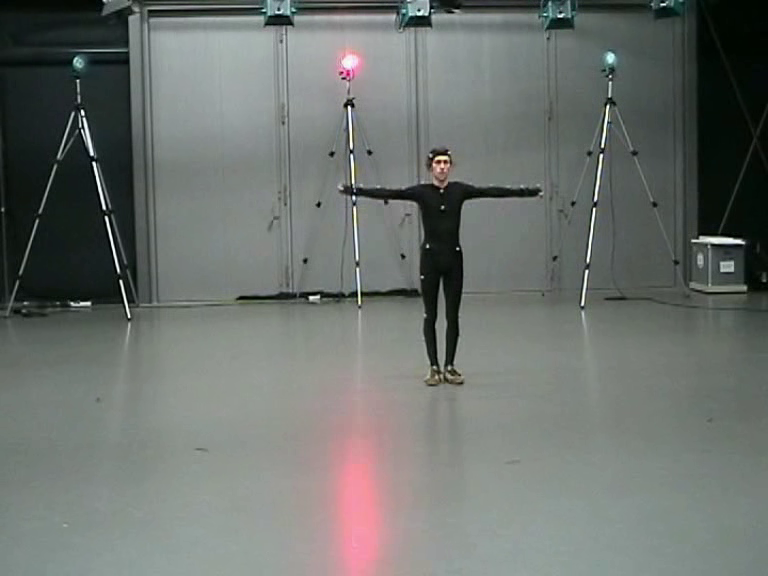}\\
\linebreak
\includegraphics[width=0.3\textwidth]{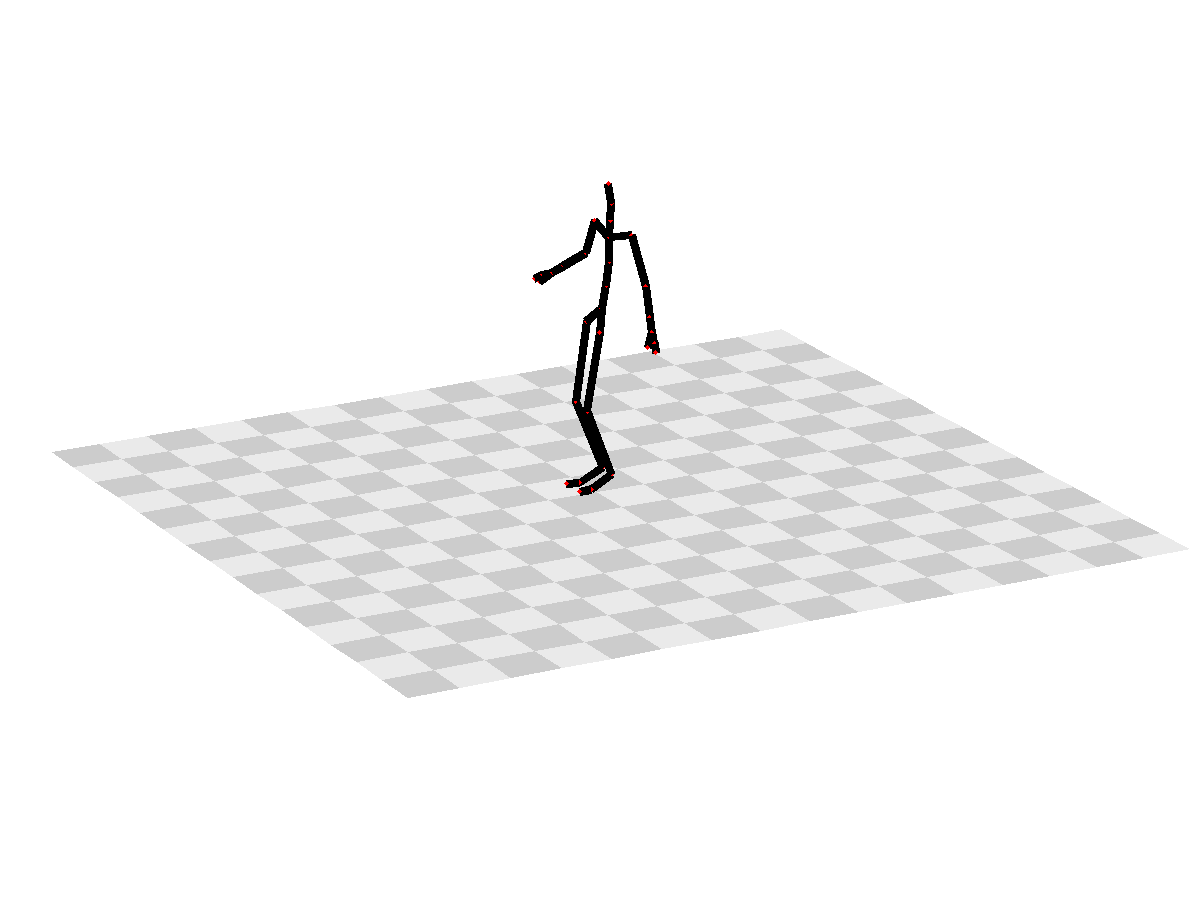}
\includegraphics[width=0.3\textwidth]{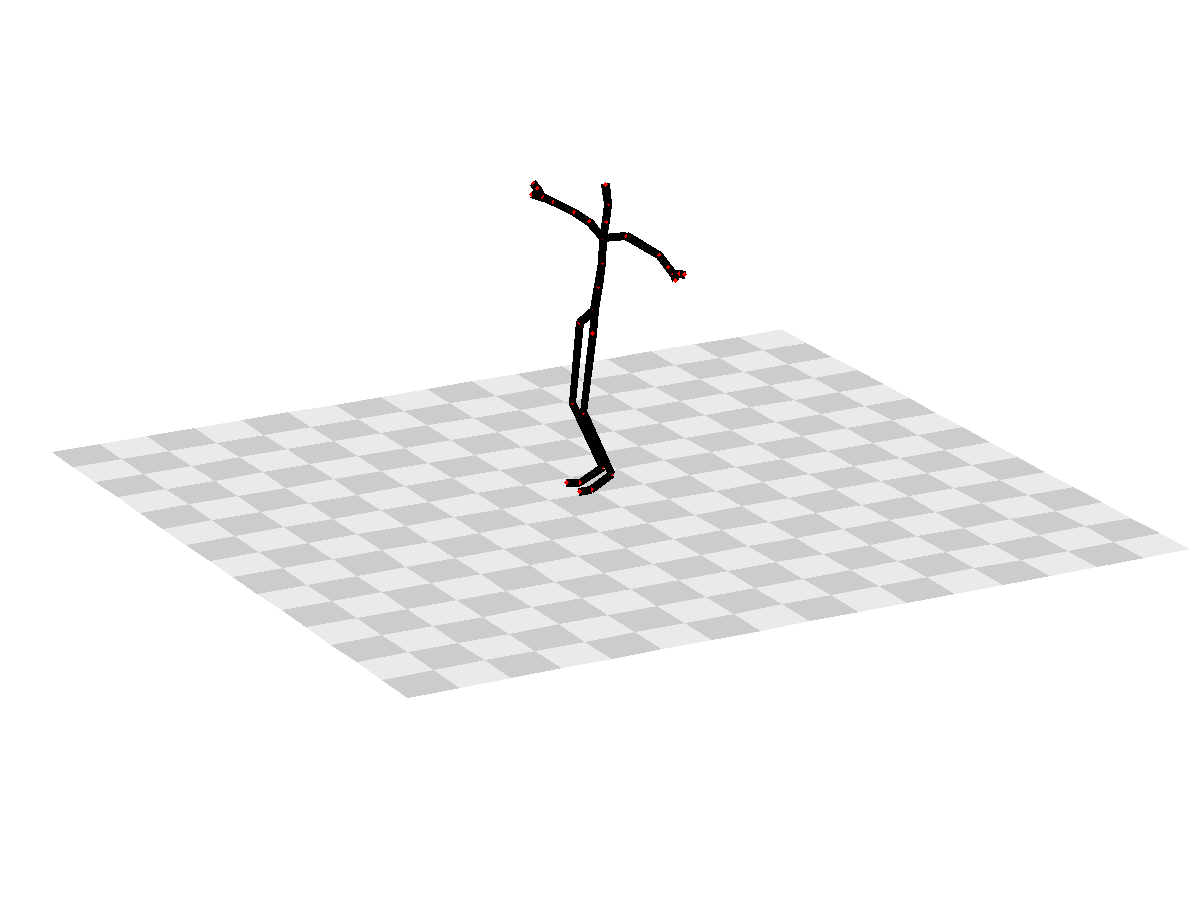}
\includegraphics[width=0.3\textwidth]{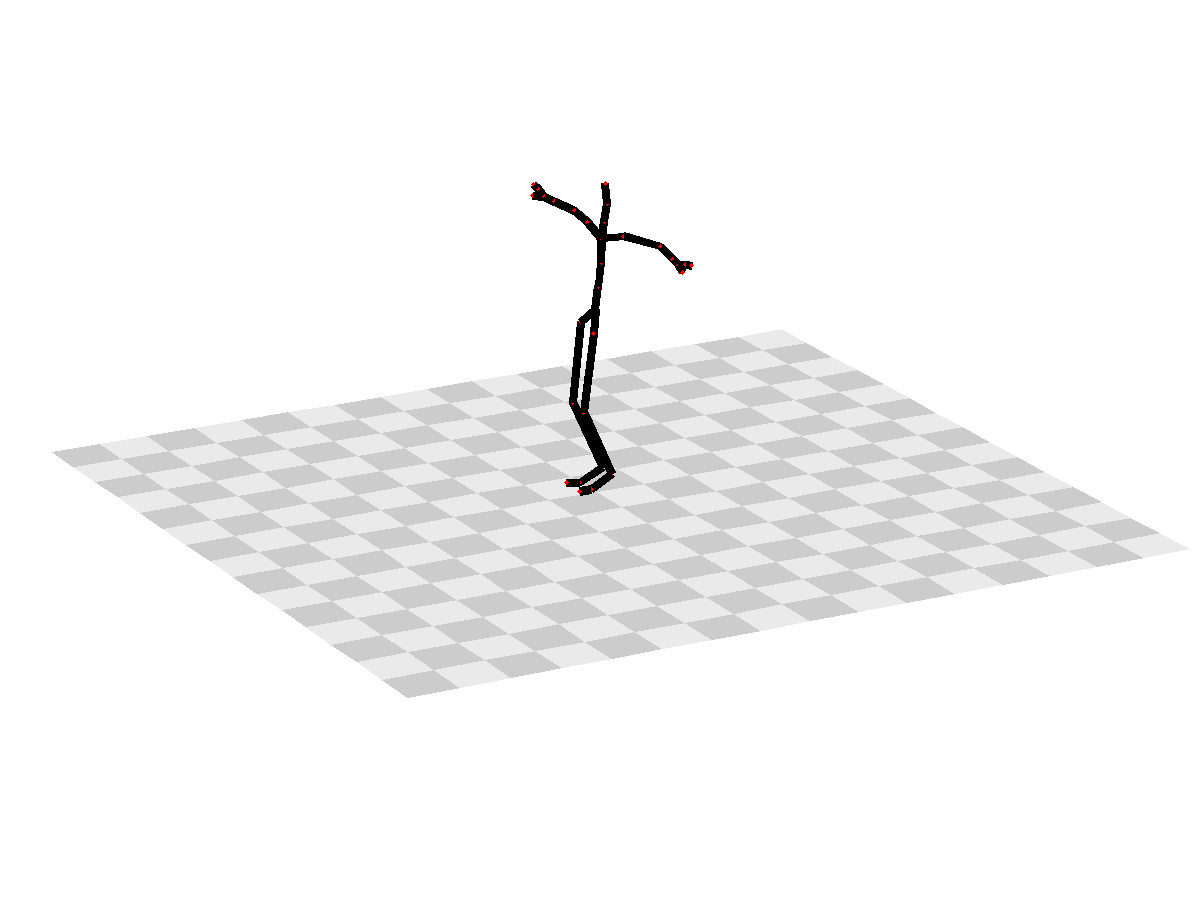}\end{minipage}}
\caption{Three examples of human activities (walking, side-stepping and balancing) from the HMD database. Each activity lies in it's own subspace. The top row demonstrates the actor wearing the reflective marker suit and the bottom row shows the captured skeletal structure.}
\label{Fig:mocap_example}
\end{figure*}

This papers main contribution is the proposal and discussion of the Ordered Subspace Clustering (OSC) method, which exploits the sequential structure of the data. Experimental evaluation demonstrates that OSC outperforms state-of-the-art subspace segmentation methods on both synthetic and real world datasets. A preliminary version of this paper was published in CVPR14 \cite{tierney2014subspace}. The optimisation scheme that was suggested in the preliminary version lacked a guarantee of convergence and suffered from huge computational cost. In this paper we provide two new optimisation schemes to solve the OSC objective, which have guaranteed convergence, much lower computational requirements and can be computed in parallel. Furthermore we perform experiments on new synthetic and real datasets.


\section{Prior and Related Work}

The state-of-the-art methods in subspace segmentation are the spectral subspace segmentation methods such as Sparse Subspace Clustering (SSC) and Low-Rank Representation (LRR). Spectral subspace segmentation methods consist of two steps:
\begin{enumerate}
\item Learn the subspace structure from the data
\item Interpret the structure as an affinity matrix and segment via spectral clustering
\end{enumerate}
The main difference between spectral methods is in their approaches to learning the subspace structure.

To learn the subspace structure of the data, spectral subspace segmentation methods exploit the the self expressive property \cite{elhamifar2012sparse}:
\begin{quote}
{\it{each data point in a union of subspaces can be efficiently reconstructed by a combination of other points in the data}}.
\end{quote}

In other words a point in a subspace can only be represented by a linear combination of points from within the same subspace. Unless the subspaces intersect or overlapping, which is assumed to be extremely unlikely in practice. This leads to the following model
\begin{align}
\mathbf{a_i = \mathbf A z_i}
\label{subspace}
\end{align}
where $\mathbf z_i \in \mathbb{R}^{N}$ is a vector of coefficients, which encode the subspace structure. Due to the self-expressive property the non-zero elements of $\mathbf z_i$ will correspond to samples in $\mathbf A$ that are in the same subspace as sample $i$. Therefore learning the coefficient vectors for each data sample can reveal some of the underlying subspace structure. The model can be expressed for all data points as
\[
\mathbf{A = A Z}
\]
where columns of $\mathbf Z = [\mathbf{ z_1,z_2,\dots,z_N}] \in \mathbb{R}^{N \times N}$.

After learning $\mathbf Z$ the next step is to assign each data point a subspace label. The first step in this process is to build a symmetric affinity matrix. The affinity matrix is usually defined as
\begin{align}
\mathbf W = |\mathbf Z|^T + |\mathbf Z|
\label{w_def}
\end{align}
where element $W_{ij}$ of $\mathbf W$ is interpreted as the affinity or similarity between data points $i$ and $j$. Next this affinity matrix is used by a spectral clustering method for final segmentation. Normalised Cuts (NCut) \cite{shi2000normalized} is the de facto spectral clustering method for this task \cite{elhamifar2012sparse, liu2010robust}.

So far it has been assumed that the original and clean data $\mathbf A$ is observed. Unfortunately this ideal situation is rare with real world data. Instead the data is usually corrupted by noise in the data capture process or during data transmission of the data. Therefore most subspace clustering methods assume the following data generation model
\[
\mathbf{X = A + N}
\]
where $\mathbf A$ is the original data where each point (column) lies on a subspace and $\mathbf N$ is noise. $\mathbf N$ follows some probability distribution. Two common assumptions for $\mathbf N$ are Gaussian distribution and Laplacian distribution.

Since it may be difficult to isolate the original data $\mathbf A$ from the noise $\mathbf N$, most subspace clustering methods actually address the issue of noise by allowing greater flexibility in the self-expressive model. The self-expressive model usually becomes
\[
\mathbf{X = X Z + E}
\]
where $\mathbf E$ is a fitting error and is {\em different from} $\mathbf N$.

\subsection{Sparse Subspace Clustering}

Sparse Subspace Clustering (SSC) was originally introduced by Elhamifar \& Vidal \cite{elhamifar2012sparse,elhamifar2009sparse}. SSC adopts concepts from the domain of sparse models, namely that
\begin{quote}
{\it{there exists a sparse solution, $\mathbf z_i$, whose nonzero entries correspond to data points from the same subspace as $\mathbf a_i$}}.
\end{quote}
In the case where the observed data is noiseless, i.e.\ we have $\mathbf A$, each data point lying in the $d_i$-dimensional subspace $S_i$ can be represented by $d_i$ points. This corresponds to the sparse representation of points, ideally a sparse solution should only select coefficients belonging to the same subspace as each point. Furthermore the number of non-zero coefficients should correspond to the dimension of the underlying subspace. The sparsity goals of SSC could be achieved through a solution to the following
\begin{align}
\min_{\mathbf{Z}} \; &  \|\mathbf Z\|_{0}, \quad\quad\text{s.t.} \quad   \mathbf{A = AZ}, \; \text{diag}(\mathbf Z) = \mathbf{0}, 
\end{align}
where $\| \cdot \|_0$ is called the $\ell_0$ norm and is defined the number of non-zero entries. The diagonal constraint is used to avoid the degenerate solution of expressing the point as a linear combination of itself.  However this problem is intractable, instead the convex relaxation $\ell_1$ norm is used
\begin{align}
\min_{\mathbf{Z}} \;  \|\mathbf Z\|_{1}, \quad\quad\text{s.t.} \quad \mathbf{A = AZ}, \; \text{diag}(\mathbf Z) = \mathbf{0}.  
\end{align}
The $\| \cdot \|_1$ is the $\ell_1$ norm and is defined as $\sum_{i = 1}^N \sum_{j = 1}^N | Z_{ij} |$ i.e.\ the sum of absolute values of the entries. We call this heuristic SSC.

To overcome the simultaneous presence of noise and outliers, Elhamifar \& Vidal \cite{elhamifar2009sparse} devised the following alternative
\begin{align}
\min_{\mathbf{E, S, Z}} \; \frac{\lambda_1}{2} \|\mathbf{E}\|^2_F + \lambda_2 \|\mathbf{S}\|_1 +  \|\mathbf Z\|_{1} \\
\text{s.t.} \quad \mathbf{X = XZ + E + S}, \text{diag}(\mathbf Z) = \mathbf{0} \nonumber
\end{align}
where $ \| \cdot \|_F$ is the Frobenius norm and $\mathbf S$ is high magnitude sparse fitting error. This model allows for flexibility in the fitting error since setting either $\lambda_1$ or $\lambda_2$ to $0$ eliminates $\mathbf E$ or $\mathbf S$ from the model. This only compensates for fitting errors however shows surprising robustness in practice.

Recent work by Soltanolkotabi, Elhamifar and Candes \cite{soltanolkotabi2014robust} showed that under rather broad  conditions using noisy data $\mathbf X$ the $\ell_1$ approach should produce accurate clustering results. These conditions include maximum signal-to-noise ratio, number of samples in each cluster and distance between subspaces and appropriate selection of parameters. They use the following relaxed objective
\begin{align}
\min_{\mathbf{z_i}} \; \frac{1}{2}\| \mathbf x_i - \mathbf X \mathbf z_i \|_F^2 + \lambda_i \|\mathbf z_i\|_{1},   
\quad \text{s.t.} \quad \text{diag}(\mathbf Z) = \mathbf{0}.  
\end{align}
with regularisation parameter $\lambda_i$ tuned for each data sample.

In practice SSC allows for efficient computation. Each column of $\mathbf Z$ can be computed independently and in parallel with the other columns. In contrast to LRR (discussed next) the computational requirements are lightweight. Furthermore $\mathbf Z$ is sparse, which can reduce memory requirements and decrease time computational requirements and time spent during the final spectral clustering step.

\subsection{Low-Rank Subspace Clustering}

Rather than compute the sparsest representation of each data point individually, Low-Rank Representation (LRR) by Liu, Lin and Yu \cite{liu2010robust} attempts to incorporate global structure of the data by computing the lowest-rank representation of the set of data points. Therefore the objective becomes
\begin{align}
\min_{\mathbf{Z}} \;  \textrm{rank}(\mathbf{Z}), \quad
\text{s.t.} \quad \mathbf{A = AZ}.  
\end{align}
This means that not only can the data points be decomposed as a linear combination of other points but the entire coefficient matrix should be low-rank. The aim of the rank penalty is to create a global grouping effect that reflects the underlying subspace structure of the data. In other words, data points belonging to the same subspace should have similar coefficient patterns.

Similar to SSC, the original objective for LRR is intractable. Instead the authors of LRR suggest a heuristic version which uses the closest convex envelope of the rank operator: the nuclear or trace norm. The objective then becomes
\begin{align}
\min_{\mathbf{Z}} \;  \| \mathbf{Z} \|_*, \quad 
\text{s.t.} \quad \mathbf{A = AZ}  
\end{align}
where $\| \cdot \|_*$ is the nuclear norm and is the sum of the singular values. The singular values can be computed through the Singular Value Decomposition (SVD).

LRR has achieved a lot of attention in the subspace segmentation community, which had led to some interesting discoveries. The most surprising of which is that there is a closed form solution to the heuristic noiseless LRR objective. The closed form solution is given by the Shape Interaction Matrix (SIM) and is defined as
\[
\mathbf{Z = V_A V_A^T}
\]
where $\mathbf{V_A}$ are the right singular vectors given by the SVD of $\mathbf A$.

In the case where noise is present, the authors of LRR suggested a similar model to that used in SSC. However they assume that their fitting error will be only be present in a small number of columns. This results in the following objective
\begin{align}
\min_{\mathbf {E, Z}} \; \lambda \|\mathbf{E}\|_{1,2} + \|\mathbf Z\|_{*}, \quad
\text{s.t.} \quad \mathbf{X = XZ + E},
\end{align}
where $\|\mathbf{E}\|_{1,2} = \sum^n_{i=1} \| \mathbf{e_i} \|_2$ is the $\ell_{1,2}$ norm.

Even though LRR has shown impressive accuracy performance in many subspace segmentation tasks it has two drawbacks:
\begin{itemize}
\item high computational cost,
\item large memory requirements.
\end{itemize}
LRR's high computational cost comes from the required computation of the SVD of $\mathbf Z$ at every iteration. Depending on the convergence tolerance LRR may iterate hundreds or even thousands of times. However some improvements have been made by computing partial or skinny SVD approximations. Similarly the large memory requirements of LRR stem from the computation of the SVD of $\mathbf Z$. Since the number of elements in $\mathbf Z$  scales quadratically with the number of number of data samples it may not be possible to apply LRR even for modest datasets. Work has been done in fast approximations of SVD \cite{liberty2007randomized, woolfe2008fast} but it has not yet been applied to LRR at the time of writing.

\subsection{Regularised Variants}

Laplacian Regularised LRR \cite{Yin:rz} and LRR with Local Constraint \cite{Zheng2013398} incorporate Laplacian regularisation to ensure that data points close in the ambient space share similar coefficient structure. The objectives for both approaches can be generalised to
\begin{align}
\min_{\mathbf{Z}} \; &  f(\mathbf Z) + \lambda \sum_i^N W_{ij} \| \mathbf z_i - \mathbf z_j \|_2, \ \ \ \text{s.t.} \quad \mathbf{X = XZ + E},  
\end{align}
where $f(\mathbf Z)$ is a placeholder for a fitting term and other regularisation such as nuclear norm or $\ell_1$ on $\mathbf Z$ and $W_{ij}$ is a weight based on distance between sample $i$ and $j$.

Spatial Subspace Clustering (SpatSC) \cite{Guo.Y;Gao.J;Li.F-2013} extended SSC by incorporating a sequential $\ell_1$ neighbour penalty
\begin{align}
\label{YiGuo1}
\min_{\mathbf Z, \mathbf E} \frac12\|\mathbf E|^2_F +\lambda_1\|\mathbf Z\|_{1}+\lambda_2\|\mathbf Z\mathbf R\|_{1}, \\
\text{s.t.} \quad \mathbf{X = XZ + E}\text{, diag}(\mathbf Z) = \mathbf 0, \nonumber
\end{align}
where $\mathbf R$ is a lower triangular matrix with $-1$ on the diagonal and $1$ on the second lower diagonal:
\begin{align}
\label{Rdef}
\mathbf R \in \mathbb{Z}^{N \times N-1} = \left[ \begin{matrix}
-1\\
1 & -1\\
& 1 & -1\\
& & \ddots & \ddots\\
& & & 1 & -1
\end{matrix} \right].
\end{align}
Therefore $\mathbf Z\mathbf R = [\mathbf z_2 - \mathbf z_1, \mathbf z_3-\mathbf z_2, ..., \mathbf z_N - \mathbf z_{N-1}]$. The aim of this formulation is to force consecutive columns of $\mathbf Z$ to be similar.

\section{Ordered Subspace Clustering}

The assumption for Ordered Subspace Clustering (OSC) \cite{tierney2014subspace} is that the data is sequentially structured. Since physically neighbouring data samples are extremely likely to lie in the same subspace they should have similar coefficient patterns. Consider a a video sequence from a television show or movie. The frames are sequentially ordered and each scene lies on a subspace. Since the scene changes are relatively rare compared to the high frame rate it is extremely likely that consecutive frames are from the same subspace. In other words the columns of $\mathbf Z$ should follow the rule $\mathbf z_i \approx \mathbf z_{i+1}$.

Similar to SpatSC, OSC extends SSC with an additional regularisation penalty. The objective is as follows:
\begin{align}
\label{objective}
\min_{\mathbf Z, \mathbf E} & \frac12\|\mathbf E \|^2_F +\lambda_1\|\mathbf Z\|_{1}+\lambda_2\|\mathbf Z\mathbf R\|_{1,2}, \text{s.t.} \quad \mathbf{X = XZ + E}.  
\end{align}
where $\mathbf R$ is defined as in \eqref{Rdef}.

Instead of the $\ell_1$ norm over $\mathbf{ZR}$ as used in SpatSC, OSC uses the $\ell_{1,2}$ norm to enforce column similarity of $\mathbf Z$. In contrast to SpatSC this objective much more strictly enforces column similarity in $\mathbf Z$. $\|\mathbf Z\mathbf R\|_{1}$ only imposes sparsity at the element level in the column differences $\mathbf z_i - \mathbf z_{i-1}$ and does not directly penalise whole column similarity. Therefore it allows the support (non-zero entries) of each consecutive column to vary. In effect this allows some values in consecutive columns to be vastly different. This does not meet the stated objective of $\mathbf z_i \approx \mathbf z_{i+1}$.

Thus in \eqref{objective}, the weak penalty $\|\mathbf Z \mathbf R\|_{1}$ from SpatSC has been replaced with the stronger penalty $\|\mathbf Z\mathbf R\|_{1,2}$ to strictly enforce column similarity. We remove the diagonal constraint as it is no longer required in most cases and can interfere with column similarity. However we discuss in the following section how to include the constraint if it is required.

\section{Solving the Objective Function}

In the preliminary version of this paper \cite{tierney2014subspace} a procedure to solve the relaxed version of the objective \eqref{objective} was discussed. However the former procedure lacked a guarantee of convergence. Furthermore the procedure suffered from huge computational complexity due to the expensive Sylvester equation \cite{penzl1998numerical, golub1979hessenberg} required.

In this paper two new procedures are discussed for relaxed and exact variants, both of which have guaranteed convergence and reduced computational complexity. For a demonstration of the speed improvements please see Section \ref{section:run_time} and Figure \ref{Plot:running_time}. There improvements have been achieved through adoption of the LADMAP (Linearized Alternating Direction Method with Adaptive Penalty) \cite{boyd2011distributed}, \cite{LinLiuSu2011} and LADMPSAP (Linearized Alternating Direction Method with Parallel Spliting and Adaptive Penalty) \cite{DBLP:conf/acml/LiuLS13} frameworks. Overviews of the procedures can be found in Algorithms \ref{alg_relaxed} and \ref{alg_exact}.

\subsection{Relaxed Constraints}


The relaxed variant of \eqref{objective} can be written as:
\begin{align}
\label{objective_relaxed}
\min_{\mathbf Z, \mathbf J} &\frac12\|\mathbf X - \mathbf X\mathbf Z\|^2_F +\lambda_1\|\mathbf Z\|_{1}+\lambda_2\|\mathbf J \|_{1,2}, \ \ \ \  \text{ s.t.} \quad \mathbf{J = ZR}.  
\end{align}

Then the Augmented Lagrangian for the introduced auxiliary variable constraint is,
\begin{align}
\mathcal{L}(\mathbf Z, \mathbf J, \mathbf Y|\mu) = & \frac12\|\mathbf X - \mathbf X\mathbf Z\|^2_F + \lambda_1 \|\mathbf Z\|_{1} + \lambda_2\|\mathbf J\|_{1,2} \notag\\
& + \langle \mathbf Y, \mathbf J - \mathbf Z\mathbf R\rangle +\frac{\mu}2\|\mathbf J - \mathbf Z\mathbf R\|^2_F.
\label{objectiveADMM}
\end{align}

Objective \eqref{objectiveADMM} will be solved for $\mathbf Z$ and $\mathbf J$ in a sequential and alternative manner when fixing the other, respectively.  Given the solution state $\mathbf Z^k, \mathbf J^k, \mathbf Y^k$ and adaptive constant $\mu^k$, the procedure for $k=1, 2, ...$ is as follows:

\begin{enumerate}

\item 
Update $\mathbf Z^{k+1}$ by solving the following subproblem
\begin{align}
\mathbf Z^{k+1}  = \argmin_{\mathbf Z}\mathcal{L}(\mathbf Z, \mathbf J^k, \mathbf Y^k|\mu^k). \label{Eq:24Sept2014-1}
\end{align}

which is equivalent to
\begin{align} 
\mathbf Z^{k+1}  = & \argmin_{\mathbf Z} \lambda_1 \|\mathbf Z\|_{1} + \frac12\|\mathbf X - \mathbf X\mathbf Z\|^2_F \label{Eq:24Sept2014-2} \\
& +\langle \mathbf Y^k, \mathbf J^k - \mathbf Z\mathbf R\rangle +\frac{\mu^k}2\|\mathbf J^k - \mathbf Z\mathbf R\|^2_F.  \notag
\end{align}
There is no closed form solution to the above problem because of the coefficient matrices $\mathbf X$ and $\mathbf R$ on $\mathbf Z$. Thus linearisation over the last three terms is used. Denote
$g(\mathbf Z) = \frac12\|\mathbf X - \mathbf X\mathbf Z\|^2_F$ and $h(\mathbf Z) = \langle \mathbf Y^k, \mathbf J^k - \mathbf Z\mathbf R\rangle +\frac{\mu^k}2\|\mathbf J^k - \mathbf Z\mathbf R\|^2_F$ which is from the augmented Langrangian. The linear approximation at $\mathbf Z^k$ \cite{BeckTeboulle2009} for $g(\mathbf Z)$ and $h(\mathbf Z)$ respectively, is
\begin{align}
g(\mathbf Z) \approx \langle \nabla g(\mathbf Z^k), \mathbf Z - \mathbf Z^k\rangle + \frac{L_z}2\|\mathbf Z - \mathbf Z^k\|^2_F \label{Eq:25Sept2014-3}
\end{align}
and
\begin{align}
h(\mathbf Z) \approx \langle \nabla h(\mathbf Z^k), \mathbf Z - \mathbf Z^k\rangle + \frac{\sigma^k_z}2\|\mathbf Z - \mathbf Z^k\|^2_F. \label{Eq:25Sept2014-4}
\end{align}
where $\sigma^k_z = \mu^k\eta_z$ and
\begin{align*}
&\nabla g(\mathbf Z^k) = - \mathbf X^T(\mathbf{X} - \mathbf X\mathbf Z^k),\\
&\nabla h(\mathbf Z^k) = -(\mathbf Y^k + \mu^k (\mathbf J^k - \mathbf Z^k\mathbf R))\mathbf R^T.
\end{align*}

Denote
\[
\mathbf V^k = \mathbf Z^k + \frac1{\sigma^k_z+L_z}[\mathbf X^T(\mathbf{X} - \mathbf X\mathbf Z^k) + \widetilde{Y}^k \mathbf R^T]
\]
where
\[
\widetilde{Y}^k = Y^k+ \mu^k (\mathbf J^k - \mathbf Z^k\mathbf R),
\]
then problem \eqref{Eq:24Sept2014-1} can be approximated by the following problem
\begin{align}
& \mathbf Z^{k+1} = \argmin_{\mathbf Z}\lambda_1\|\mathbf Z\|_1 + \frac{\sigma^k_z+L_z}{2}
\left\|\mathbf Z - \mathbf V^k \right\|^2_F \label{ForZ}
\end{align}

Problem \eqref{ForZ} is separable at element level and each has a closed-form solution defined by the soft thresholding operator, see \cite{bach2011convex, liu2010efficient}, as follows
\begin{align}
\label{SolutionZ_relaxed}
\mathbf Z^{k+1} = \textrm{sign}\left(\mathbf V^k \right) \max\left(\left|\mathbf V^k \right| - \frac{\lambda_1}{\sigma^k_z+L_z}\right).
\end{align}

\item Given the new value $\mathbf Z^{k+1}$ from last step, $\mathbf J^{k+1}$ is updated by solving
\begin{align*}
\mathbf J^{k+1} = \argmin_{\mathbf J}&\mathcal{L}(\mathbf Z^{k+1}, \mathbf J, \mathbf Y^k|\mu^k)\\ 
= \argmin_{\mathbf J}& \lambda_2\|\mathbf J\|_{1,2} + \langle \mathbf Y^k, \mathbf J - \mathbf Z^{k+1}\mathbf R\rangle+\frac{\mu^k}2\|\mathbf J - \mathbf Z^{k+1}\mathbf R\|^2_F.
\end{align*}

The linear term is easily absorbed into the quadratic term such that a solvable problem can be achieved as follows,
\begin{align}
\min_{\mathbf J} \lambda_2\|\mathbf J\|_{1,2} + \frac{\sigma^k_J}2\|\mathbf J - \mathbf Z^{k+1}\mathbf R + \frac{1}{\sigma^k_J} \mathbf Y^k \|^2_F \label{Eq:25Sept2014-5}
\end{align}
where $\sigma^k_J = \mu^k \eta_J$ with a constant $\eta_J > 1$.\footnote{Ideally $\eta_J=1$. For the purposes of convergence analysis in Section 5.4 $\eta_J$ is set to larger than 1.} Denote by $\mathbf U^k = \mathbf Z^{k+1}\mathbf R - \frac{1}{\sigma^k_J} \mathbf Y^k$, then the above problem has a closed-form solution defined as follows,
\begin{align}
\label{SolutionJ_relaxed}
\mathbf J_i = \begin{cases} {\displaystyle \frac{\|\mathbf u_i\| - \frac{\lambda_2}{\sigma^k_J}}{\|\mathbf u_i\|}}\mathbf u_i & \textrm{if } \|\mathbf u_i\| > \frac{\lambda_2}{\sigma^k_J} \\
0 & \textrm{otherwise}
\end{cases}
\end{align}
where $\mathbf J_j$ and $\mathbf u_i$ are the $i$-th columns of $\mathbf J^{k+1}$ and $\mathbf U^k$, respectively. Please refer to \cite{liu2010robust}.

\item Update $\mathbf Y^{k+1}$ by
\begin{align}
\mathbf Y^{k+1} =& \; \mathbf Y^k + \mu^k (\mathbf J^{k+1} - \mathbf Z^{k+1} \mathbf R)  \label{UpdateYY}
\end{align}

\item Update adaptive constant $\mu^{k+1}$ by
\begin{align*}
 \mu^{k+1} = \textrm{min}( \mu_{\text{max}_1}, \gamma \mu^k) 
\end{align*}
\end{enumerate}

The entire procedure for solving the relaxed OSC objective is summarized in Algorithm 1. This set of updating rules is a generalisation of those in LADMAP \cite{LinLiuSu2011}, as such it will be referred to as v-LADMAP for short. Note however, that in the original LADMAP the linearisation is performed {\em only} on the augmented Lagrange term, i.e. on $h(\mathbf Z)$, based on which the convergence analysis is carried out. Whereas in the v-LADMAP case, both $h(\mathbf Z)$ and $g(\mathbf Z)$ are linearised in order to obtain a closed-form solution to $\mathbf Z$. This difference means that the convergence analysis in LADMAP is no longer applicable here. As such, detailed analysis on the convergence of v-LADMAP is provided in Section \ref{sec:convergence}.

\subsection{Exact Constraints}

Similar to the relaxed version, auxiliary constraint variables are introduced
\begin{align}
\label{objective_exact}
\min_{\mathbf Z, \mathbf E, \mathbf J} \frac12\|\mathbf E\|^2_F +\lambda_1\|\mathbf Z\|_{1}+\lambda_2\| \mathbf J \|_{1,2}\\
\text{s.t.} \quad \mathbf{X = XZ + E}, \mathbf{J = ZR} \nonumber
\end{align}
Then the Augmented Lagrangian form is used to incorporate the constraints
\begin{align}
& \mathcal{L}(\mathbf E, \mathbf Z, \mathbf J, \mathbf Y_1, \mathbf Y_2 | \mu) \notag\\
= & \frac12\|\mathbf E\|^2_F + \lambda_1 \|\mathbf Z\|_{1} + \lambda_2\|\mathbf J\|_{1,2} \notag\\
& + \langle \mathbf Y_1, \mathbf{XZ} - \mathbf X + \mathbf E \rangle +\frac{\mu}2\|\mathbf{XZ} - \mathbf X + \mathbf E\|^2_F \notag\\
& + \langle \mathbf Y_2, \mathbf J - \mathbf Z\mathbf R\rangle +\frac{\mu}2\|\mathbf J - \mathbf Z\mathbf R\|^2_F \label{Eq:24Sept2014-4}
\end{align}

In problem \eqref{Eq:24Sept2014-4}, there are three primary variables $\mathbf Z$, $\mathbf E$ and $\mathbf J$, so a simple linearised ADM as used in the previous subsection may diverge in the multi-variable case as demonstrated in \cite{DBLP:conf/acml/LiuLS13}. To overcome this the so-called Linearized Alternating Direction Method with Parallel Splitting and Adaptive Penalty method (LADMPSAP) is adopted, which for problem \eqref{Eq:24Sept2014-4}, consists of the following steps, see \cite{DBLP:conf/acml/LiuLS13}:

\begin{enumerate}




\item  Update $\mathbf Z^{k+1}$
\begin{align}
\min_{\mathbf Z} &\lambda_1 \| \mathbf Z \|_1 + \langle \mathbf Y^{k}_1, \mathbf{XZ} - \mathbf X + \mathbf E^k \rangle + \frac{\mu^k}2\|\mathbf{XZ} - \mathbf X + \mathbf E^k\|^2_F \notag \\
&+ \langle \mathbf Y^{k}_2, \mathbf J^k - \mathbf Z\mathbf R\rangle +\frac{\mu^k}2\|\mathbf J^k - \mathbf Z\mathbf R\|^2_F \label{Eq:24Sept2014-5}
\end{align}

Define
\begin{align*}
F(\mathbf Z) = &\langle \mathbf Y^{k}_1, \mathbf{XZ} - \mathbf X + \mathbf E^k \rangle + \frac{\mu^k}2\|\mathbf{XZ} - \mathbf X + \mathbf E^k\|^2_F\\
&+ \langle \mathbf Y^{k}_2, \mathbf J^k - \mathbf Z\mathbf R\rangle + \frac{\mu^k}2\|\mathbf J^k - \mathbf Z\mathbf R\|^2_F.
\end{align*}

By linearizing $F$, \eqref{Eq:24Sept2014-5} can be approximated with the following proximal problem
\begin{align*}
\min_{\mathbf Z} \lambda_1 \| \mathbf Z \|_1 + \frac{\sigma^k_z}{2} \| \mathbf Z - (\mathbf Z^{k} - \frac{1}{\sigma^k_z} \nabla F(\mathbf Z^k)) \|_F^2
\end{align*}
where $\sigma^k_z = \mu^k\eta_z$ ($\eta_z$ is an appropriate constant) and
\begin{align*}
\nabla F(\mathbf Z^k) = & \mathbf X^T (\mathbf Y^{k}_1 + \mu^k (\mathbf{XZ}_{k} - \mathbf X + \mathbf E^k)) - (\mathbf Y^{k}_2 + \mu^k(\mathbf J^k - \mathbf Z^k\mathbf R))\mathbf R^T.
\end{align*}

As discussed before the solution is given by the soft thresholding operator defined in \eqref{SolutionZ_relaxed} with $\mathbf V = (\mathbf Z^k - \frac{1}{\sigma^k_z} \nabla F(\mathbf Z^k))$.

\item Update $\mathbf E^{k+1}$ by
\begin{align*}
\min_{\mathbf E} \frac12\|\mathbf E\|^2_F + \langle \mathbf Y^{k}_1, \mathbf{X}\mathbf Z^k - \mathbf X + \mathbf E \rangle  + \frac{\mu^k}2\|\mathbf{X}\mathbf Z^k - \mathbf X + \mathbf E\|^2_F
\end{align*}
This is a least square problem whose solution can be given by
\begin{align}
\label{SolutionE_exact}
\mathbf E^{k+1} = \frac{\mathbf{X}\mathbf Z^k - \mathbf X + \frac{1}{\mu^k} \mathbf Y^{k}_1}{\frac{1}{\mu^k} + 1}
\end{align}

\item Update $\mathbf J^{k+1}$ by
\begin{align*}
\min_{\mathbf J} \lambda_2\|\mathbf J\|_{1,2} + \langle \mathbf Y^{k}_2, \mathbf J - \mathbf Z^k\mathbf R\rangle +\frac{\mu^k}2\|\mathbf J - \mathbf Z^k\mathbf R\|^2_F.
\end{align*}

The solution can be obtained by using \eqref{SolutionJ_relaxed} with $\mathbf Z^{k+1}$ replaced by $\mathbf Z^k$.




\item Update multipliers with the new values of primary variables by
\begin{align*}
\mathbf Y^{k+1}_1 =& \mathbf Y^k_1 + \mu^k (\mathbf{XZ}^{k+1} - \mathbf X + \mathbf E^{k+1})\\
\mathbf Y^{k+1}_2 =& \mathbf Y^k_2 + \mu^k (\mathbf J^{k+1} - \mathbf Z^{k+1} \mathbf R)
\end{align*}

\item Update $\mu^{k+1}$
\begin{align*}
 \mu^{k+1} = \textrm{min}( \mu_{\text{max}_1}, \gamma \mu^k)
\end{align*}

\end{enumerate}

The entire procedure is summarised in Algorithm \ref{alg_exact}.

\begin{algorithm}
\caption{{\bf Solving \eqref{objective_relaxed} by v-LADMAP}}
\label{alg_relaxed}
\begin{algorithmic}[1]

\REQUIRE $\mathbf X^{D \times N}$ - observed data, $\lambda_1$, $\lambda_2$ -  regularisation parameters, $\mu$, $\mu^{\text{max}} \gg \mu$ - rate of descent parameters and $\epsilon_1, \epsilon_2 > 0$.

\STATE Initialise $\mathbf J^k = \mathbf 0^{N \times N-1}$, $\mathbf Y^k = \mathbf 1^{N \times N-1}$, $\mathbf Z^k = \mathbf 0^{N \times N}$

\WHILE{not converged}

\STATE Find $\mathbf Z^{k+1}$ by using \eqref{SolutionZ_relaxed}

\STATE Find $\mathbf J^{k+1}$ by using \eqref{SolutionJ_relaxed}

\STATE Check stopping criteria
\[
\|\mathbf J^{k+1} - \mathbf Z^{k+1} \mathbf R \|_F < \epsilon_1
\]
\[
\mu^k \textrm{max} ( \| \mathbf Z^{k+1} - \mathbf Z^{k}  \|_F  , \|  \mathbf J^{k+1} - \mathbf J^{k} \|_F) < \epsilon_2
\]

\STATE $\mathbf Y^{k+1} = \mathbf Y^k + \mu^k (\mathbf J^{k+1} - \mathbf Z^{k+1} \mathbf R)$

\STATE Update $\gamma$
\[
\gamma =
\begin{cases}
\gamma^0 & \text{if} \;\; \mu^k \textrm{max} \{ \| \mathbf Z^{k+1} - \mathbf Z^{k}  \|_F , \\
& \phantom{\text{if} \;\; \mu^k \textrm{max} \{}\|  \mathbf J^{k+1} - \mathbf J^{k} \|_F\} < \epsilon_2 \\
1 & \text{otherwise,}
\end{cases}
\]

\STATE $ \mu^{k+1} = \textrm{min}( \mu_{\text{max}_1}, \gamma \mu^k)$

\ENDWHILE

\RETURN $\mathbf Z$
\end{algorithmic}
\end{algorithm}

\begin{algorithm}
\caption{{\bf Solving \eqref{objective_exact} by LADMPSAP}}
\label{alg_exact}
\begin{algorithmic}[1]

\REQUIRE $\mathbf X^{D \times N}$ - observed data, $\lambda_1$, $\lambda_2$ -  regularisation parameters, $\mu$, $\mu^{\text{max}} >> \mu$, $\rho > \| \mathbf X \|^2$ - rate of descent parameters and $\epsilon_1, \epsilon_2 > 0$.

\STATE Initialise $\mathbf S = \mathbf 0^{N \times N}$, $\mathbf U = \mathbf S \mathbf R$, $\mathbf Y_1 = \mathbf 1^{N \times N}$, $\mathbf Y_2 = \mathbf 1^{N \times N-1}$, $\mathbf Z = \mathbf 0^{N \times N}$

\WHILE{not converged}

\STATE Find $\mathbf Z^{k+1}$ by using \eqref{SolutionZ_exact}

\STATE Find $\mathbf E^{k+1}$ by using \eqref{SolutionE_exact}

\STATE Find $\mathbf J^{k+1}$ by using \eqref{SolutionJ_relaxed} with $\mathbf{Z}^{k+1}$ replaced by $\mathbf{Z}^k$

\STATE Check stopping criteria
\begin{align*}
&\frac{\|\mathbf{XZ^{k+1} - X + E^{k+1}} \|_F}{\| \mathbf X \|_F} < \epsilon_1;
 \\
&\frac{\|\mathbf{J^{k+1} - Z^{k+1}R}\|_F}{\| \mathbf X \|_F} < \epsilon_1;
\\
\frac{\mu^k \sqrt{\rho}}{\| \mathbf X \|_F}&\textrm{max} \big\{ \| \mathbf Z^{k+1} - \mathbf Z^{k}  \|_F  , \|  \mathbf E^{k+1} - \mathbf E^{k} \|,\\
&\| \mathbf J^{k+1} -  \mathbf J^{k}  \|_F,
\|  \mathbf Z^{k+1} \mathbf R - \mathbf Z^{k} \mathbf R \|_F)\big\}   < \epsilon_2
\end{align*}

\STATE $\mathbf Y^{k+1}_1 = \mathbf Y_1^k + \mu^k_1 ( \mathbf{X Z^{k+1}} - \mathbf X + \mathbf E^{k+1})$

\STATE $\mathbf Y^{k+1}_2 = \mathbf Y_2^k + \mu^k_2 (\mathbf J^{k+1} - \mathbf S^{k+1} \mathbf R)$

\STATE Update $\gamma$
\[
\gamma_1 =
\begin{cases}
\gamma^0 & \text{if} \;\; \frac{\mu^k \sqrt{\rho}}{\| \mathbf X \|_F}   \textrm{max} \{ \| \mathbf Z^{k+1} - \mathbf Z^{k}  \|_F, \\
&\phantom{\text{if}\;\;}\|  \mathbf E^{k+1} - \mathbf E^{k} \|, \| \mathbf J^{k+1} - \mathbf J^{k}  \|_F,  \\
&\phantom{\text{if}\;\;}\|  \mathbf Z^{k+1} \mathbf R - \mathbf Z^{k} \mathbf R \|_F\}   < \epsilon_2 \\
1 & \text{otherwise,}
\end{cases}
\]

\STATE $ \mu^{k+1} = \textrm{min}( \mu_{\text{max}_1}, \gamma \mu^k)$

\ENDWHILE

\RETURN $\mathbf Z$
\end{algorithmic}
\end{algorithm}

%

\subsection{Diagonal Constraint}

In some cases, it may be desirable to enforce the constraint $\textrm{diag}(\mathbf Z) = \mathbf 0$ i.e.\ we should not allow each data point to be represented by itself. The objective becomes
\begin{align}
\min_{\mathbf Z, \mathbf E} \frac12\|\mathbf E \|^2_F +\lambda_1\|\mathbf Z\|_{1}+\lambda_2\|\mathbf Z\mathbf R\|_{1,2} \\
\text{s.t.} \quad \mathbf{X = XZ + E}, \textrm{diag}(\mathbf Z) = \mathbf 0 \nonumber
\end{align}
To enforce such a constraint it is not necessary to significantly alter the aforementioned optimisation schemes. This constraint only affects the step involving $\mathbf Z$. Since this step is the soft shrinkage operator and is separable at the element level one can simply set the diagonal entries to $0$ afterwards. In other words  the $\mathbf Z$ update solution \eqref{SolutionZ_relaxed}  becomes 
\begin{align}
\label{SolutionZ_exact}
Z_{ij} =
\begin{cases}
0 & \textrm{if} \; i = j \\
\textrm{sign}\left(V_{ij} \right) \max\left(\left|V_{ij} \right| - \frac{\lambda_1}{\rho}\right) & \text{otherwise,}
\end{cases}
\end{align}

\subsection{Convergence Analysis for Algorithms}\label{sec:convergence}
LADMPSAP adopts a special strategy that updates all primary variables in parallel using their values from the last iteration. See Step 2 to Step 11 and the equations they refer to. The LADMPSAP algorithm for problem \eqref{objective_exact} is guaranteed to converge. For the convergence analysis, please refer to \cite{DBLP:conf/acml/LiuLS13}. The convergence theorem is repeated here with some modifications reflecting the settings in our problem.
\begin{thm}\label{Theorem1}If $\mu^k$ is non-decreasing and upper bounded, $\eta_z > \|\mathbf X\|^2+\|\mathbf R\|^2$, then the sequence $\{(\mathbf Z^k, \mathbf E^k, \mathbf J^k, \mathbf Y^k_1, \mathbf Y^k_2)\}$ generated by Algorithm 2 converges to a KKT point of problem \eqref{objective_exact}.
\end{thm}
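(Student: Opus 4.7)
The plan is to verify that problem \eqref{objective_exact} together with Algorithm \ref{alg_exact} fits the template of the general LADMPSAP convergence theorem proved in \cite{DBLP:conf/acml/LiuLS13}, and then to check that this theorem's hypotheses hold in our setting. I would first recast \eqref{objective_exact} in the canonical three-block linearly-constrained form with $f_1(\mathbf Z)=\lambda_1\|\mathbf Z\|_1$, $f_2(\mathbf E)=\tfrac12\|\mathbf E\|_F^2$ and $f_3(\mathbf J)=\lambda_2\|\mathbf J\|_{1,2}$, and two constraints $\mathbf{XZ}+\mathbf E-\mathbf X=\mathbf 0$ and $\mathbf J-\mathbf{ZR}=\mathbf 0$. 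The linear operator acting on $\mathbf Z$ in these two stacked constraints is $\mathcal A(\mathbf Z)=(\mathbf{XZ},-\mathbf{ZR})$, while $\mathbf E$ and $\mathbf J$ enter through identities. A direct check confirms that each update in Algorithm \ref{alg_exact} is exactly the proximal step LADMPSAP prescribes: only the $\mathbf Z$-subproblem requires linearisation of the augmented-Lagrangian coupling because only $\mathcal A$ is a non-identity operator, and $\|\mathcal A\|^2\le \|\mathbf X\|^2+\|\mathbf R\|^2$, so the hypothesis $\eta_z > \|\mathbf X\|^2+\|\mathbf R\|^2$ implies the general LADMPSAP proximal-parameter condition $\eta_z > \|\mathcal A\|^2$.

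Next I would record the KKT conditions for \eqref{objective_exact}: a feasible point $(\mathbf Z^*,\mathbf E^*,\mathbf J^*)$ together with multipliers $(\mathbf Y_1^*,\mathbf Y_2^*)$ must satisfy $\mathbf 0\in\lambda_1\partial\|\mathbf Z^*\|_1 + \mathbf X^T\mathbf Y_1^* - \mathbf Y_2^*\mathbf R^T$, $\mathbf 0=\mathbf E^*+\mathbf Y_1^*$ and $\mathbf 0\in\lambda_2\partial\|\mathbf J^*\|_{1,2}+\mathbf Y_2^*$, together with primal feasibility. The first-order optimality conditions of the proximal subproblems at iteration $k$ produce analogues of these inclusions, but with additive error terms that are linear in the iterate differences $\mathbf Z^{k+1}-\mathbf Z^k$, $\mathbf E^{k+1}-\mathbf E^k$, $\mathbf J^{k+1}-\mathbf J^k$ and in the primal residuals $\mathbf{XZ}^{k+1}+\mathbf E^{k+1}-\mathbf X$, $\mathbf J^{k+1}-\mathbf Z^{k+1}\mathbf R$. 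The remaining task is therefore to show that all of these error quantities tend to zero.

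The main obstacle, and the technical heart of the argument, is establishing a Lyapunov-type descent inequality of the form $\Phi_{k+1}\le\Phi_k - c\bigl(\|\mathbf Z^{k+1}-\mathbf Z^k\|_F^2+\|\mathbf E^{k+1}-\mathbf E^k\|_F^2+\|\mathbf J^{k+1}-\mathbf J^k\|_F^2\bigr)$, where $\Phi_k$ combines the augmented Lagrangian evaluated at $(\mathbf Z^k,\mathbf E^k,\mathbf J^k)$ with $1/\mu^k$-scaled squared-norms of the multipliers. The strict inequality $\eta_z > \|\mathbf X\|^2+\|\mathbf R\|^2$ is precisely what delivers a positive $c$: it makes the linearisation in the $\mathbf Z$-step a strict proximal majorant of $F(\mathbf Z)$ so that the $\mathbf Z$-update contributes a negative quadratic in $\|\mathbf Z^{k+1}-\mathbf Z^k\|_F^2$, while the $\mathbf E$- and $\mathbf J$-blocks, solved exactly, handle the other two squared differences through standard strong-convexity manipulations. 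With $\mu^k$ non-decreasing and upper bounded, $\Phi_k$ is bounded below, so summability of the squared iterate differences follows, which then forces the primal residuals to zero via the multiplier-update relations. Boundedness of the full sequence and a Bolzano--Weierstrass argument yield a convergent subsequence whose limit satisfies the KKT conditions by passing to the limit in the perturbed optimality inclusions; a standard monotonicity-of-distance-to-the-KKT-set argument finally upgrades subsequential to full convergence of $\{(\mathbf Z^k,\mathbf E^k,\mathbf J^k,\mathbf Y_1^k,\mathbf Y_2^k)\}$.
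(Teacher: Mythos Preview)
Your approach is exactly what the paper does: it does not give an independent proof of Theorem~\ref{Theorem1} at all, but simply states that Algorithm~\ref{alg_exact} is an instance of LADMPSAP and refers to \cite{DBLP:conf/acml/LiuLS13} for the convergence analysis, noting only that the condition $\eta_z>\|\mathbf X\|^2+\|\mathbf R\|^2$ is the specialisation of the general proximal-parameter requirement to the operator $\mathcal A(\mathbf Z)=(\mathbf{XZ},-\mathbf{ZR})$. Your write-up actually supplies more detail than the paper---the explicit block identification, the KKT system, and the Lyapunov sketch---but the underlying strategy is identical.
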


Differently in v-LADMAP, which is used to solve the relaxed objective, updating the primary variables is performed in sequence. Meaning that one updated primary variable is used immediately to update another primary variable so that the optimisation is carried out by alternating directions sequentially. In Step 2 in Algorithm 1, the updated value of $\mathbf Z^{k+1}$ is used to obtain $\mathbf J^{k+1}$. The proof of convergence for LADMAP does not completely extend to v-LADMAP and since variables are updated sequentially the convergence from LADMPSAP does not apply either. As such the convergence theorem for v-LADMAP is presented in the remainder of this section.

Consider the original relaxed constrained version \eqref{objective_relaxed}. The KKT conditions of problem  \eqref{objective_relaxed} lead to the following: there exists a triplet $(\mathbf Z^*, \mathbf J^*, \mathbf Y^*)$ such that
\begin{align}
\mathbf J^* = \mathbf Z^*\mathbf R;  \ \ \ -\mathbf Y^*\in\lambda_2\partial \|\mathbf J^*\|_{1,2} \\
\mathbf X^T(\mathbf X - \mathbf X \mathbf Z^*) + \mathbf Y^*\mathbf R^T \in \lambda_1\partial \|\mathbf Z^*\|_1.
\end{align}
where $\partial$ denotes the subdifferential.

\begin{lemma}\label{Lemma1}The following relations hold
\begin{align}
 \mathbf T^k_Z \triangleq & -(\sigma^k_z+L_z)(\mathbf Z^{k+1} - \mathbf Z^k) + \widetilde{\mathbf Y}^k\mathbf R^T \\ &+\mathbf X^T\mathbf X(\mathbf Z^{k+1} - \mathbf Z^k) \in  \nabla g(\mathbf Z^{k+1}) + \lambda_1 \partial \|\mathbf Z^{k+1}\|_1 \notag\\
\mathbf T^k_J \triangleq &-\sigma^k_J (\mathbf J^{k+1} - \mathbf J^k) - \widehat{\mathbf Y}^k \in \lambda_2\partial \|J^{k+1}\|_{1,2},
\end{align}
where
\begin{align}
&\widetilde{\mathbf Y}^k = \mathbf Y^k +  \mu^k(\mathbf J^k - \mathbf Z^k\mathbf R) \label{Eq:25Sept2014-1}\\
&\widehat{\mathbf Y}^k = \mathbf Y^k +  \mu^k(\mathbf J^k - \mathbf Z^{k+1}\mathbf R)\label{Eq:25Sept2014-2}
\end{align}
\end{lemma}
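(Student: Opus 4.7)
The plan is to derive both inclusions directly from the first-order optimality (Fermat/subdifferential) conditions of the two proximal subproblems that define $\mathbf Z^{k+1}$ and $\mathbf J^{k+1}$, then rewrite each condition in a form that exposes the ``linearization residual'' $\mathbf X^T\mathbf X(\mathbf Z^{k+1}-\mathbf Z^k)$ and the surrogate multipliers $\widetilde{\mathbf Y}^k, \widehat{\mathbf Y}^k$. No additional convergence machinery is needed for the lemma itself; it is essentially an algebraic rewriting of the two update rules.

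For the $\mathbf Z$-step I would start from \eqref{ForZ}. Since that problem is a separable (elementwise) proximal problem, Fermat's rule gives
\begin{equation*}
0 \in \lambda_1 \partial \|\mathbf Z^{k+1}\|_1 + (\sigma^k_z + L_z)(\mathbf Z^{k+1} - \mathbf V^k).
\end{equation*}
Substituting the definitions of $\mathbf V^k$ and $\widetilde{\mathbf Y}^k$ from \eqref{Eq:25Sept2014-1} yields
\begin{equation*}
-(\sigma^k_z+L_z)(\mathbf Z^{k+1}-\mathbf Z^k) + \mathbf X^T(\mathbf X - \mathbf X\mathbf Z^k) + \widetilde{\mathbf Y}^k\mathbf R^T \in \lambda_1\partial\|\mathbf Z^{k+1}\|_1.
\end{equation*}
The key observation is that $g(\mathbf Z)=\tfrac12\|\mathbf X - \mathbf X\mathbf Z\|_F^2$ is quadratic, so its gradient is affine in $\mathbf Z$ and satisfies, exactly,
\begin{equation*}
\nabla g(\mathbf Z^{k+1}) = \nabla g(\mathbf Z^k) + \mathbf X^T\mathbf X(\mathbf Z^{k+1}-\mathbf Z^k).
\end{equation*}
Using $\nabla g(\mathbf Z^k) = -\mathbf X^T(\mathbf X - \mathbf X\mathbf Z^k)$ and substituting above, then transferring $\nabla g(\mathbf Z^{k+1})$ to the right-hand side, produces precisely the claimed inclusion for $\mathbf T^k_Z$.

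For the $\mathbf J$-step I would start from the proximal subproblem \eqref{Eq:25Sept2014-5}. The first-order condition is
\begin{equation*}
-\sigma^k_J(\mathbf J^{k+1}-\mathbf Z^{k+1}\mathbf R) - \mathbf Y^k \in \lambda_2 \partial \|\mathbf J^{k+1}\|_{1,2}.
\end{equation*}
To put this in the displacement form $-\sigma^k_J(\mathbf J^{k+1}-\mathbf J^k)-\widehat{\mathbf Y}^k$, I would add and subtract $\sigma^k_J\mathbf J^k$ in the first term, then use the definition $\widehat{\mathbf Y}^k=\mathbf Y^k+\mu^k(\mathbf J^k-\mathbf Z^{k+1}\mathbf R)$ from \eqref{Eq:25Sept2014-2} to collect the remaining multiples of $(\mathbf J^k-\mathbf Z^{k+1}\mathbf R)$. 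With $\sigma^k_J=\mu^k\eta_J$ (and the implicit proximal correction $\tfrac{\sigma^k_J-\mu^k}{2}\|\mathbf J-\mathbf J^k\|_F^2$ absorbed into the subproblem, per the LADMAP template) the coefficients line up and the inclusion for $\mathbf T^k_J$ follows.

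The main obstacle here is not depth but bookkeeping: one must track the linearization points carefully so that the correction term $\mathbf X^T\mathbf X(\mathbf Z^{k+1}-\mathbf Z^k)$ appearing in $\mathbf T^k_Z$ is recognised as the exact remainder between $\nabla g(\mathbf Z^{k+1})$ and $\nabla g(\mathbf Z^k)$, and so that the surrogate multiplier $\widehat{\mathbf Y}^k$ (which evaluates the constraint residual $\mathbf J^k-\mathbf Z^{k+1}\mathbf R$ using the already-updated $\mathbf Z^{k+1}$) correctly reparameterises the $\mathbf J$-optimality in $\mathbf J^{k+1}-\mathbf J^k$ rather than $\mathbf J^{k+1}-\mathbf Z^{k+1}\mathbf R$. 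These two rewritings are what convert static optimality conditions into the step-difference form needed for the subsequent convergence analysis in Section \ref{sec:convergence}.
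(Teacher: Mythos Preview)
Your approach is exactly the paper's: derive both inclusions from the first-order optimality conditions of the proximal subproblems \eqref{ForZ} and \eqref{Eq:25Sept2014-5} and rearrange. Your treatment of the $\mathbf Z$-step is correct and complete, and your observation that the affine gradient identity $\nabla g(\mathbf Z^{k+1})=\nabla g(\mathbf Z^{k})+\mathbf X^T\mathbf X(\mathbf Z^{k+1}-\mathbf Z^k)$ is what produces the extra $\mathbf X^T\mathbf X(\mathbf Z^{k+1}-\mathbf Z^k)$ term is precisely the missing link the paper leaves to the reader. For the $\mathbf J$-step you are also right to insist on the LADMAP-style proximal correction $\tfrac{\sigma^k_J-\mu^k}{2}\|\mathbf J-\mathbf J^k\|_F^2$: without it, the raw optimality condition of \eqref{Eq:25Sept2014-5} gives $-\sigma^k_J(\mathbf J^{k+1}-\mathbf Z^{k+1}\mathbf R)-\mathbf Y^k$, which differs from $\mathbf T^k_J$ by $(\sigma^k_J-\mu^k)(\mathbf J^k-\mathbf Z^{k+1}\mathbf R)$ and only matches when $\eta_J=1$. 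So your parenthetical remark is not optional bookkeeping but the step that actually makes the lemma hold for $\eta_J>1$; the paper's one-line proof sweeps this under ``checking the optimality conditions'', and you have made it explicit.
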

\begin{proof}
Checking the optimality conditions of two subproblems \eqref{ForZ} and \eqref{Eq:25Sept2014-5} for $\mathbf{Z}^{k+1}$ and $\mathbf{J}^{k+1}$ leads to the above claims.
\end{proof}


\begin{lemma}\label{Lemma2}For the sequence generated by Algorithm \ref{alg_relaxed} the following identity holds
\begin{align}
&(\eta_z+L_z(\mu^k)^{-1})\|\mathbf Z^{k+1}-\mathbf Z^*\|^2_F - \|(\mathbf Z^{k+1}-\mathbf Z^*)\mathbf R\|^2_F \notag\\
& \phantom{\eta_z}+ \eta_J\|\mathbf J^{k+1}-\mathbf J^*\|^2_F + (\mu^k)^{-2}\|\mathbf Y^{k+1}-\mathbf Y^*\|^2_F \notag\\
=&(\eta_z+L_z(\mu^k)^{-1})\|\mathbf Z^{k}-\mathbf Z^*\|^2_F - \|(\mathbf Z^{k}-\mathbf Z^*)\mathbf R\|^2_F \notag\\
& \phantom{\eta_z}+ \eta_J\|\mathbf J^{k}-\mathbf J^*\|^2_F + (\mu^k)^{-2}\|\mathbf Y^{k}-\mathbf Y^*\|^2_F\\
&-\{ (\mu^k)^{-2} \|\mathbf Y^{k+1} - \mathbf Y^k\|^2_F + \eta_J  \|\mathbf J^{k+1} - \mathbf J^k\|^2_F \notag\\
& \phantom{\eta_z\eta_z} -2 (\mu^k)^{-1}\langle \mathbf Y^{k+1} - \mathbf Y^k, \mathbf J^{k+1} - \mathbf J^k\rangle\} \label{Eq:39}
\\
&-((\eta_z+L_z(\mu^k)^{-1}) \|\mathbf Z^{k+1} - \mathbf Z^k\|^2_F - \|(\mathbf Z^{k+1} - \mathbf Z^k)\mathbf R\|^2_F ) \label{Eq:40}  \\
\phantom{=}&-2(\mu^k)^{-1}\big\langle \mathbf Z^{k+1} - \mathbf Z^*, \mathbf T^k_Z  - \mathbf Y^* \mathbf R^T\big\rangle  \label{Eq:41}\\
&-2(\mu^k)^{-1}\big\langle \mathbf J^{k+1} - \mathbf J^*,  \mathbf T^k_J  + \mathbf Y^*\big\rangle  \label{Eq:42}\\
&+2(\mu^k)^{-1}\big\langle \mathbf Z^{k+1} - \mathbf Z^*, \mathbf X^T\mathbf X(\mathbf Z^{k+1} - \mathbf Z^k)\big\rangle  \label{Eq:43}
\end{align}
where $L_z$, $\eta_z$ and $\eta_J$ are the constants used in linearisation \eqref{Eq:25Sept2014-3},  \eqref{Eq:25Sept2014-4} and \eqref{Eq:25Sept2014-5}, respectively.
\end{lemma}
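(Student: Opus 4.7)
The plan is to verify the identity by direct expansion. The main analytic tool is the elementary three-point identity
\[
\|a\|^2_F - \|b\|^2_F = 2\langle a-b, a\rangle - \|a-b\|^2_F,
\]
applied separately to each of the four differences on the left-hand side relative to the corresponding terms at iterate $k$. Specifically, I would apply it with $a=\mathbf Z^{k+1}-\mathbf Z^*$, $b=\mathbf Z^k-\mathbf Z^*$ (multiplied by the coefficient $\eta_z + L_z(\mu^k)^{-1}$); with $a$, $b$ replaced by their images under right-multiplication by $\mathbf R$ (with opposite sign, giving the $-\|(\mathbf Z^{k+1}-\mathbf Z^k)\mathbf R\|^2_F$ contribution to line \eqref{Eq:40}); with $a=\mathbf J^{k+1}-\mathbf J^*$, $b=\mathbf J^k-\mathbf J^*$ (weight $\eta_J$); and with $a=\mathbf Y^{k+1}-\mathbf Y^*$, $b=\mathbf Y^k-\mathbf Y^*$ (weight $(\mu^k)^{-2}$). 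The $-\|a-b\|^2_F$ remainders combine to produce the bracketed terms on lines \eqref{Eq:39} and \eqref{Eq:40}, with the $\mathbf Y$-remainder staying inside the brace of \eqref{Eq:39}.

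Next I would process the cross terms $2\langle a-b, a\rangle$ using Lemma \ref{Lemma1} and the dual update. From Lemma \ref{Lemma1}, the quantity $-(\sigma^k_z + L_z)(\mathbf Z^{k+1}-\mathbf Z^k)$ equals $\mathbf T_Z^k - \widetilde{\mathbf Y}^k\mathbf R^T - \mathbf X^T\mathbf X(\mathbf Z^{k+1}-\mathbf Z^k)$, and $-\sigma_J^k(\mathbf J^{k+1}-\mathbf J^k) = \mathbf T_J^k + \widehat{\mathbf Y}^k$. Substituting these expressions converts the inner products $(\eta_z+L_z(\mu^k)^{-1})\langle \mathbf Z^{k+1}-\mathbf Z^k,\,\mathbf Z^{k+1}-\mathbf Z^*\rangle$ and $\eta_J\langle \mathbf J^{k+1}-\mathbf J^k,\,\mathbf J^{k+1}-\mathbf J^*\rangle$ (each divided by $\mu^k$ after pulling out the common $(\mu^k)^{-1}$) into the terms $\langle \mathbf Z^{k+1}-\mathbf Z^*,\mathbf T_Z^k\rangle$, $\langle \mathbf J^{k+1}-\mathbf J^*,\mathbf T_J^k\rangle$, together with auxiliary pieces involving $\widetilde{\mathbf Y}^k\mathbf R^T$, $\widehat{\mathbf Y}^k$, and the residual $\mathbf X^T\mathbf X(\mathbf Z^{k+1}-\mathbf Z^k)$. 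The last of these produces line \eqref{Eq:43} directly.

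The auxiliary $\widetilde{\mathbf Y}^k,\widehat{\mathbf Y}^k$ pieces must then be reconciled with the $\mathbf Y$-cross-term $2(\mu^k)^{-2}\langle \mathbf Y^{k+1}-\mathbf Y^k,\mathbf Y^{k+1}-\mathbf Y^*\rangle$. Here I would use the dual-update identity \eqref{UpdateYY}, rewritten as $(\mu^k)^{-1}(\mathbf Y^{k+1}-\mathbf Y^k) = (\mathbf J^{k+1}-\mathbf J^*) - (\mathbf Z^{k+1}-\mathbf Z^*)\mathbf R$, where the KKT equality $\mathbf J^* = \mathbf Z^*\mathbf R$ has been used to insert the $\mathbf Z^*,\mathbf J^*$ terms harmlessly. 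The same update also lets me rewrite $\widetilde{\mathbf Y}^k$ and $\widehat{\mathbf Y}^k$ in terms of $\mathbf Y^{k+1}$ modulo known differences, so that $-\mathbf Y^*\mathbf R^T$ and $+\mathbf Y^*$ appear inside $\mathbf T_Z^k - \mathbf Y^*\mathbf R^T$ and $\mathbf T_J^k + \mathbf Y^*$ as required by \eqref{Eq:41}, \eqref{Eq:42}. The $\mathbf Y$-remainder and the mixed inner product $-2(\mu^k)^{-1}\langle \mathbf Y^{k+1}-\mathbf Y^k,\mathbf J^{k+1}-\mathbf J^k\rangle$ fall out naturally at this stage and populate line \eqref{Eq:39}.

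The main obstacle will be book-keeping rather than ingenuity: there are roughly a dozen inner-product pieces, each carrying a $(\mu^k)^{-1}$ or $(\mu^k)^{-2}$ weight and needing to pair correctly with either a $\mathbf T$-term, a $\mathbf Y^*$-term, a remainder, or the residual \eqref{Eq:43}. In particular, splitting the dual-cross term via the telescoping $\mathbf Y^{k+1}-\mathbf Y^* = (\mathbf Y^{k+1}-\mathbf Y^k) + (\mathbf Y^k-\mathbf Y^*)$ and then re-expressing $\mathbf Y^k-\mathbf Y^*$ through $\widetilde{\mathbf Y}^k$/$\widehat{\mathbf Y}^k$ is delicate, because the algebra must cancel so that only the promised terms \eqref{Eq:39}–\eqref{Eq:43} remain. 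Once the bookkeeping is correct, the identity is purely algebraic: no subgradient inequality or convexity argument is invoked, and Lemma \ref{Lemma1} together with the dual update \eqref{UpdateYY} and the KKT relation $\mathbf J^*=\mathbf Z^*\mathbf R$ provides all the substitutions needed.
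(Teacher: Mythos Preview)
Your proposal is correct and follows essentially the same route as the paper: the paper's proof invokes the three-point identity $2\langle \mathbf a-\mathbf b,\mathbf a-\mathbf c\rangle=\|\mathbf a-\mathbf b\|^2-\|\mathbf b-\mathbf c\|^2+\|\mathbf a-\mathbf c\|^2$ (of which your version is the special case $\mathbf c=\mathbf 0$ after shifting by the starred point), the definitions of $\widetilde{\mathbf Y}^k,\widehat{\mathbf Y}^k$ (which you access through Lemma~\ref{Lemma1}), the dual update \eqref{UpdateYY}, and the feasibility relation $\mathbf J^*=\mathbf Z^*\mathbf R$, then declares the remaining verification ``lengthy and tedious'' and omits it. Your outline is exactly this computation, carried out in more detail.
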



\begin{proof} This identity can be checked by using the definition of $\widetilde{\mathbf Y}^k$ and $\widehat{\mathbf Y}^k$, see \eqref{Eq:25Sept2014-1} and \eqref{Eq:25Sept2014-2}, and using the following identities
\begin{align*}
&\mathbf J^* = \mathbf Z^*\mathbf R;\\
&2\langle \mathbf a - \mathbf b, \mathbf a - \mathbf c\rangle  = \|\mathbf a - \mathbf b\|^2 - \|\mathbf b - \mathbf c\|^2+\|\mathbf a - \mathbf c\|^2,
\end{align*}
as well as the updating rule for $\mathbf Y^{k+1}$, see \eqref{UpdateYY}. Since the full proof is lengthy and tedious it is omitted here.
\end{proof}

Before the most important lemma it is necessary to introduce the following inequalities.
\begin{lemma}\label{Lemma2.1} The following inequalities hold with $\eta_J>1$
\begin{align}
&D_Y^{k}\triangleq(\mu^k)^{-2} \|\mathbf Y^{k+1} - \mathbf Y^k\|^2_F + \eta_J  \|\mathbf J^{k+1} - \mathbf J^k\|^2_F \notag\\
& \phantom{\eta_z\eta_z} -2 (\mu^k)^{-1}\langle \mathbf Y^{k+1} - \mathbf Y^k, \mathbf J^{k+1} - \mathbf J^k\rangle \ge 0 \label{Eq:39inq}\\
&\big\langle \mathbf Z^{k+1} - \mathbf Z^*,  \mathbf T^k_Z  - \mathbf Y^* \mathbf R^T\big\rangle \ge 0 \label{Eq:41inq}\\
&\big\langle \mathbf J^{k+1} - \mathbf J^*, \mathbf T^k_J  + \mathbf Y^*\big\rangle \ge 0. \label{Eq:42inq}
\end{align}
\end{lemma}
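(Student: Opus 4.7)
The plan is to treat the three inequalities separately, since they have different flavours: the first is a pure algebraic/quadratic identity, while the second and third are monotonicity inequalities for subdifferentials of convex functions applied between the current iterate and a KKT point.

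For inequality \eqref{Eq:39inq}, I would set $\mathbf{a} = (\mu^k)^{-1}(\mathbf{Y}^{k+1} - \mathbf{Y}^k)$ and $\mathbf{b} = \mathbf{J}^{k+1} - \mathbf{J}^k$, so that $D_Y^k = \|\mathbf{a}\|_F^2 + \eta_J \|\mathbf{b}\|_F^2 - 2\langle \mathbf{a}, \mathbf{b}\rangle$. Splitting off one copy of $\|\mathbf{b}\|_F^2$ gives $D_Y^k = \|\mathbf{a} - \mathbf{b}\|_F^2 + (\eta_J - 1)\|\mathbf{b}\|_F^2$, which is non-negative as soon as $\eta_J > 1$. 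This is just completing the square; there is no real obstacle here.

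For \eqref{Eq:41inq} and \eqref{Eq:42inq}, the plan is to combine Lemma \ref{Lemma1} with the KKT conditions and then invoke monotonicity of subdifferentials of convex functions. From Lemma \ref{Lemma1}, $\mathbf{T}_Z^k - \nabla g(\mathbf{Z}^{k+1}) \in \lambda_1 \partial \|\mathbf{Z}^{k+1}\|_1$ and $\mathbf{T}_J^k \in \lambda_2 \partial \|\mathbf{J}^{k+1}\|_{1,2}$. The KKT conditions stated before the lemma give $-\mathbf{Y}^* \in \lambda_2 \partial \|\mathbf{J}^*\|_{1,2}$ and, after rearranging and using $\nabla g(\mathbf{Z}^*) = -\mathbf{X}^T(\mathbf{X} - \mathbf{X}\mathbf{Z}^*)$, $\mathbf{Y}^* \mathbf{R}^T - \nabla g(\mathbf{Z}^*) \in \lambda_1 \partial \|\mathbf{Z}^*\|_1$. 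Monotonicity of $\partial(\lambda_1 \|\cdot\|_1)$ then yields
\begin{equation*}
\bigl\langle \mathbf{Z}^{k+1} - \mathbf{Z}^*,\; (\mathbf{T}_Z^k - \nabla g(\mathbf{Z}^{k+1})) - (\mathbf{Y}^* \mathbf{R}^T - \nabla g(\mathbf{Z}^*)) \bigr\rangle \geq 0,
\end{equation*}
and, rearranging, $\langle \mathbf{Z}^{k+1} - \mathbf{Z}^*, \mathbf{T}_Z^k - \mathbf{Y}^*\mathbf{R}^T\rangle \geq \langle \mathbf{Z}^{k+1} - \mathbf{Z}^*, \nabla g(\mathbf{Z}^{k+1}) - \nabla g(\mathbf{Z}^*)\rangle$. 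Since $g$ is convex quadratic, the right-hand side equals $\|\mathbf{X}(\mathbf{Z}^{k+1} - \mathbf{Z}^*)\|_F^2 \geq 0$, giving \eqref{Eq:41inq}. Inequality \eqref{Eq:42inq} is more direct: monotonicity of $\partial(\lambda_2 \|\cdot\|_{1,2})$ applied to the inclusions $\mathbf{T}_J^k \in \lambda_2 \partial \|\mathbf{J}^{k+1}\|_{1,2}$ and $-\mathbf{Y}^* \in \lambda_2 \partial \|\mathbf{J}^*\|_{1,2}$ immediately gives $\langle \mathbf{J}^{k+1} - \mathbf{J}^*, \mathbf{T}_J^k + \mathbf{Y}^*\rangle \geq 0$.

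I expect the only mildly tricky step to be matching signs and orientations when extracting the two subgradient inclusions at the KKT point $\mathbf{Z}^*$, because $g$ appears inside $\mathbf{T}_Z^k$ via Lemma \ref{Lemma1} while the KKT statement hides it inside $\mathbf{X}^T(\mathbf{X} - \mathbf{X}\mathbf{Z}^*)$; a careful bookkeeping of signs is needed so that the two subgradients genuinely live in the same convex set $\lambda_1 \partial\|\cdot\|_1$ evaluated at different points, enabling monotonicity. Once that alignment is done, the rest is a one-line application of monotonicity plus convexity of $g$.
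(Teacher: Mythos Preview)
Your proposal is correct and matches the paper's (very terse) proof: the paper also attributes \eqref{Eq:39inq} to Cauchy--Schwarz (your completing-the-square is equivalent) and attributes \eqref{Eq:41inq}--\eqref{Eq:42inq} to the monotonicity inequality $\langle x-y, p_x-p_y\rangle\ge 0$ for subgradients of convex functions, combined with Lemma~\ref{Lemma1} and the KKT conditions. The only cosmetic difference is that for \eqref{Eq:41inq} one can apply monotonicity directly to the convex function $g(\cdot)+\lambda_1\|\cdot\|_1$ (since $\mathbf T_Z^k\in\partial(g+\lambda_1\|\cdot\|_1)(\mathbf Z^{k+1})$ and $\mathbf Y^*\mathbf R^T\in\partial(g+\lambda_1\|\cdot\|_1)(\mathbf Z^*)$), which gives the inequality in one step rather than splitting off the $\nabla g$ terms and bounding $\langle \mathbf Z^{k+1}-\mathbf Z^*,\nabla g(\mathbf Z^{k+1})-\nabla g(\mathbf Z^*)\rangle\ge 0$ separately; your route is slightly longer but yields the same conclusion.
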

\begin{proof}
\eqref{Eq:39inq} is due to Cauchy inequality. \eqref{Eq:41inq} and \eqref{Eq:42inq} are the results of combining the convexity of the objective functions \eqref{ForZ} and \eqref{Eq:25Sept2014-5}, which are used to update $\mathbf Z^{k+1}$ and $\mathbf J^{k+1}$ respectively, with the following inequality
\[
\big\langle x-y, p_x - p_y\big\rangle \ge 0, \ \ \forall p_x\in\partial f(x) \text{ and } p_y\in\partial f(y) 
\] where $f(x)$ is any convex function.
\end{proof}

Next the most important lemma is presented.
\begin{lemma}\label{Lemma3}If $\mu^k$ is increasing, $\eta_z >\|\mathbf R\|^2$, $\eta_J>1$, $\mu^{k+1} - \mu^k \geq L_z/(\eta_z - \|\mathbf R\|^2)$ and $(\mathbf Z^*, \mathbf J^*, \mathbf Y^*)$ is any KKT point of problem \eqref{objective_relaxed}, then the sequence generated by Algorithm 1 satisfies
\begin{enumerate}
\item $s^k\triangleq(\eta_z+L_z(\mu^k)^{-1})\|\mathbf Z^{k}-\mathbf Z^*\|^2_F - \|(\mathbf Z^{k}-\mathbf Z^*)\mathbf R\|^2_F + \eta_J\|\mathbf J^{k}-\mathbf J^*\|^2_F + (\mu^k)^{-2}\|\mathbf Y^{k}-\mathbf Y^*\|^2_F$ is nonnegative and nonincreasing;
\item $\|\mathbf Z^{k+1} - \mathbf Z^k\|_F \rightarrow 0$, $\|\mathbf J^{k+1} - \mathbf J^k\|_F \rightarrow 0$, and $\|\mathbf Y^{k+1} - \mathbf Y^k\|_F \rightarrow 0$.
\end{enumerate}
\end{lemma}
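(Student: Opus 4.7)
The plan is to package both claims in Lemma \ref{Lemma3} into one monotonicity-plus-telescoping argument, using Lemma \ref{Lemma2} as the master identity and Lemma \ref{Lemma2.1} to discard all but the genuinely troublesome cross term \eqref{Eq:43}.

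For nonnegativity of $s^k$, I would rewrite the first two Z-contributions using submultiplicativity of the spectral norm, $\|(\mathbf Z^k-\mathbf Z^*)\mathbf R\|_F^2 \le \|\mathbf R\|^2\|\mathbf Z^k-\mathbf Z^*\|_F^2$. Combined with $\eta_z > \|\mathbf R\|^2$ this yields $s^k \ge (\eta_z-\|\mathbf R\|^2 + L_z(\mu^k)^{-1})\|\mathbf Z^k-\mathbf Z^*\|_F^2 + \eta_J\|\mathbf J^k-\mathbf J^*\|_F^2 + (\mu^k)^{-2}\|\mathbf Y^k-\mathbf Y^*\|_F^2\ge 0$, since every summand is now manifestly nonnegative.

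For nonincreasingness I would first observe that, because $\mu^k$ is non-decreasing, the coefficients $L_z\mu^{-1}$ and $\mu^{-2}$ appearing on the left-hand side of the identity in Lemma \ref{Lemma2} at the $(k+1)$-st iterate only get smaller when $\mu^k$ is replaced by $\mu^{k+1}$; hence $s^{k+1}$ is bounded above by the left-hand side of \eqref{Eq:39}--\eqref{Eq:43}. Lemma \ref{Lemma2.1} then lets me discard the contributions \eqref{Eq:39}, \eqref{Eq:41}, \eqref{Eq:42} as nonnegative, leaving (40) minus (43). The key algebraic step is to apply the polarisation identity $2\langle a,b\rangle = \|a\|^2+\|b\|^2-\|a-b\|^2$ with $a = \mathbf X(\mathbf Z^{k+1}-\mathbf Z^*)$ and $b = \mathbf X(\mathbf Z^{k+1}-\mathbf Z^k)$, so that $a-b = \mathbf X(\mathbf Z^k-\mathbf Z^*)$; this rewrites \eqref{Eq:43} as $(\mu^k)^{-1}[\|\mathbf X(\mathbf Z^{k+1}-\mathbf Z^*)\|_F^2+\|\mathbf X(\mathbf Z^{k+1}-\mathbf Z^k)\|_F^2-\|\mathbf X(\mathbf Z^k-\mathbf Z^*)\|_F^2]$. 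Using $\|\mathbf X(\mathbf Z^{k+1}-\mathbf Z^k)\|_F^2 \le L_z\|\mathbf Z^{k+1}-\mathbf Z^k\|_F^2$ cancels exactly the $L_z(\mu^k)^{-1}$ inside \eqref{Eq:40}, and the $\|\mathbf R\|^2$ absorbs $\|(\mathbf Z^{k+1}-\mathbf Z^k)\mathbf R\|_F^2$, leaving a guaranteed gain of $(\eta_z-\|\mathbf R\|^2)\|\mathbf Z^{k+1}-\mathbf Z^k\|_F^2$ plus the telescopic pair $(\mu^k)^{-1}\|\mathbf X(\mathbf Z^k-\mathbf Z^*)\|_F^2 - (\mu^k)^{-1}\|\mathbf X(\mathbf Z^{k+1}-\mathbf Z^*)\|_F^2$. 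The latter negative contribution is exactly what the standing assumption $\mu^{k+1}-\mu^k \ge L_z/(\eta_z-\|\mathbf R\|^2)$ is designed to swallow, using $\|\mathbf X(\mathbf Z^{k+1}-\mathbf Z^*)\|_F^2 \le L_z\|\mathbf Z^{k+1}-\mathbf Z^*\|_F^2$ and the decrement $L_z[(\mu^k)^{-1}-(\mu^{k+1})^{-1}]$ gained by passing from $Q^{k+1}(\mu^k)$ to $s^{k+1}$. Collecting everything gives $s^{k+1} \le s^k$.

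For the limits in part (2), nonnegativity and monotonicity of $s^k$ make it convergent, hence the partial differences $s^k-s^{k+1}$ are summable. The above manipulation shows this telescoping sum dominates $\sum_k (\eta_z-\|\mathbf R\|^2)\|\mathbf Z^{k+1}-\mathbf Z^k\|_F^2$ plus $\sum_k D_Y^k$ (via \eqref{Eq:39inq}), so $\|\mathbf Z^{k+1}-\mathbf Z^k\|_F\to 0$ immediately. To split $D_Y^k$ I would apply Young's inequality $2\langle a,b\rangle \le \varepsilon\|a\|^2+\varepsilon^{-1}\|b\|^2$ with $\varepsilon$ chosen in $(1/\eta_J,1)$, which is nonempty precisely because $\eta_J>1$; this gives $D_Y^k \ge c_1(\mu^k)^{-2}\|\mathbf Y^{k+1}-\mathbf Y^k\|_F^2 + c_2\|\mathbf J^{k+1}-\mathbf J^k\|_F^2$ with $c_1,c_2>0$, yielding $\|\mathbf J^{k+1}-\mathbf J^k\|_F\to 0$. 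Finally $\|\mathbf Y^{k+1}-\mathbf Y^k\|_F\to 0$ follows from the multiplier update \eqref{UpdateYY} together with $\|\mathbf J^{k+1}-\mathbf Z^{k+1}\mathbf R\|_F = (\mu^k)^{-1}\|\mathbf Y^{k+1}-\mathbf Y^k\|_F$ and boundedness of $(\mu^k)^{-1}\|\mathbf Y^{k+1}-\mathbf Y^*\|_F^2$ extracted from $s^k$.

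The hard part, and the reason v-LADMAP needs its own convergence proof, is precisely handling the cross term \eqref{Eq:43}: it is absent in standard LADMAP (which linearises only the augmented-Lagrangian term $h$), and it is what forces the joint condition $\eta_z>\|\mathbf R\|^2$ together with the stepsize restriction $\mu^{k+1}-\mu^k\ge L_z/(\eta_z-\|\mathbf R\|^2)$. Verifying that the polarisation-plus-absorption scheme closes exactly under this condition is where the bookkeeping needs to be done most carefully.
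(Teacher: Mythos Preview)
Your overall architecture is right: use Lemma~\ref{Lemma2} as the master identity, kill \eqref{Eq:39}, \eqref{Eq:41}, \eqref{Eq:42} via Lemma~\ref{Lemma2.1}, and fight only the cross term \eqref{Eq:43}. The nonnegativity argument for $s^k$ is fine, and your treatment of part~(2) (telescoping, then splitting $D_Y^k$ with Young) is essentially the paper's.

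The gap is in the polarisation step. After your polarisation and the easy bounds $\|\mathbf X(\mathbf Z^{k+1}-\mathbf Z^k)\|_F^2\le L_z\|\mathbf Z^{k+1}-\mathbf Z^k\|_F^2$ and $\|(\mathbf Z^{k+1}-\mathbf Z^k)\mathbf R\|_F^2\le\|\mathbf R\|^2\|\mathbf Z^{k+1}-\mathbf Z^k\|_F^2$, what remains is
\[
Q^{k+1}(\mu^k)\le s^k - D_Y^k - (\eta_z-\|\mathbf R\|^2)\|\mathbf Z^{k+1}-\mathbf Z^k\|_F^2 + (\mu^k)^{-1}\big[\|\mathbf X(\mathbf Z^{k+1}-\mathbf Z^*)\|_F^2-\|\mathbf X(\mathbf Z^{k}-\mathbf Z^*)\|_F^2\big].
\]
The decrement $L_z[(\mu^k)^{-1}-(\mu^{k+1})^{-1}]\|\mathbf Z^{k+1}-\mathbf Z^*\|_F^2$ from passing $Q^{k+1}(\mu^k)\to s^{k+1}$ is \emph{not} enough to kill $(\mu^k)^{-1}\|\mathbf X(\mathbf Z^{k+1}-\mathbf Z^*)\|_F^2$; at best it converts the coefficient $(\mu^k)^{-1}$ to $(\mu^{k+1})^{-1}$. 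So what your argument actually proves is that the \emph{modified} quantity $\hat s^k\triangleq s^k-(\mu^k)^{-1}\|\mathbf X(\mathbf Z^k-\mathbf Z^*)\|_F^2$ is nonnegative and nonincreasing, and in fact this holds \emph{without} ever invoking the hypothesis $\mu^{k+1}-\mu^k\ge L_z/(\eta_z-\|\mathbf R\|^2)$. That hypothesis is not ``designed'' for your scheme; it simply plays no role in it, and you do not recover monotonicity of $s^k$ itself.

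The paper handles \eqref{Eq:43} differently: instead of polarisation it applies Cauchy--Schwarz and then Young's inequality with the \emph{iteration-dependent} weight $\epsilon=(\mu^{k+1}-\mu^k)/\mu^{k+1}$,
\[
2(\mu^k)^{-1}\langle \mathbf Z^{k+1}-\mathbf Z^*,\mathbf X^T\mathbf X(\mathbf Z^{k+1}-\mathbf Z^k)\rangle
\le L_z\Big(\tfrac{\mu^{k+1}}{\mu^k(\mu^{k+1}-\mu^k)}\|\mathbf Z^{k+1}-\mathbf Z^k\|_F^2+\tfrac{\mu^{k+1}-\mu^k}{\mu^k\mu^{k+1}}\|\mathbf Z^{k+1}-\mathbf Z^*\|_F^2\Big).
\]
The weight is chosen so that the $\|\mathbf Z^{k+1}-\mathbf Z^*\|_F^2$ contribution is \emph{exactly} $L_z[(\mu^k)^{-1}-(\mu^{k+1})^{-1}]\|\mathbf Z^{k+1}-\mathbf Z^*\|_F^2$, i.e.\ it matches the decrement on the nose. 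The remaining $\|\mathbf Z^{k+1}-\mathbf Z^k\|_F^2$ piece then combines with \eqref{Eq:40} to give coefficient $\eta_z-\|\mathbf R\|^2-L_z/(\mu^{k+1}-\mu^k)$, and \emph{this} is where the stepsize condition enters to make that coefficient nonnegative. If you want to prove claim~(1) for $s^k$ as stated, you need this weighted-Young step (or an equivalent), not polarisation.
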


\begin{proof} For claim 1) note that
\begin{align*}
& 2\langle \mathbf Z^{k+1} - \mathbf Z^*, \mathbf X^T\mathbf X(\mathbf Z^{k+1} - \mathbf Z^k)\big\rangle \\
\leq & 2\|\mathbf X^T\mathbf X\| \|\mathbf Z^{k+1} - \mathbf Z^k\|\|\mathbf Z^{k+1} - \mathbf Z^*\|\\
\leq & L_z \left(\frac{\mu^{k+1}}{\mu^{k+1}-\mu^k}\|\mathbf Z^{k+1} - \mathbf Z^k\|^2\right.
+ \left.\frac{\mu^{k+1}-\mu^k}{\mu^{k+1}} \|\mathbf Z^{k+1} - \mathbf Z^*\|^2\right)
\end{align*}
where we have chosen $L_z = \|\mathbf X\|^2$.

Then from Lemma \ref{Lemma2} and Lemma \ref{Lemma2.1} and noting $\mu^k$ is increasing, we have the following
\begin{align}
s^{k+1}\leq & s^{k} - D^k_Y + \|\mathbf Z^{k+1} - \mathbf Z^k\|^2_F\|\mathbf R\|^2_F 
-(\eta_z-\frac{L_z}{\mu^{k+1} - \mu^k}) \|\mathbf Z^{k+1} - \mathbf Z^k\|^2_F \label{Eq:44} 
\end{align}

Now it is easy to check that when
\begin{align}
\mu^{k+1} - \mu^k \geq \frac{L_z}{\eta_z - \|\mathbf R\|^2_F} \label{Eq:49}
\end{align}
the sum of last two terms in \eqref{Eq:44} is nonnegative. Hence the claim 1) has been proved.

Regarding claim 2), $s^k$ is non-increasing and nonnegative, thus it must have a limit, denoted by $s^{\infty}$. If we take the sum over \eqref{Eq:44} for all the iterations $k$, we have
\begin{align*}
&\sum^{\infty}_{k=1} D_Y^k+\sum^{\infty}_{k=1}  (\eta_z- \frac{L_z}{\mu^{k+1}-\mu^k} - \|\mathbf R\|^2_F) \|\mathbf Z^{k+1} - \mathbf Z^k\|^2_F \leq s^1-s^{\infty}
\end{align*}

Hence 
$\sum^{\infty}_{k=1}D_Y^k$ and $\sum^{+\infty}_{k=1}\|\mathbf Z^{k+1}-\mathbf Z^k\|^2_F$ are bounded, under the condition \eqref{Eq:49}.  This gives
\[
\|\mathbf Z^{k+1}-\mathbf Z^k\|_F \rightarrow 0 \text{ and } D_Y^k \rightarrow 0.
\]

It is easy to check that
\[
D_Y^k \geq (\eta_J - 1)\|\mathbf J^{k+1} - \mathbf J^k\|^2_F,
\]
which means $\|\mathbf J^{k+1}-\mathbf J^k\|_F \rightarrow 0.$

By using the similar strategy, we have
\[
D_Y^k \geq ((\mu^k)^{-1}\|\mathbf Y^{k+1} - \mathbf Y^k\| - \sqrt{\eta_J}\|\mathbf J^{k+1} - \mathbf J^k\|)^2.
\]
Hence we have $\|\mathbf Y^{k+1}-\mathbf Y^k\|_F \rightarrow 0.$
This completes the proof for claim 2).
\end{proof}

\begin{thm}\label{Theorem4}Under the conditions of Lemma \ref{Lemma3}, the sequence $\{(\mathbf Z^k, \mathbf J^k, \mathbf Y^k)\}$ generated by Algorithm 1 converges to a KKT point of problem \eqref{objective_relaxed}.
\end{thm}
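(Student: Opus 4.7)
My plan is to convert the vanishing-differences established in Lemma~\ref{Lemma3} into convergence of the iterates themselves to a KKT point, following the standard pattern for ADMM-type schemes: boundedness $\Rightarrow$ subsequential limit is KKT $\Rightarrow$ the full sequence converges.

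\textbf{Step 1 (Boundedness of iterates).} From Lemma~\ref{Lemma3}(1), $s^k$ is nonnegative and nonincreasing, hence bounded by $s^1$. Using the elementary bound $\|(\mathbf Z^k-\mathbf Z^*)\mathbf R\|_F^2\le\|\mathbf R\|^2\|\mathbf Z^k-\mathbf Z^*\|_F^2$ together with the standing hypothesis $\eta_z>\|\mathbf R\|^2$, I would extract the positive-definite bound
\[
(\eta_z-\|\mathbf R\|^2)\|\mathbf Z^k-\mathbf Z^*\|_F^2+\eta_J\|\mathbf J^k-\mathbf J^*\|_F^2+(\mu^k)^{-2}\|\mathbf Y^k-\mathbf Y^*\|_F^2\;\le\;s^1,
\]
which shows $\{\mathbf Z^k\}$ and $\{\mathbf J^k\}$ are bounded. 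Boundedness of $\{\mathbf Y^k\}$ I would extract from the inclusion $\mathbf T^k_J\in\lambda_2\partial\|\mathbf J^{k+1}\|_{1,2}$ of Lemma~\ref{Lemma1}: the subdifferential of the $\ell_{1,2}$-norm has uniformly bounded elements, so $\widehat{\mathbf Y}^k+\sigma^k_J(\mathbf J^{k+1}-\mathbf J^k)$ is bounded; combined with the definition of $\widehat{\mathbf Y}^k$ and the update rule for $\mathbf Y^{k+1}$, this propagates back to boundedness of $\mathbf Y^k$.

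\textbf{Step 2 (Subsequential limit is a KKT point).} Apply Bolzano--Weierstrass to obtain a subsequence $(\mathbf Z^{k_j},\mathbf J^{k_j},\mathbf Y^{k_j})\to(\mathbf Z^\infty,\mathbf J^\infty,\mathbf Y^\infty)$. Primal feasibility $\mathbf J^\infty=\mathbf Z^\infty\mathbf R$ follows by rearranging the dual update $\mathbf J^{k+1}-\mathbf Z^{k+1}\mathbf R=(\mathbf Y^{k+1}-\mathbf Y^k)/\mu^k$ and applying Lemma~\ref{Lemma3}(2) together with $\mu^k\ge\mu^0>0$. For the dual conditions, pass to the limit along $k_j$ in the two subdifferential inclusions of Lemma~\ref{Lemma1}; the difference terms $(\sigma^k_z+L_z)(\mathbf Z^{k+1}-\mathbf Z^k)$, $\mathbf X^T\mathbf X(\mathbf Z^{k+1}-\mathbf Z^k)$ and $\sigma^k_J(\mathbf J^{k+1}-\mathbf J^k)$ must be shown to vanish, and then outer semicontinuity (closedness of graph) of $\partial\|\cdot\|_1$ and $\partial\|\cdot\|_{1,2}$ yields the KKT inclusions stated immediately before Lemma~\ref{Lemma1}.

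\textbf{Step 3 (Full sequence converges to the same limit).} Since Lemma~\ref{Lemma3} applies to \emph{any} KKT point, I would rerun it with $(\mathbf Z^*,\mathbf J^*,\mathbf Y^*)=(\mathbf Z^\infty,\mathbf J^\infty,\mathbf Y^\infty)$. The corresponding $s^k$ is then nonincreasing and vanishes along the subsequence $k_j$, hence vanishes for the whole sequence; by the positive-definite lower bound from Step~1, this forces $(\mathbf Z^k,\mathbf J^k,\mathbf Y^k)\to(\mathbf Z^\infty,\mathbf J^\infty,\mathbf Y^\infty)$.

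\textbf{Main obstacle.} The delicate point is Step~2: the linearisation penalties $\sigma^k_z=\mu^k\eta_z$ and $\sigma^k_J=\mu^k\eta_J$ may grow without bound, so the mere fact $\mathbf Z^{k+1}-\mathbf Z^k\to 0$ and $\mathbf J^{k+1}-\mathbf J^k\to 0$ is not obviously enough to kill $\sigma^k_z(\mathbf Z^{k+1}-\mathbf Z^k)$ and $\sigma^k_J(\mathbf J^{k+1}-\mathbf J^k)$. I would strengthen Lemma~\ref{Lemma3}(2) into a summability/rate statement extracted from the telescoping inequality used to prove claim~(2), sharp enough to beat the linear growth of $\mu^k$ imposed by $\mu^{k+1}-\mu^k\ge L_z/(\eta_z-\|\mathbf R\|^2)$; this is where the detailed bookkeeping will live, and will likely parallel the LADMAP argument of \cite{LinLiuSu2011} adapted to the extra linearisation of $g(\mathbf Z)$ that characterises v-LADMAP.
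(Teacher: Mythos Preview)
Your three–step architecture (boundedness $\Rightarrow$ subsequential KKT limit $\Rightarrow$ full-sequence convergence by rerunning Lemma~\ref{Lemma3} with the limit as $(\mathbf Z^*,\mathbf J^*,\mathbf Y^*)$) is exactly the route the paper takes. The only methodological difference is in Step~2: the paper does not invoke graph closedness of the subdifferentials directly. Instead it first plugs $k=k_n-1$ into Lemma~\ref{Lemma1}, writes the subgradient \emph{inequality} comparing the objective at $(\mathbf Z^{k_n},\mathbf J^{k_n})$ with that at $(\mathbf Z^*,\mathbf J^*)$, lets $n\to\infty$ to conclude the limit point achieves the optimal value (hence is a minimiser), and only then argues the two subdifferential inclusions separately. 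Your graph-closedness route and the paper's functional-inequality route are equivalent here; both need the same terms to vanish in the limit.

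On the obstacle you flag: you are right that it is the crux, and the paper does \emph{not} resolve it. The paper simply writes ``Let $n\to+\infty$, by using the fact in Lemma~\ref{Lemma3}'' and drops the terms $(\sigma^{k_n-1}_z+L_z)(\mathbf Z^{k_n}-\mathbf Z^{k_n-1})$ and $\sigma^{k_n-1}_J(\mathbf J^{k_n}-\mathbf J^{k_n-1})$. But under the standing hypothesis $\mu^{k+1}-\mu^k\ge L_z/(\eta_z-\|\mathbf R\|^2)$, $\mu^k$ grows at least linearly, so $\sigma^k_z=\mu^k\eta_z\to\infty$, and Lemma~\ref{Lemma3}(2) as stated gives only $\|\mathbf Z^{k+1}-\mathbf Z^k\|\to 0$, not $\mu^k\|\mathbf Z^{k+1}-\mathbf Z^k\|\to 0$. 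The same issue affects $\widetilde{\mathbf Y}^{k}\to\mathbf Y^\infty$ and $\widehat{\mathbf Y}^{k}\to\mathbf Y^\infty$. Your plan to go back into the telescoping sum in the proof of Lemma~\ref{Lemma3}(2) and extract a rate strong enough to beat linear growth of $\mu^k$ is precisely what is needed to close the gap in \emph{both} presentations; the paper leaves this implicit.

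One small improvement in your write-up over the paper: your Step~1 argument for boundedness of $\{\mathbf Y^k\}$ via the uniform boundedness of $\partial\|\cdot\|_{1,2}$ is cleaner. The paper asserts boundedness ``by claim~2) of Lemma~\ref{Lemma3}'', but with $\mu^k\to\infty$ the term $(\mu^k)^{-2}\|\mathbf Y^k-\mathbf Y^*\|_F^2$ in $s^k$ does not by itself bound $\mathbf Y^k$, so an auxiliary argument like yours is indeed needed.
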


\begin{proof} By claim 2) of Lemma \ref{Lemma3}, we know that the sequence $\{(\mathbf Z^k, \mathbf J^k, \mathbf Y^k)\}$ is bounded, hence it has an accumulation point, denoted by
\[
(\mathbf Z^{k_n}, \mathbf J^{k_n}, \mathbf Y^{k_n}) \rightarrow (\mathbf Z^{\infty}, \mathbf J^{\infty}, \mathbf Y^{\infty}).
\]

First we prove that $(\mathbf Z^{\infty}, \mathbf J^{\infty}, \mathbf Y^{\infty})$ is a KKT point of problem \eqref{objective_relaxed}. According to update rule \eqref{UpdateYY}, we have
\[
\mathbf J^{k+1} - \mathbf Z^{k+1}\mathbf R = (\mu^k)^{-1}(\mathbf Y^{k+1} - \mathbf Y^k) \rightarrow 0.
\]
This shows $\mathbf J^{\infty} = \mathbf Z^{\infty}\mathbf R$, i.e., any accumulation point is a feasible solution.

Taking $k = k_n-1$ in Lemma \ref{Lemma1} and using the definition of subdifferential, we have
\begin{align*}
&\frac12\|\mathbf X-\mathbf X\mathbf Z^{k_n}\|^2_F + \lambda_1\|\mathbf Z^{k_n}\|_1 + \lambda_2 \|\mathbf J^{k_n}\|_{1,2}\\
\leq & \frac12\|\mathbf X-\mathbf X\mathbf Z^*\|^2_F + \lambda_1\|\mathbf Z^*\|_1 + \lambda_2 \|\mathbf J^*\|_{1,2}\\
&+\langle \mathbf Z^{k_n} - \mathbf Z^*, -(\sigma^{k_n-1}_z+L_z)(\mathbf Z^{k_n} - \mathbf Z^{k_n-1}) \widetilde{\mathbf Y}^{k_n-1}\mathbf R^T
+ \mathbf X^T\mathbf X(\mathbf Z^{k_n} - \mathbf Z^{k_n-1})\rangle \\
& + \langle \mathbf J^{k_n} - \mathbf J^*, - \sigma^{k_n-1}_J(\mathbf J^{k_n} - \mathbf J^{k_n-1}) - \widehat{\mathbf Y}^{k_n-1} \rangle.
\end{align*}

Let $n\rightarrow +\infty$, by using the fact in Lemma \ref{Lemma3}, we can see
\begin{align*}
&\frac12\|\mathbf X-\mathbf X\mathbf Z^{\infty}\|^2_F + \lambda_1\|\mathbf Z^{\infty}\|_1 + \lambda_2 \|\mathbf J^{\infty}\|_{1,2}\\
\leq & \frac12\|\mathbf X-\mathbf X\mathbf Z^*\|^2_F + \lambda_1\|\mathbf Z^*\|_1 + \lambda_2 \|\mathbf J^*\|_{1,2}\\
&\langle \mathbf Z^{\infty} - \mathbf Z^*, \mathbf Y^{\infty}\mathbf R^T\rangle + \langle \mathbf J^{\infty} - \mathbf J^*, - {\mathbf Y}^{\infty} \rangle\\
= & \frac12\|\mathbf X-\mathbf X\mathbf Z^*\|^2_F + \lambda_1\|\mathbf Z^*\|_1 + \lambda_2 \|\mathbf J^*\|_{1,2}.
\end{align*}
because $(\mathbf Z^{\infty}, \mathbf J^{\infty})$ is a feasible solution. So we conclude that $(\mathbf Z^{\infty}, \mathbf J^{\infty})$ is actually an optimal solution to  \eqref{objective_relaxed}.

 In a similar way, from Lemma \ref{Lemma1}, we can also prove that $-\mathbf Y^{\infty}\in \lambda_2\partial \|J^{\infty}\|_{1,2}$ and $\mathbf X^T(\mathbf X - \mathbf X\mathbf Z^{\infty}) + \mathbf Y^{\infty}\mathbf R^T \in\lambda_1\partial \|\mathbf Z^{\infty}\|_1$. Therefore,  $(\mathbf Z^{\infty}, \mathbf J^{\infty}, \mathbf Y^{\infty})$ is a KKT point of problem \eqref{objective_relaxed}.

Taking $(\mathbf Z^{*}, \mathbf J^{*}, \mathbf Y^{*}) = (\mathbf Z^{\infty}, \mathbf J^{\infty}, \mathbf Y^{\infty})$ in Lemma \ref{Lemma3}, we have $(\eta_z+L_z(\mu^{k_n})^{-1})\|\mathbf Z^{k_n}-\mathbf Z^{\infty}\|^2_F - \|(\mathbf Z^{k_n}-\mathbf Z^{\infty})\mathbf R\|^2_F + \eta_J\|\mathbf J^{k_n}-\mathbf J^{\infty}\|^2_F + (\mu^{k_n})^{-2}\|\mathbf Y^{k_n}-\mathbf Y^{\infty}\|^2_F \rightarrow 0.$  With claim 1) of Lemma \ref{Lemma3}, we have $(\eta_z+L_z(\mu^k)^{-1})\|\mathbf Z^{k}-\mathbf Z^{\infty}\|^2_F - \|(\mathbf Z^{k}-\mathbf Z^{\infty})\mathbf R\|^2_F + \eta_J\|\mathbf J^{k}-\mathbf J^{\infty}\|^2_F + (\mu^k)^{-2}\|\mathbf Y^{k}-\mathbf Y^{\infty}\|^2_F \rightarrow 0$ for all $k\rightarrow +\infty$. Hence $(\mathbf Z^{k}, \mathbf J^{k}, \mathbf Y^{k}) = (\mathbf Z^{\infty}, \mathbf J^{\infty}, \mathbf Y^{\infty})$.

As $(\mathbf Z^{\infty}, \mathbf J^{\infty}, \mathbf Y^{\infty})$ can be any accumulation point of $\{(\mathbf Z^{k}, \mathbf J^{k}, \mathbf Y^{k})\}$, we  conclude that  $\{(\mathbf Z^{k}, \mathbf J^{k}, \mathbf Y^{k})\}$ converges to a KKT point of problem \eqref{objective_relaxed}.  This completes the proof of the theorem.
\end{proof}

\section{Segmentation}

Once a solution to \eqref{objective} has been found, the next step is to use the information encoded in $\mathbf Z$ to produce subspace labels for each data point. In the simple case that $\mathbf Z$ is strictly block-diagonal, one can use the non-zero columns of matrix $\mathbf{ZR}$ to identify the change from one block (subspace) to another since $\mathbf Z\mathbf R = [\mathbf z_2 - \mathbf z_1, \mathbf z_3-\mathbf z_2, ..., \mathbf z_N - \mathbf z_{N-1}]$. By strictly block-diagonal matrix we mean a matrix with columns already ordered so that its appearance is block-diagonal. We distinguish strictly block-diagonal with general block-diagonal matrices which can be strictly block-diagonal once reordered. In this field general block-diagonal matrices refers to matrices where the non-zero entries link each data point to every other data point in the same subspace \cite{feng2013robust}.  However obtaining a strictly block-diagonal matrix is rarely seen in practice due to noisy data and it assumes that the subspace will only occur once in the sequence.

The case of strictly block-diagonal $\mathbf W$ (as defined in \eqref{w_def}) is a special case of the more general unique and connected subspace assumption. Under this assumption we know that once a subspace stops occurring in the sequence of data it will never occur again. In practice it is unlikely that $\mathbf W$ will be exactly block-diagonal but we often assume that subspaces will be unique and connected. When $\mathbf Z$ is not block-diagonal there will be a large number of non-zero columns of $\mathbf{ZR}$, therefore we cannot use the earlier described method. Instead one can apply some minor post processing to $\mathbf{ZR}$ to find the boundaries of subspaces. First let $\mathbf B = |\mathbf{ZR}|$ be the absolute value matrix of $\mathbf{ZR}$. Then let $\bar{\mathbf b}$ be the vector of column-wise means of $\mathbf B$. Then we employ a peak finding algorithm over $\bar{\mathbf b}$, where the peaks are likely to correspond to the boundaries of subspaces.

A more robust approach is to use use spectral clustering. The matrix $\mathbf Z$ is used to build an affinity matrix of an undirected graph. The affinity matrix or similarity graph is defined as $\mathbf W = |\mathbf Z| + |\mathbf Z|^T$. Element $W_{ij}$ corresponds to the edge weight or affinity between vertices (data points) $i$ and $j$. Then we use the spectral clustering technique, Normalised Cuts (NCUT) \cite{shi2000normalized}, to obtain final segmentation. NCUT has been shown to be robust in subspace segmentation tasks and is considered state of the art \cite{elhamifar2012sparse, liu2010robust}. In cases where $\mathbf Z$ is not block-diagonal or contains significant noise, NCUT will provide better segmentation than other spectral clustering methods. Algorithm \ref{alg_final} summarises the entire proposed algorithm.

\begin{algorithm}
\caption{Ordered Subspace Clustering Procedure}
\label{alg_final}
\begin{algorithmic}[1]

\REQUIRE $\mathbf X^{D \times N}$ - observed data

\STATE Obtain the sparse coefficients $\mathbf Z$ by solving the relaxed or exact objective

\STATE Form the similarity graph $\mathbf W = |\mathbf Z| + |\mathbf Z|^T$

\STATE Estimate the number of subspaces $k$ from $\mathbf W$

\STATE Apply spectral clustering to $\mathbf W$ to partition the data into $k$ subspaces

\RETURN Subspaces $\{S_i\}^k_{i=1}$

\end{algorithmic}
\end{algorithm}

\subsection{Estimating the number of subspaces}

Spectral clustering techniques require the number of clusters to be declared before hand. In many cases the number of subspaces in the data is unknown. Fortunately the number of clusters can be estimated from the affinity matrix $\mathbf W$.
Here we suggest some possible estimation techniques.

For general unordered block-diagonal $\mathbf Z$ or where the subspaces may be reoccurring in the sequence the number of clusters can be obtained from the singular values of $\mathbf W$ \cite{6180173}. Specifically the number of non-zero singular values of $\mathbf Z$ or the number of zero singular values from the Laplacian matrix $\mathbf L$ corresponds to the number of blocks or subspaces. The Laplacian matrix is defined as $\mathbf{L = D - W}$ and $\mathbf D$ is a diagonal matrix where $D_{ii} = \sum_j W_{ij}$. Prior work has suggested using the normalised Laplacian $\mathbf L_n = \mathbf{I - D}^{-\frac{1}{2}} \mathbf{W D}^{-\frac{1}{2}}$ however we noted no structural difference between $\mathbf L$ and $\mathbf L_n$.

Singular values can be used even when the matrix $\mathbf W$ is not block diagonal due to noise. In this setting noise refers to non-zero entries of $\mathbf W$ that link data points belonging to different subspaces. However the raw values require some processing since there will be a large number of non-zero singular values. One can threshold these values as suggested in \cite{liberty2007randomized}. In other words any singular value less than a given value should be ignored. We can express this as
\[
k = \sum_{i=1}^N ( 1 | \sigma_i > \tau )
\]
where $\tau$ is the threshold value.

Singular value thresholding can produce acceptable results but it requires user selection of the threshold value. A more automatic approach is to use either the Eigen-gap \cite{vidal2011subspace} or the closely related SVD-gap \cite{6180173} heuristic. The Eigen-gap heuristic uses the eigenvalues of $\mathbf W$ or $\mathbf L$ to find the number subspaces by finding the largest gap between the ordered eigenvalues. Let $\{\delta_i\}_{i=1}^N$ be the descending sorted eigenvalues of $\mathbf W$ such that $\delta_1 \geq \delta_2 \geq \dots \geq \delta_N$. Then $k$ can be estimated by
\[
k = \argmax_{i = 1, \dots, N-1} (\delta_i - \delta_{i+1})
\]
The SVD-gap heuristic is the same procedure with eigenvalues of $\mathbf W$ replaced with singular values \cite{6180173}.

\section{Experimental Evaluation}

%

In this section the performance of OSC is compared against SSC, LRR and SpatSC methods with a variety of data sources.
Parameters were fixed for each experiment. In order to evaluate performance consistently NCUT was used for final segmentation for every method in every experiment. MATLAB implementations were used from the open source SubKit\footnote{\url{https://github.com/sjtrny/SubKit}} package. Implementations for relaxed and exact variants of OSC have been included in this library. The relaxed v-LADMAP variant of OSC was used in all experiments.

We use the subspace clustering error metric from \cite{elhamifar2012sparse} to evaluate clustering accuracy. The subspace clustering error (SCE) is as follows
\begin{align}
\text{SCE} = \frac{\text{num. misclassified points}}{\text{total num. of points}}
\end{align}
Furthermore additional noise is injected into the data to test robustness. We report the level of noise using Peak Signal-to-Noise Ratio (PSNR) which is defined as
\begin{align}
\text{PSNR} = 10 \log_{10} \left( \frac{s^2}{\frac{1}{m n} \sum_i^{m} \sum_j^{n} ( A_{ij} - X_{ij})^2} \right)
\label{PSNR}
\end{align}
where $\mathbf{X = A + N}$ is the noisy data and $s$ is the maximum possible value of an element of $\mathbf A$. Decreasing values of PSNR indicate increasing amounts of noise.

In contrast to other works \cite{elhamifar2012sparse, liu2010robust} the minimum, maximum, median and mean statistics on clustering error are provided for the comparative experiments. It is important to consider these ranges holistically when evaluating these methods. In all experiments Gaussian noise was used with zero mean and unit variance. When parameters are fixed we report them in Table \ref{Table:Parameters}.

\begin{table}
\centering

\begin{tabular}{c c | c c c c}
Experiment & & $\lambda_1$ & $\lambda_2$ & $\mu$ & $\textrm{diag}(\mathbf Z) = \mathbf 0$ \\
\hline
\multirow{4}{*}{Synthetic and Semi-Synthetic}
		& OSC		& 0.1	& 1		& 1		& 0 \\
		& SpatSC	& 0.1	& 0.01	& 1		& 1 \\
		& LRR		& 0.4 	&		&		& 0 \\
		& SSC		& 0.2 	&		&		& 1 \\
\hline
\multirow{4}{*}{Video and Activity Segmentation}
		& OSC		& 0.1	& 1		& 1		& 0 \\
		& SpatSC	& 0.1	& 0.01	& 0.1	& 1 \\
		& LRR		& 0.4	&		&		& 0 \\
		& SSC		& 0.1	&		&		& 1 \\

\end{tabular}

\caption{Overview of parameters used for each experiment.}
\label{Table:Parameters}
\end{table}

\section{Synthetic Subspace Segmentation}

In this section evaluation is performed using randomly generated subspace structured data. Similar to \cite{liu2010robust} $5$ subspaces $\{S_i\}^5_{i=1}$ are constructed whose bases $\{\mathbf U_i\}^5_{i=1}$ are computed by $\mathbf U_{i+1} = \mathbf T \mathbf U_i, 1 \leq i \leq 4$, where $\mathbf T$ is a random rotation matrix and $\mathbf U_1$ is a random orthonormal basis of dimension $100 \times 4$. In other words each basis is a random rotation away from the previous basis and the dimension of each subspace is $4$. $20$ data points are sampled from each subspace by $\mathbf X_i = \mathbf U_i \mathbf Q_i$ where $\mathbf Q_i \in \mathbb{R}^{4 \times 20}$ is a random gaussian multi-variate matrix with row-wise variance of $0.001$ and $0.0005$ between neighbouring columns i.e. the following covariance matrix:
\begin{align*}
\mathbf C \in \mathbb{R}^{20 \times 20} = \left[ \begin{matrix}
0.001 & 0.0005\\
0.0005 & 0.001 & 0.0005\\
& 0.0005 & 0.001 & 0.0005\\
& & \ddots & \ddots\\
& &  & & 0.0005\\
& & & 0.0005 & 0.001
\end{matrix} \right].
\end{align*}
This mimics the assumption that consecutive data points within the same subspace are similar to each other. Finally the data is concatenated $\mathbb X = [\mathbf X_1, \mathbf X_2,\dots, \mathbf X_{5}]$.

We repeated the experiment $50$ times with new random bases and coefficient matrices each time. Furthermore we repeated the experiment with various levels of noise to determine robustness. Results are reported in Figure \ref{Plot:SyntheticStats}. OSC (ours) demonstrated significantly better clustering accuracy than SpatSC, LRR and SSC in all metrics. Even in cases of extremely noisy data (low PSNR) OSC still demonstrates excellent accuracy.

\begin{figure*}[!t]
\centering
\subfloat[Mean SCE]{\includegraphics[width=0.25\textwidth]{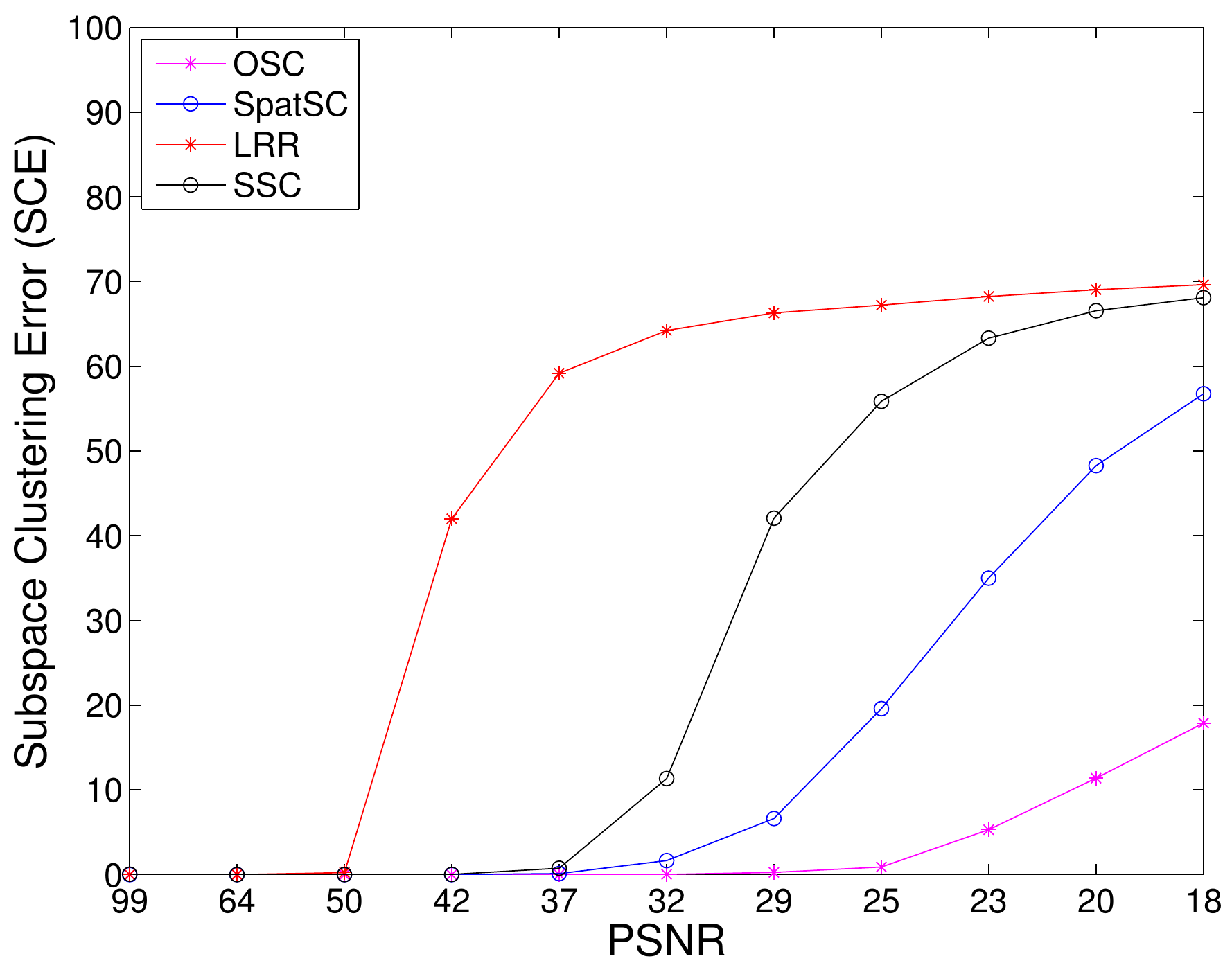}}
\subfloat[Median SCE]{\includegraphics[width=0.25\textwidth]{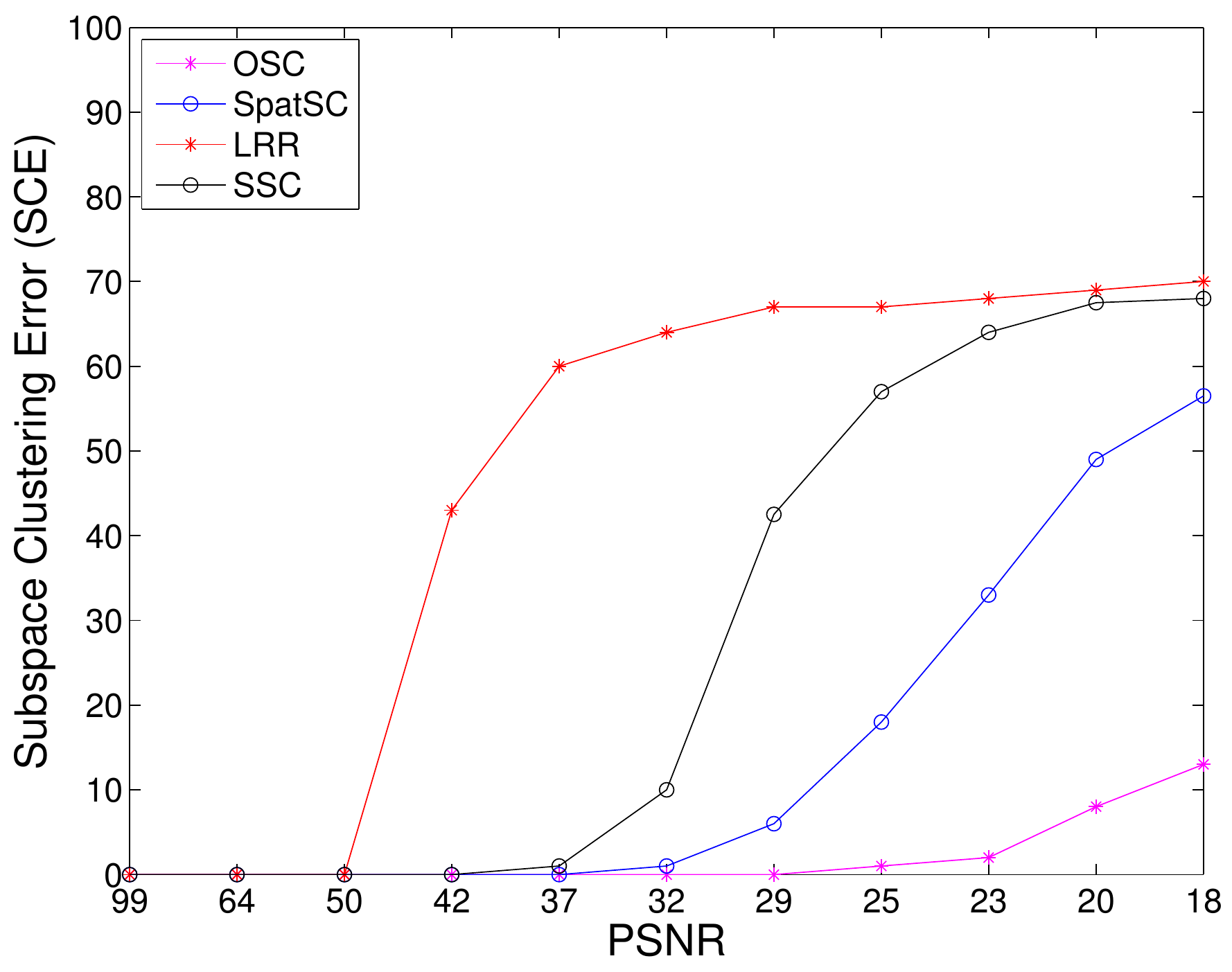}}
\subfloat[Minimum SCE]{\includegraphics[width=0.25\textwidth]{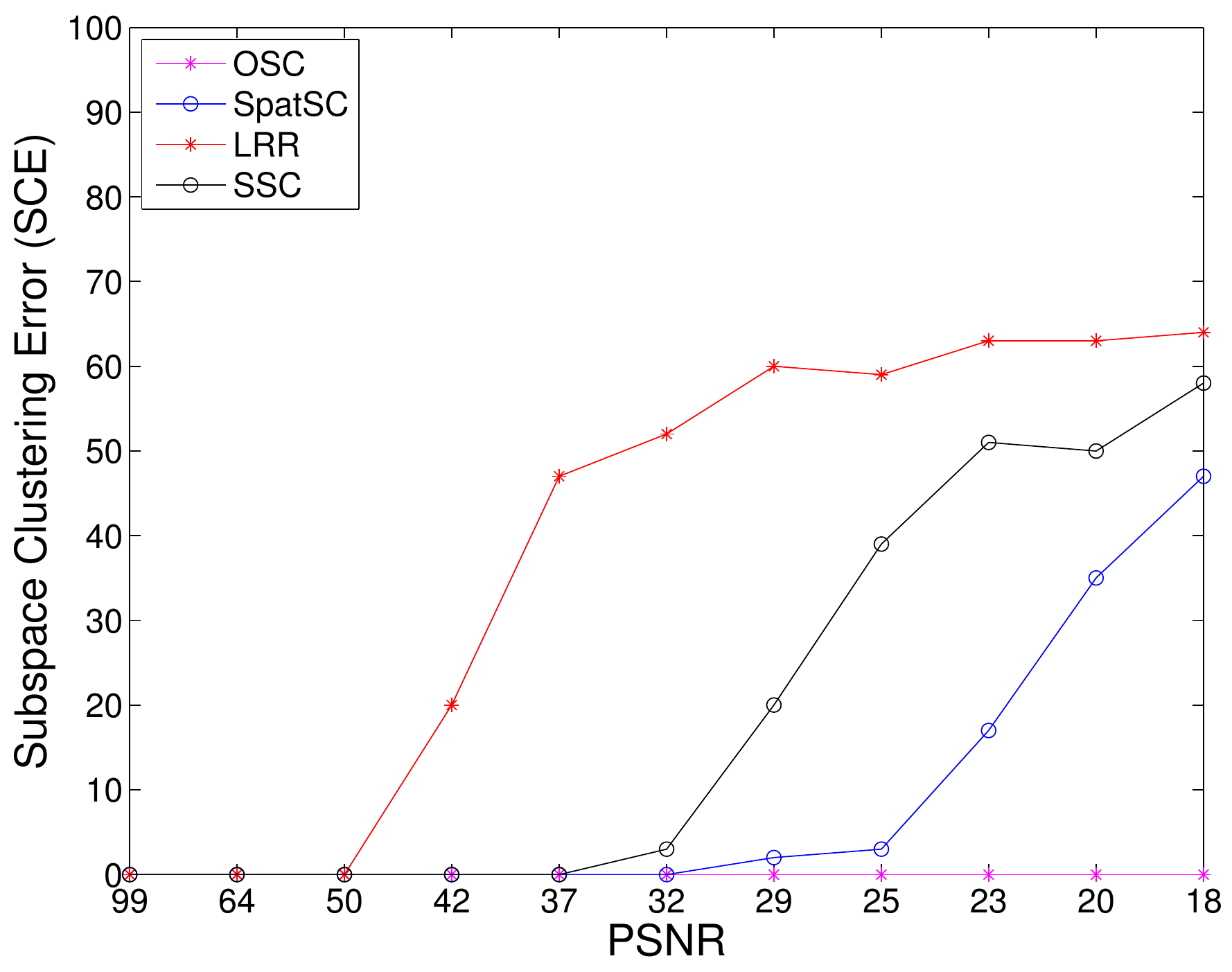}}
\subfloat[Maximum SCE]{\includegraphics[width=0.25\textwidth]{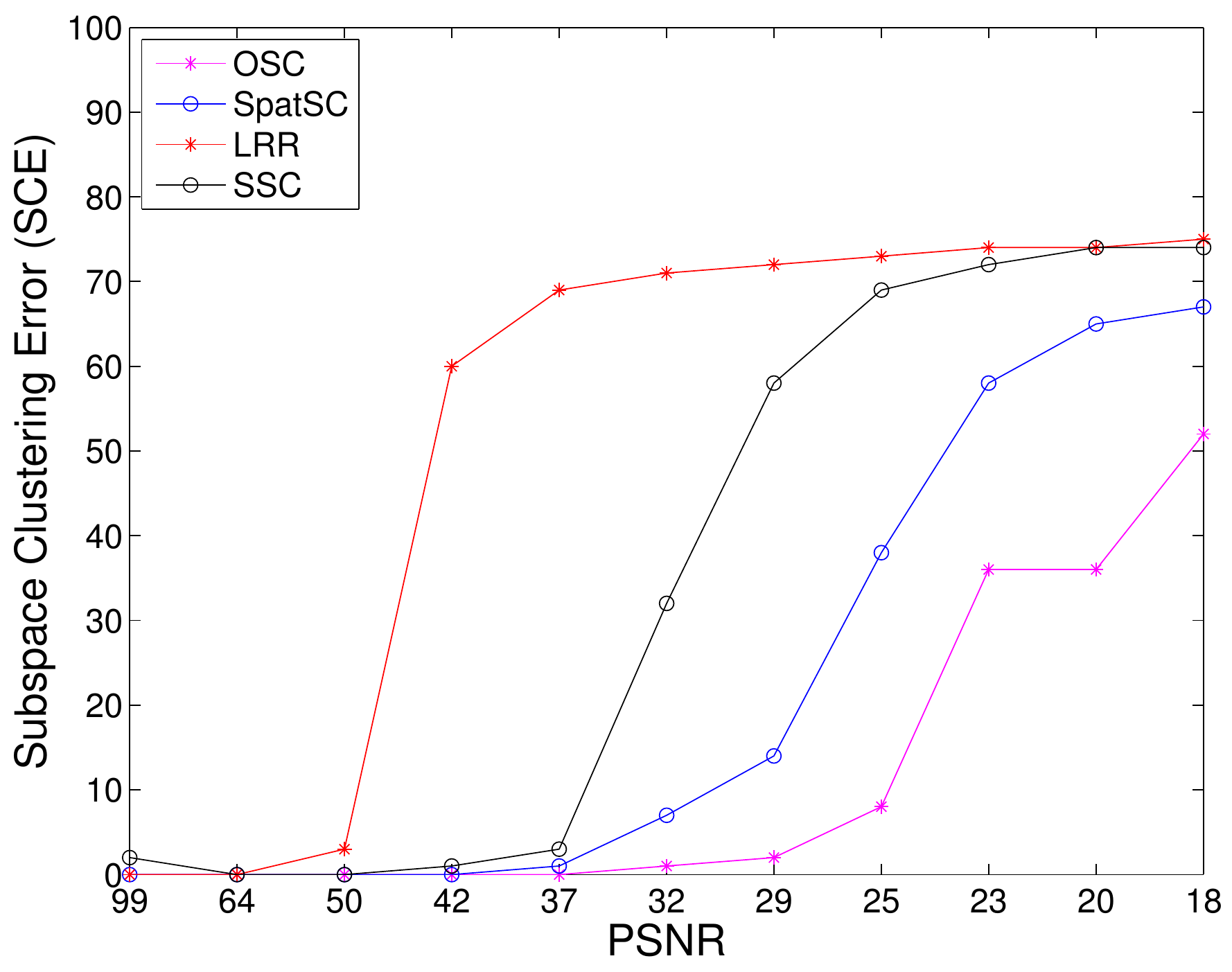}}
\caption{Results for the synthetic data segmentation experiment with various magnitudes of Gaussian noise. OSC (ours) outperforms SpatSC, LRR and SSC in the majority of cases.}
\label{Plot:SyntheticStats}
\end{figure*} 

\section{Running Time}

\label{section:run_time}

\begin{figure*}[!t]
\centering
\includegraphics[width=0.7\textwidth]{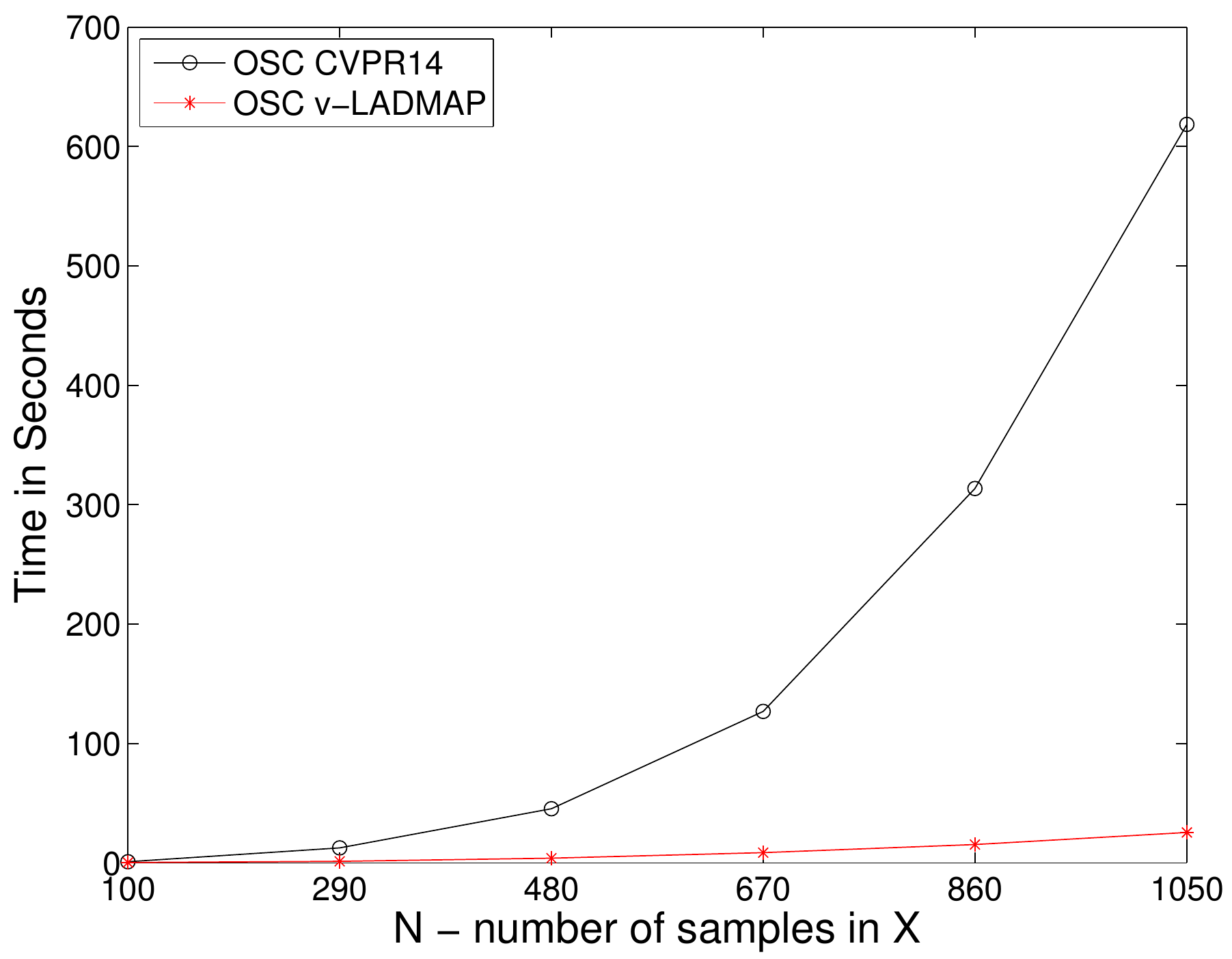}
\caption{Average running time of OSC implementations for increasing amounts of data. The v-LADMAP procedure is a significant improvement over the procedure (CVPR14) suggested in the preliminary version of this paper.}
\label{Plot:running_time}
\end{figure*} 

To demonstrate the improvements in running time we compare the the optimisation scheme suggested in the preliminary version \cite{tierney2014subspace} of this paper and the v-LADMAP procedure suggested in this paper. Synthetic data was generated as in the previous section. However this time the number of samples in each of the $5$ clusters is progressively increased. At each increase in the number of samples we repeated both OSC procedures $10$ times to obtain an average running time. The experiment was performed on a machine with a $3.4$ Ghz i7 CPU and $32$GB RAM. Results are reported in Figure \ref{Plot:running_time}. From the Figure we can clearly see that v-LADMAP provides a significant improvement over the CVPR14 procedure. The v-LADMAP procedure completes the experiments in a matter of seconds where the CVPR14 procedure takes over $10$ minutes. The CVPR14 procedure results exhibits a quadratic run time in the number of samples while v-LADMAP is linear. 

\section{Semi-Synthetic Experiment}

Semi-Synthetic data is assembled from a library of pure infrared hyper spectral mineral data as in \cite{Guo.Y;Gao.J;Li.F-2013}. Similar to the synthetic experiment $5$ subspaces are created with $20$ data samples in each. For each subspace $5$ spectra samples are randomly chosen as the bases such that $\mathbf U_i \in \mathbb{R}^{321 \times 5}$. The $20$ data samples are then sampled from each subspace by $\mathbf X_i = \mathbf U_i \mathbf Q_i$ where $\mathbf Q_i \in \mathbb{R}^{5 \times 20}$ is a random gaussian multi-variate matrix as defined in the previous section. The data is concatenated $\mathbb X = [\mathbf X_1, \mathbf X_2,\dots, \mathbf X_{5}]$.

Similar to the previous experiment we repeated the experiment $50$ times with new random bases and coefficient matrices each time and we repeated the experiment with various levels of noise to determine robustness. Results are reported in Figure \ref{Plot:TirStats}. Again the experiment reveals that OSC outperforms all other methods in a majority of cases.

\begin{figure*}[!t]
\centering
\subfloat[Mean SCE]{\includegraphics[width=0.25\textwidth]{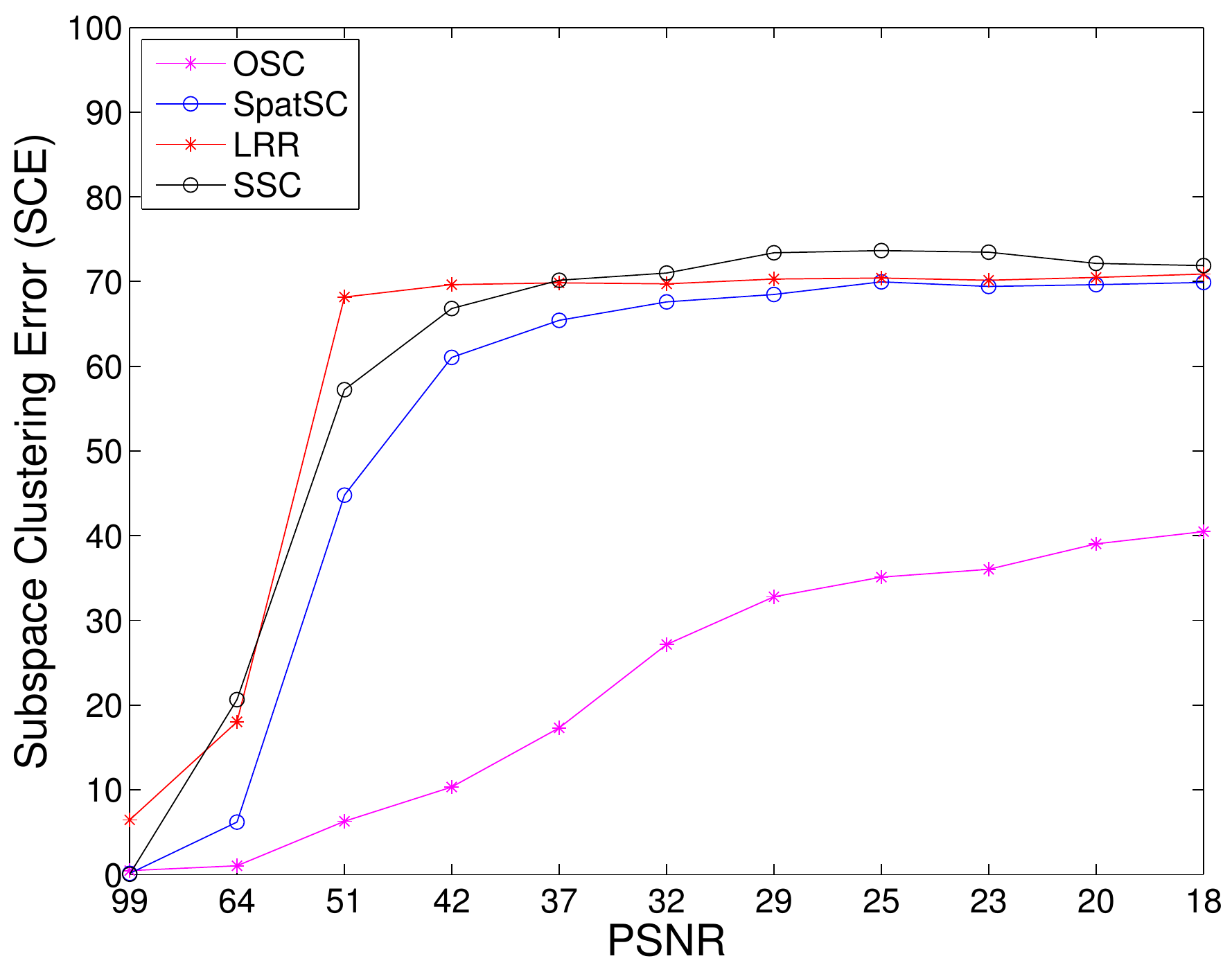}}
\subfloat[Median SCE]{\includegraphics[width=0.25\textwidth]{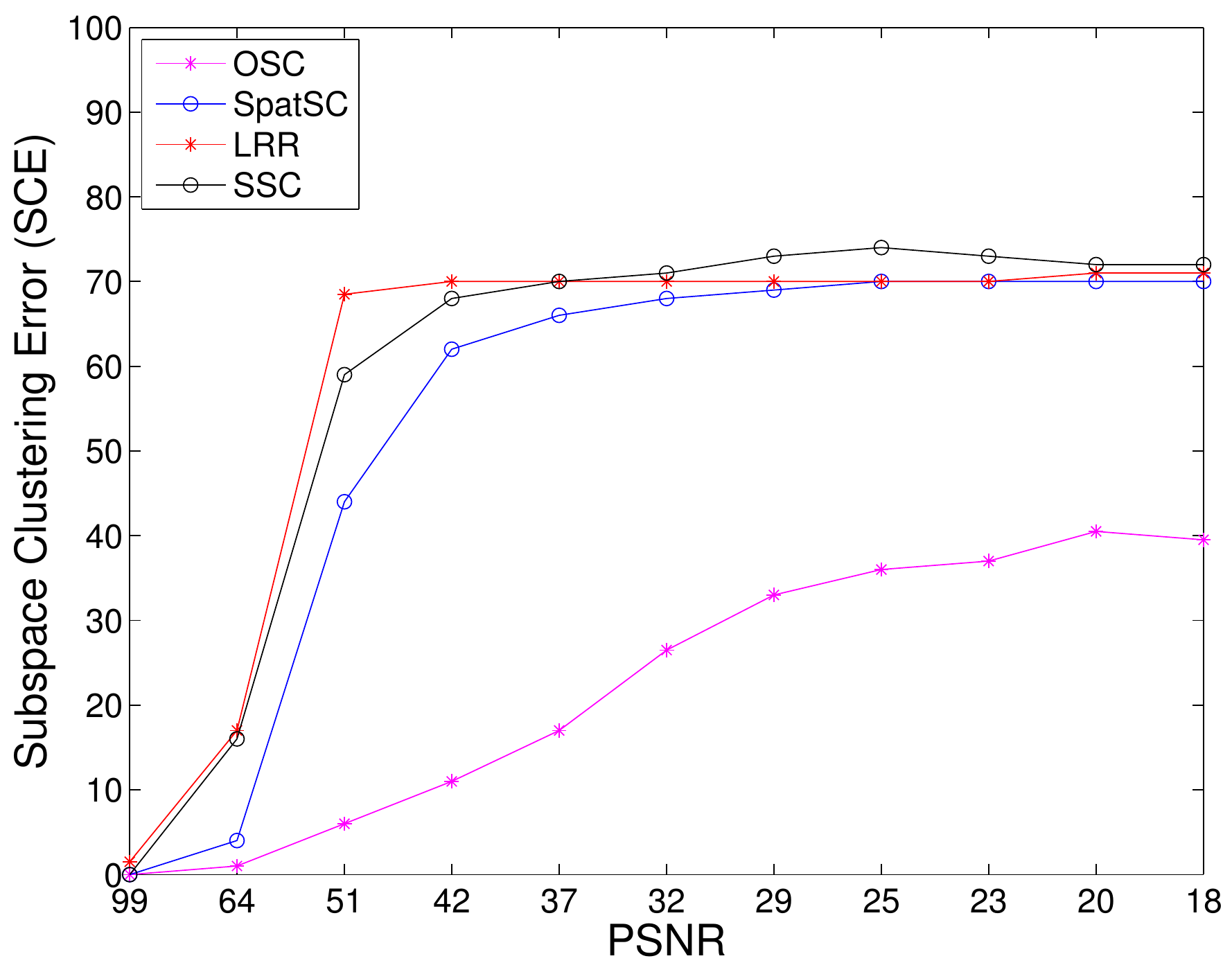}}
\subfloat[Minimum SCE]{\includegraphics[width=0.25\textwidth]{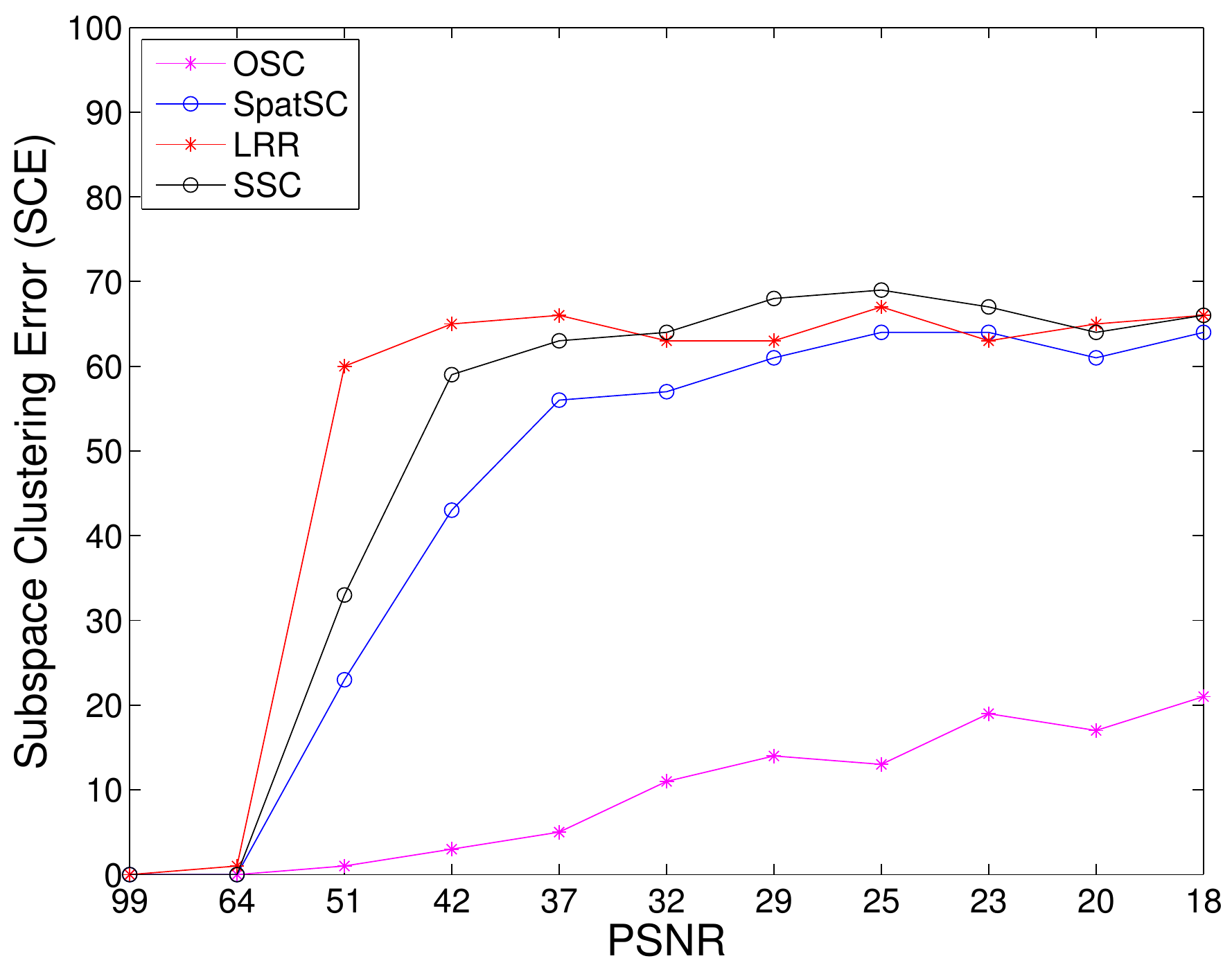}}
\subfloat[Maximum SCE]{\includegraphics[width=0.25\textwidth]{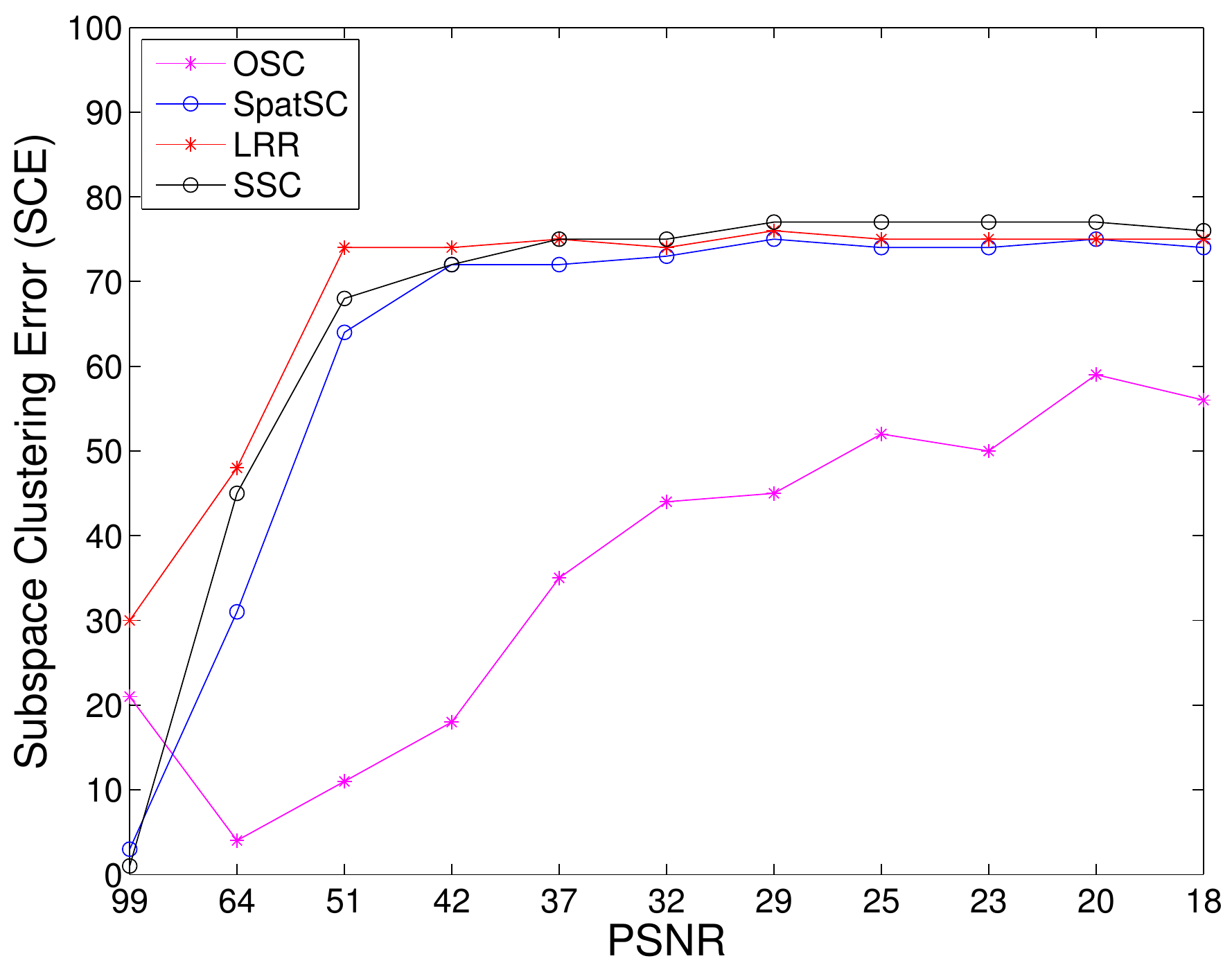}}
\caption{Results for the semi-synthetic data segmentation experiment with various magnitudes of Gaussian noise. OSC (ours) outperforms SpatSC, LRR and SSC in the majority of cases.}
\label{Plot:TirStats}
\end{figure*} 


\section{Video Scene Segmentation}

The aim of this experiment is to segment individual scenes, which correspond to subspaces, from a video sequence. The video sequences are drawn from two short animations freely available from the Internet Archive\footnote{\url{http://archive.org/}}. See Figure \ref{Fig:vid_example} for an example of a sequence to be segmented. The sequences are around 10 seconds in length (approximately 300 frames) containing three scenes each. There are 19 and 24 sequences from videos 1 and 2 respectively. The scenes to be segmented can contain significant translation and morphing of objects within the scene and sometimes camera or perspective changes, which presents considerable difficulty. Scene changes were collected manually to form ground truth data.

The pre-processing of each sequence consisted of converting colour video to grayscale and down sampling to a resolution of $129 \times 96$. Each frame in the sequence was vectorised to $\mathbf x_i \in \mathbb{R}^{12384} $ and concatenated with consecutive frames to form $\mathbf X \in \mathbb{R}^{12384 \times 300}$. Each sequence was further corrupted with various magnitudes of gaussian noise, with the experiment being repeated $50$ times at every magnitude.

Results can be found in Figures \ref{Plot:Vid_1_Stats} and \ref{Plot:Vid_2_Stats}. Generally OSC outperforms other methods and the error rates are consistently low when compared to other methods which greatly increase as the magnitude of the noise is increased.

\begin{figure*}[!t]
\centering
\subfloat[Mean SCE]{\includegraphics[width=0.25\textwidth]{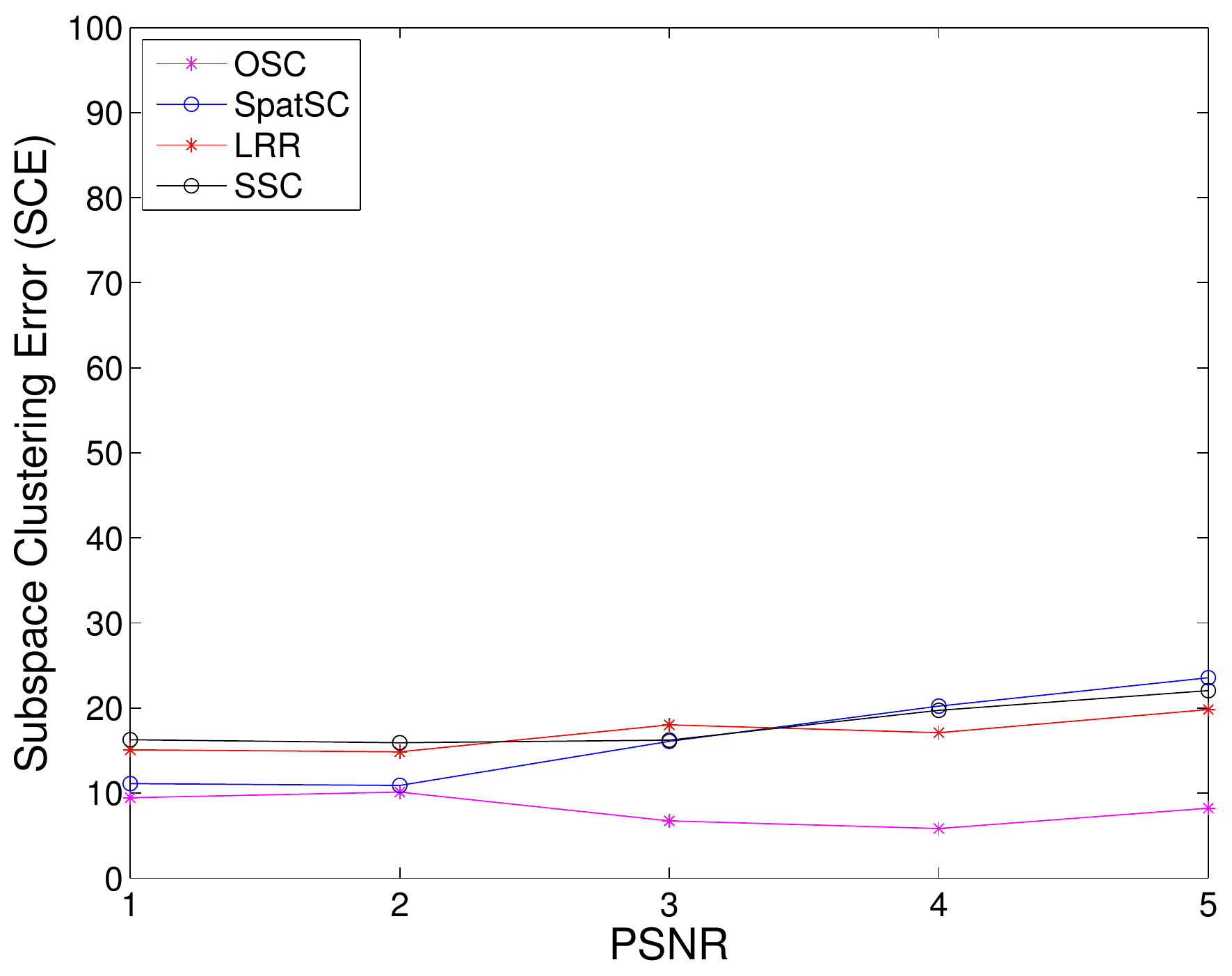}}
\subfloat[Median SCE]{\includegraphics[width=0.25\textwidth]{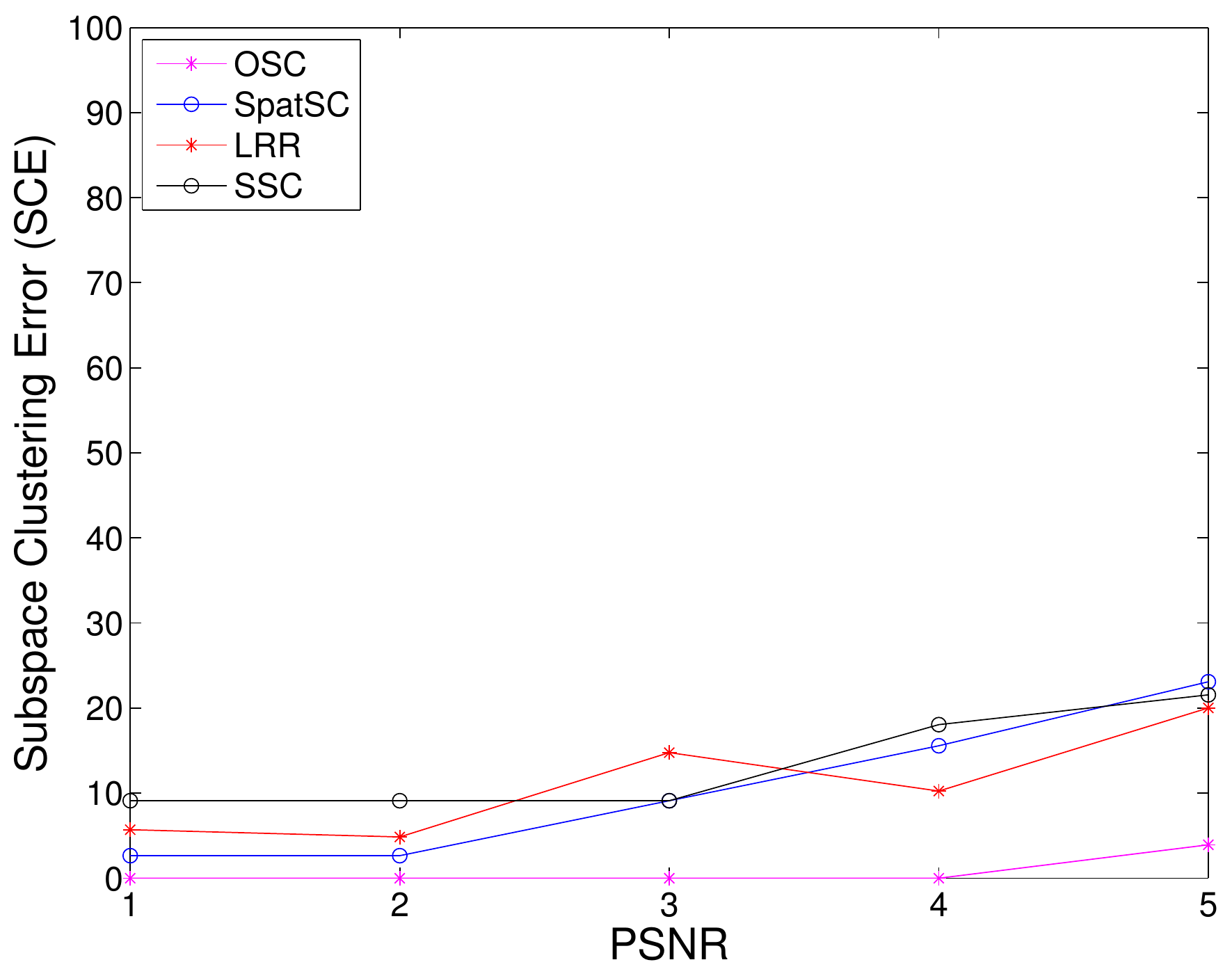}}
\subfloat[Minimum SCE]{\includegraphics[width=0.25\textwidth]{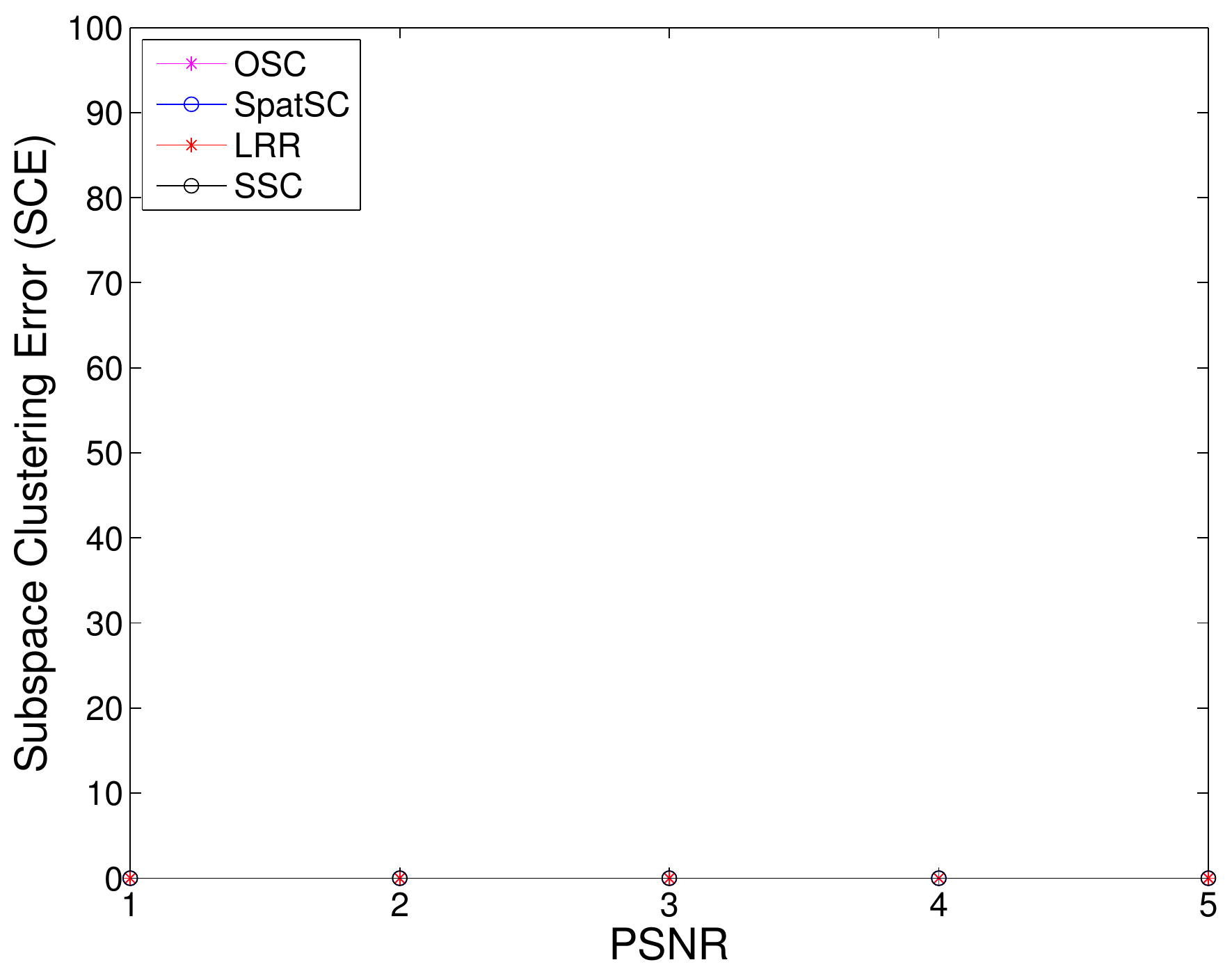}}
\subfloat[Maximum SCE]{\includegraphics[width=0.25\textwidth]{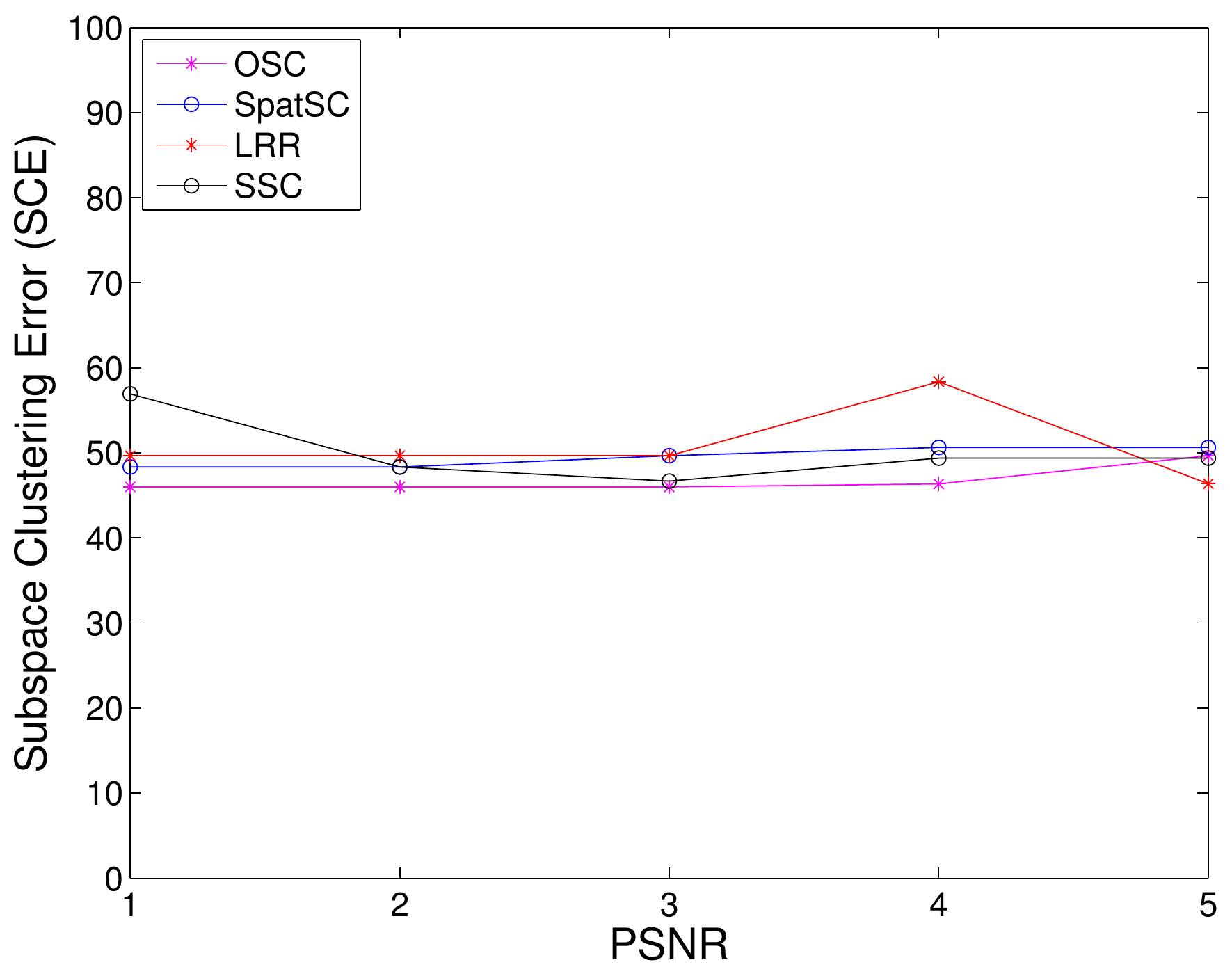}}
\caption{Results for the video scene segmentation experiment (Video 1) with various magnitudes of Gaussian noise. OSC (ours) outperforms SpatSC, LRR and SSC in the majority of cases.}
\label{Plot:Vid_1_Stats}
\end{figure*} 

\begin{figure*}[!t]
\centering
\subfloat[Mean SCE]{\includegraphics[width=0.25\textwidth]{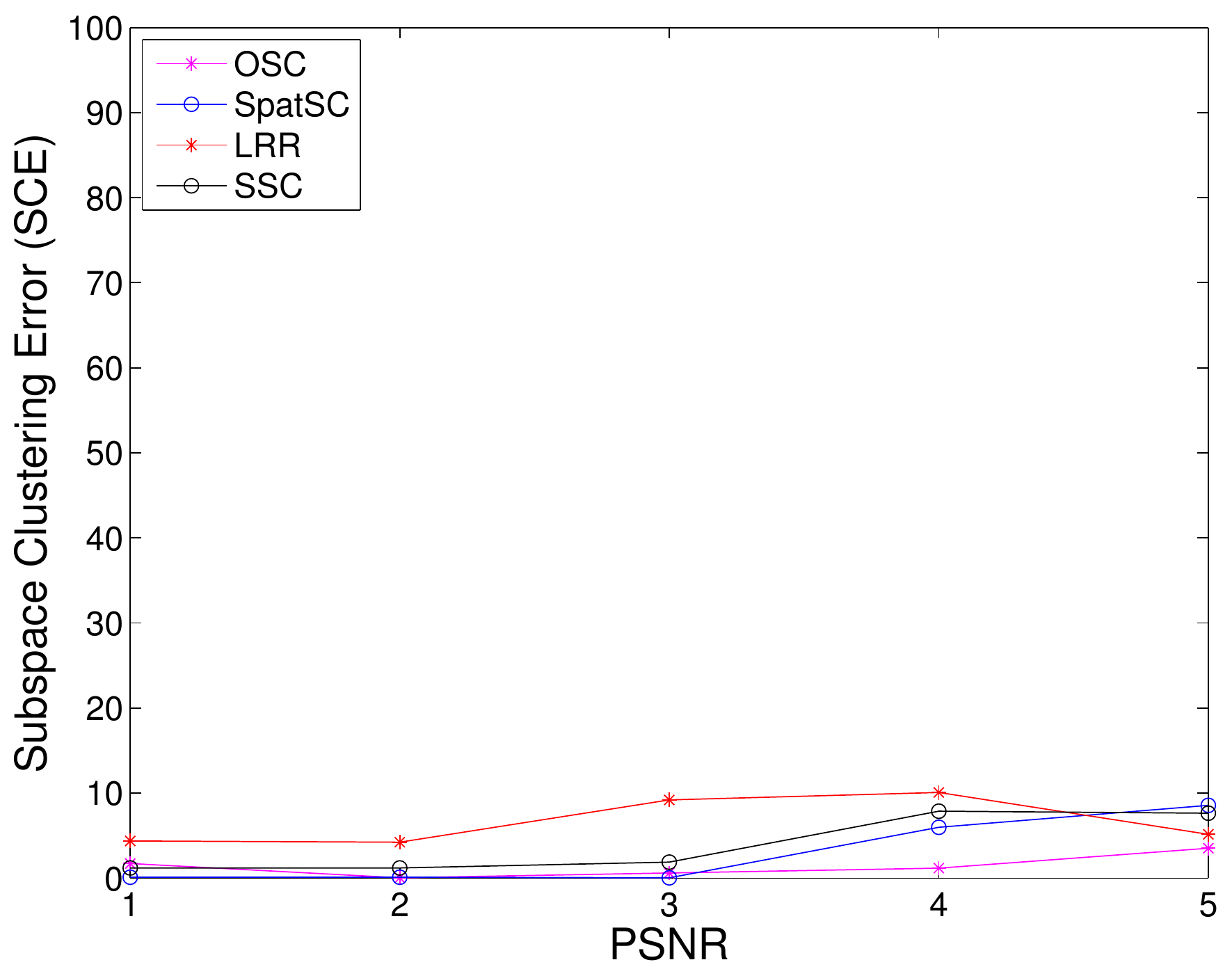}}
\subfloat[Median SCE]{\includegraphics[width=0.25\textwidth]{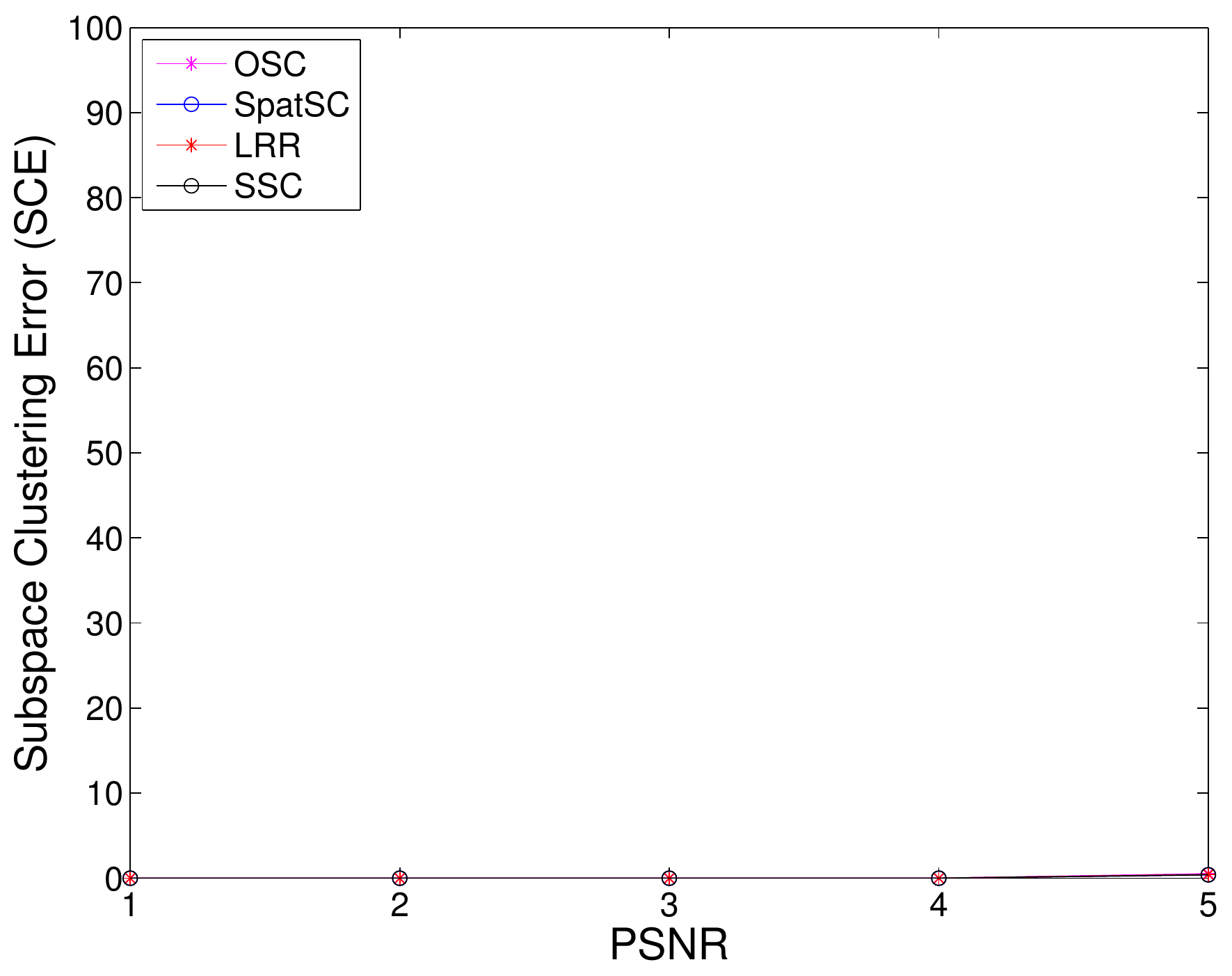}}
\subfloat[Minimum SCE]{\includegraphics[width=0.25\textwidth]{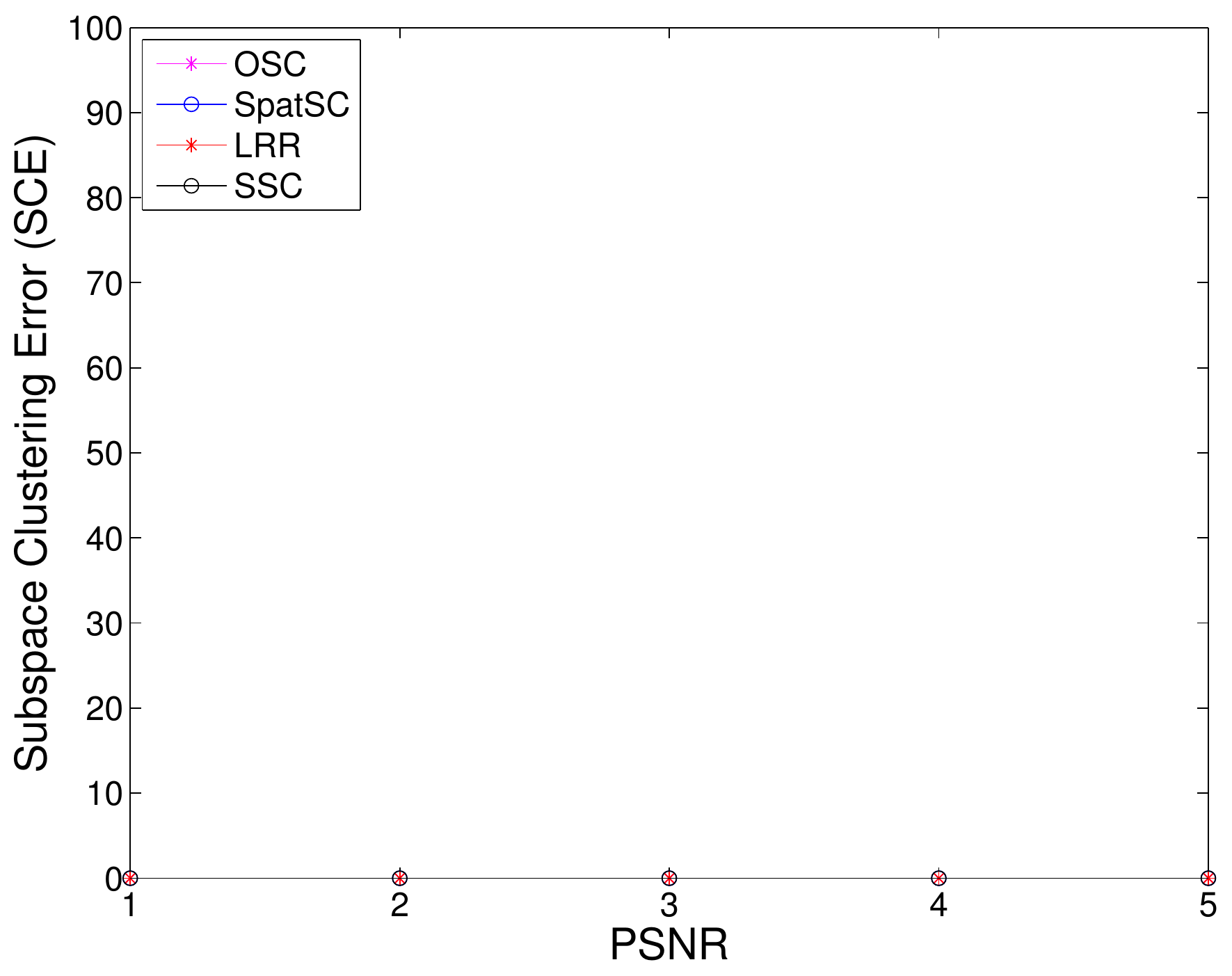}}
\subfloat[Maximum SCE]{\includegraphics[width=0.25\textwidth]{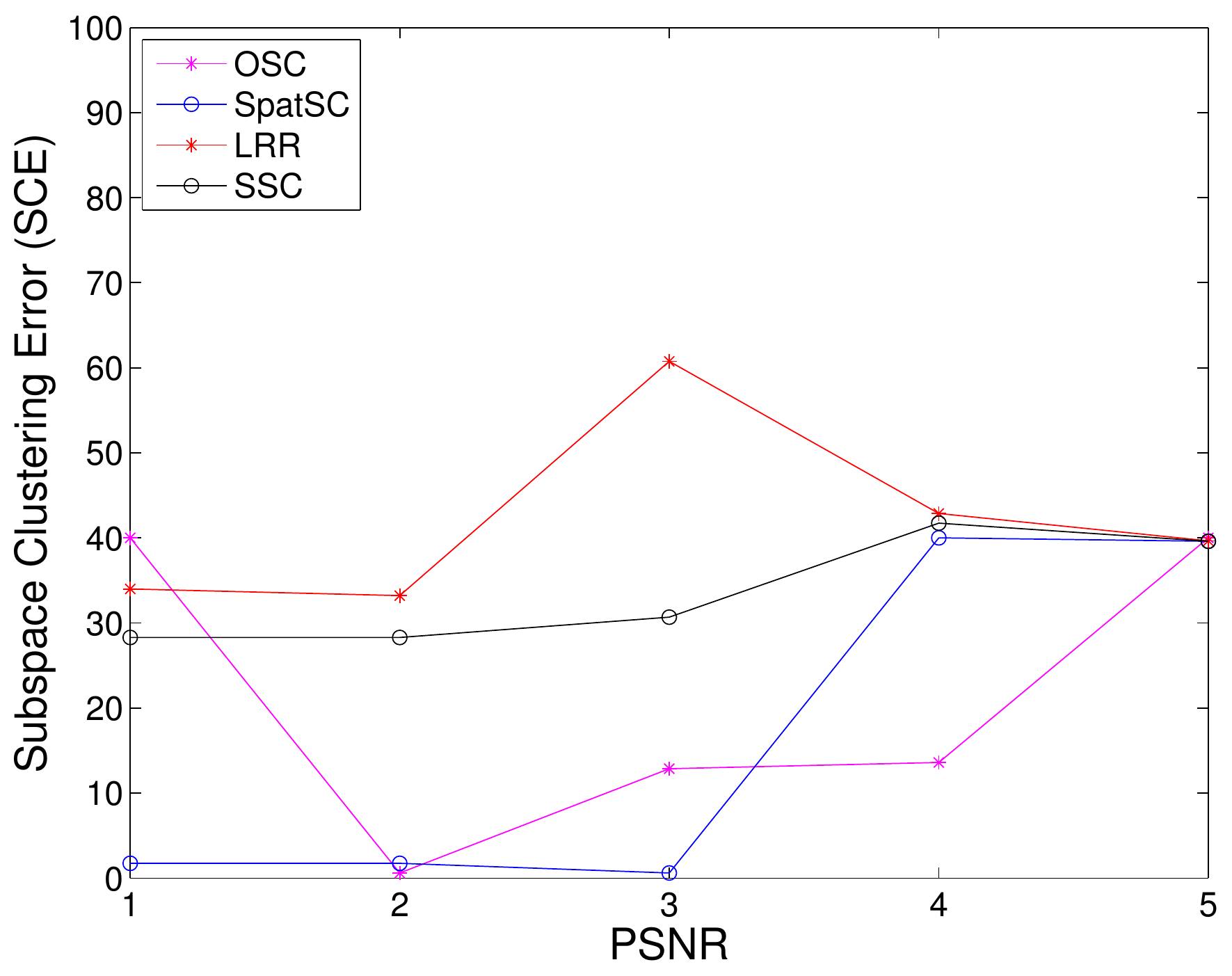}}
\caption{Results for the video scene segmentation experiment (Video 2) with various magnitudes of Gaussian noise. OSC (ours) outperforms SpatSC, LRR and SSC in the majority of cases.}
\label{Plot:Vid_2_Stats}
\end{figure*} 

\section{Human Activity Segmentation} 

The aim of this experiment is to segment activities in a sequence from the HDM05 Motion Capture Database \cite{cg-2007-2}. This dataset consists of a sequence of around $60$ joint/joint angle positions for each frame, which has been captured at $120$ Hz. These positions were determined by optically tracking a number of reflective markers on an actor. From the 2D frames containing the marker positions software is used to locate these points in 3D space. Then these points are transferred into joint and joint angle positions since this format requires less storage space. For an example of the capture environment and captured marker positions and skeletal structure please see Figure \ref{Fig:mocap_example}.

Unfortunately there is no provided frame by frame ground truth for this dataset. Therefore our ground truth has been assembled by watching the replay of the activities and hand labelling the activities using the activity list provided by \cite{cg-2007-2}. For this experiment we chose scene 1-1, which consists of mostly walking activities in various poses but also includes other actions such as double stepping or shuffling sideways and multiple turns. This scene contains $9842$ frames therefore to ease computational burden we divided the scene into four sections of between $2000-3000$ frames each.

We report subspace clustering error for this experiment in Figure \ref{Plot:MocapStats}. Similar to the previous experiments we add increasing amounts of Gaussian noise to determine the robustness of each method and repeat the experiment $50$ times for each magnitude of noise.

\begin{figure*}[!t]
\centering
\subfloat[Mean SCE]{\includegraphics[width=0.25\textwidth]{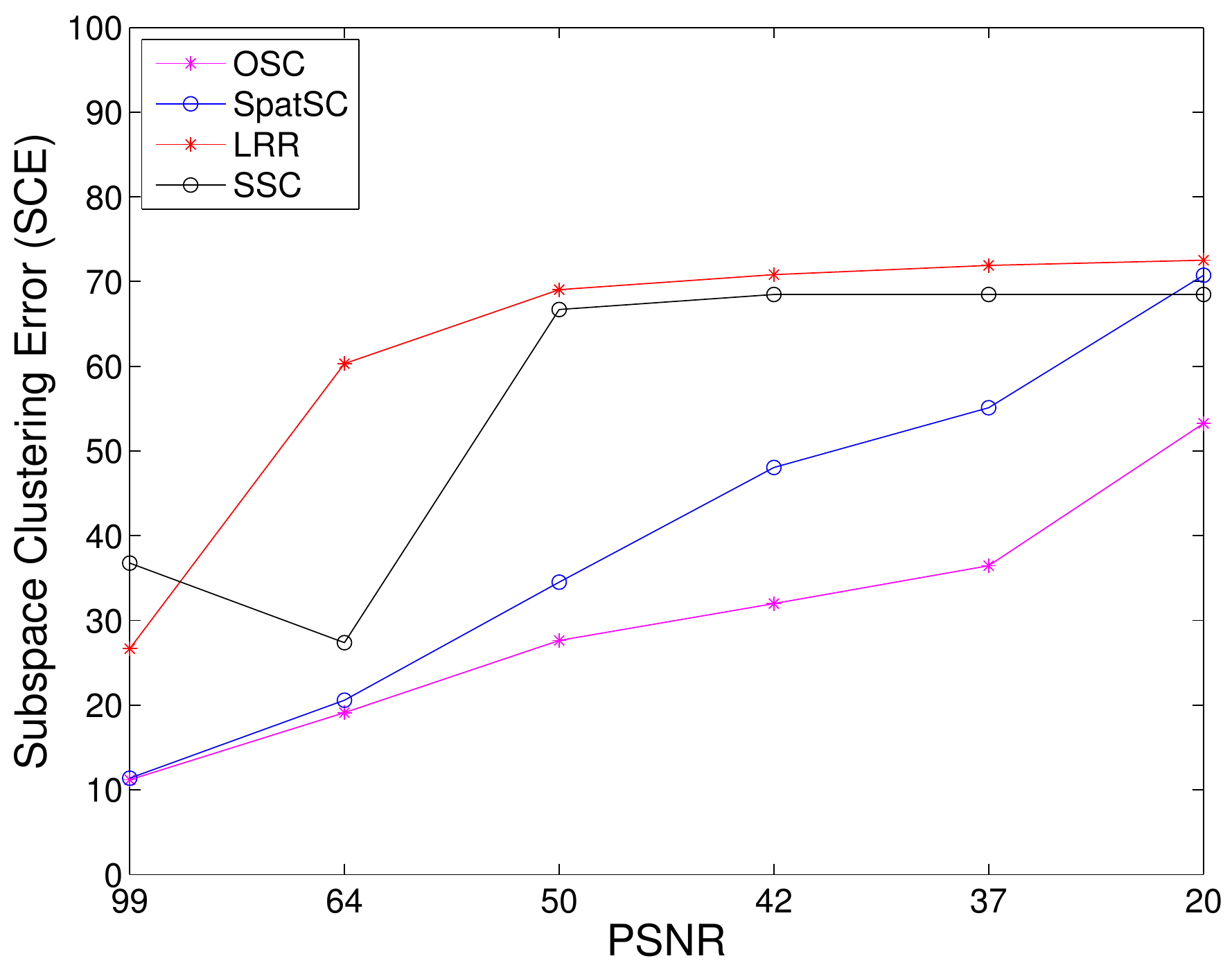}}
\subfloat[Median SCE]{\includegraphics[width=0.25\textwidth]{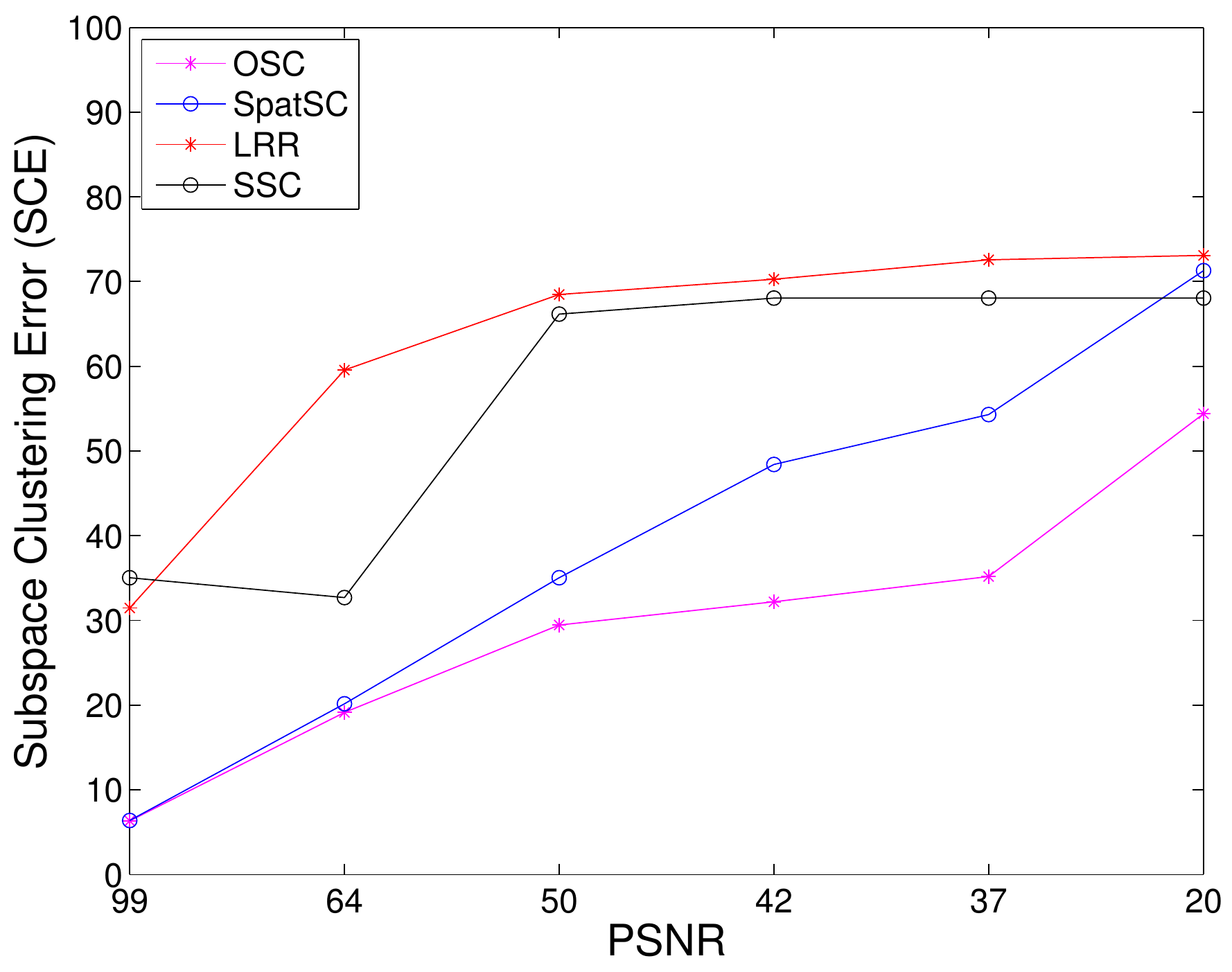}}
\subfloat[Minimum SCE]{\includegraphics[width=0.25\textwidth]{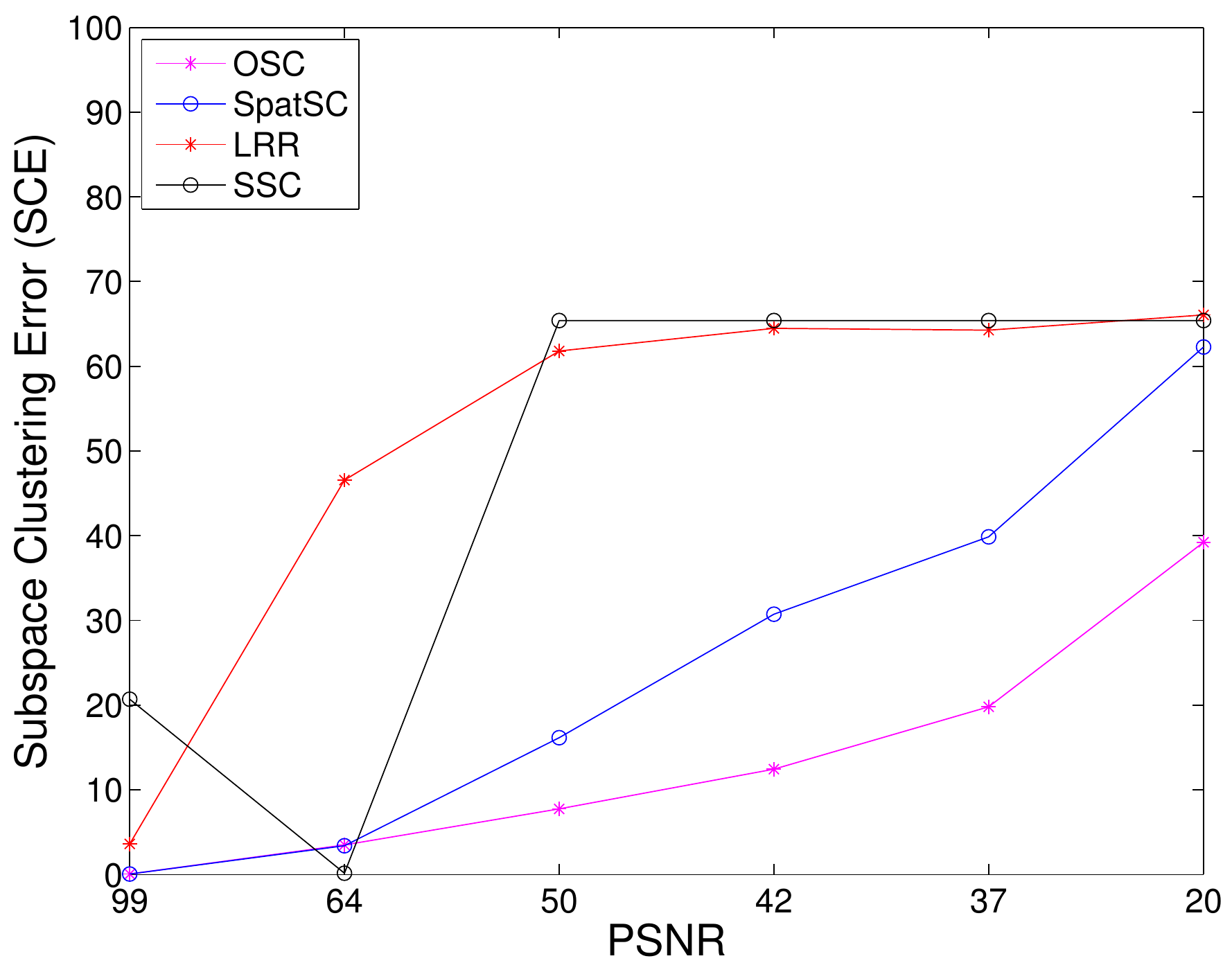}}
\subfloat[Maximum SCE]{\includegraphics[width=0.25\textwidth]{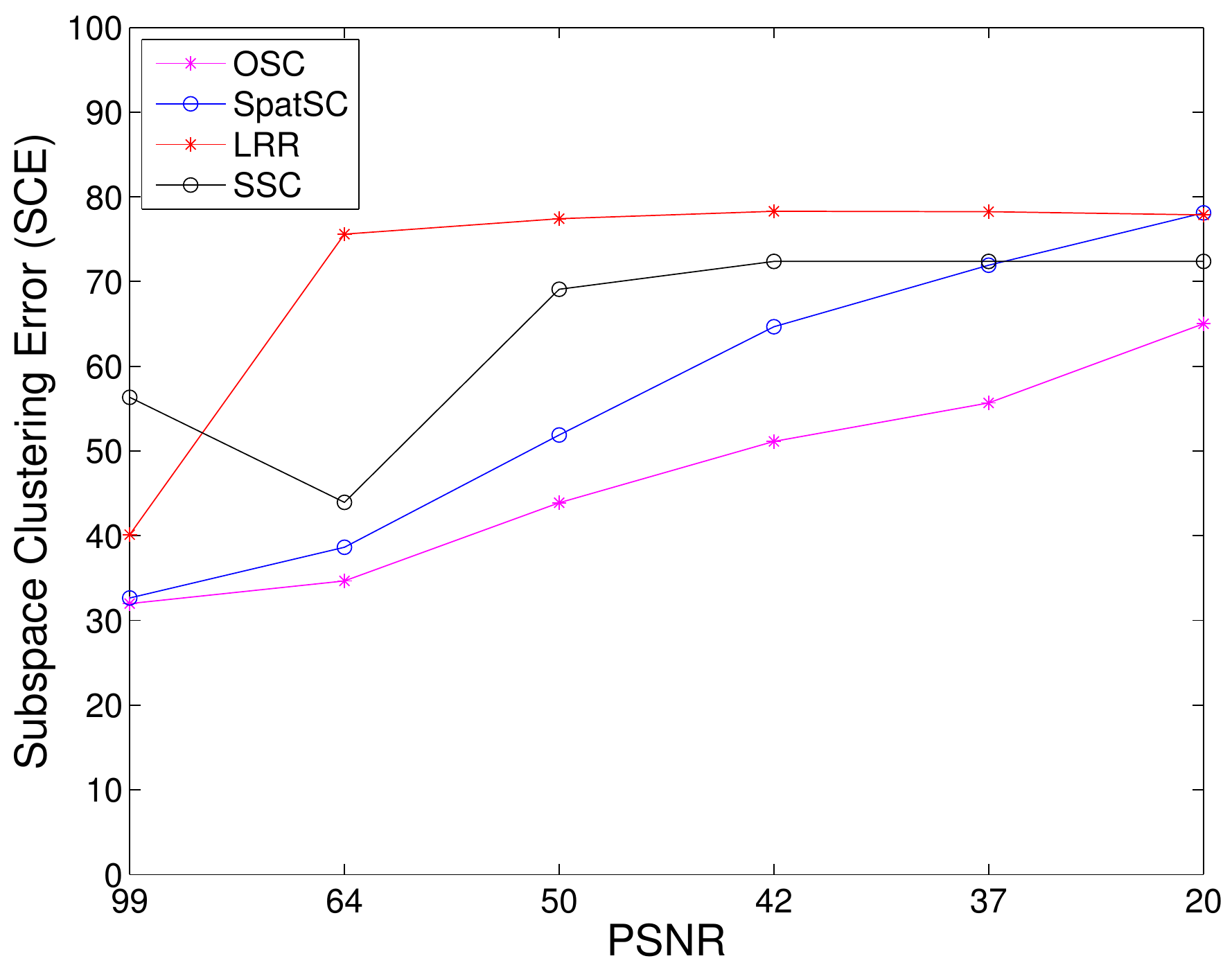}}
\caption{Results for the human activity segmentation experiment with various magnitudes of Gaussian noise. OSC (ours) outperforms SpatSC, LRR and SSC in the majority of cases.}
\label{Plot:MocapStats}
\end{figure*}

%
%
%
%
%

\section{Conclusion}

We have presented and evaluated a novel subspace clustering method, Ordered Subspace Clustering, that exploits the ordered nature of data. OSC produces more interpretable and accurate affinity matrices than other methods. We showed that this method generally outperforms existing state of the art methods in quantitative accuracy, particularly when the data is heavily corrupted with noise. Furthermore we have provided new optimisation schemes for OSC, which have guaranteed convergence, lower computational requirements and  can be computed in parallel.

\section*{Acknowledgements}

The research project is supported by the Australian Research Council (ARC) through the grant DP130100364.

\newpage

\section*{References}
\bibliography{references}
\bibliographystyle{elsarticle-num}

\end{document}